\definecolor{DarkGreen}{rgb}{0.1,0.5,0.1}
\definecolor{DarkRed}{rgb}{0.5,0.1,0.1}
\definecolor{DarkBlue}{rgb}{0.1,0.1,0.5}
\definecolor{DarkYellow}{rgb}{.79,.79,0}
\definecolor{unitednationsblue}{rgb}{0.36, 0.57, 0.9}
\definecolor{blue_ppt}{rgb}{0,0.6,0.93}
\definecolor{darkblue_ppt}{rgb}{0.05,0.4,0.8}
\definecolor{orange_ppt}{rgb}{0.82,0.5,0}
\definecolor{yc}{RGB}{125,0,0} 
\definecolor{dacong}{RGB}{10,103,68}
\newcolumntype{?}{!{\vrule width 1pt}}
\definecolor{bluegray}{RGB}{77,117,154}
\newcommand{\trianglecomment}[1]{\hfill\textcolor{bluegray}{$\triangleright$ #1}}
\theoremstyle{plain}
\newtheorem{lm}{Lemma} 
\newtheorem{definition}{Definition}
\newtheorem{thm}{Theorem}
\newtheorem{prop}{Proposition}
\newtheorem{asmp}{Assumption}
\newtheorem{rmk}{Remark}
\def\1{\bm{1}}
\DeclareMathAlphabet{\mathsfit}{\encodingdefault}{\sfdefault}{m}{sl}
\SetMathAlphabet{\mathsfit}{bold}{\encodingdefault}{\sfdefault}{bx}{n}
\def\gC{{\mathcal{C}}}
\def\gD{{\mathcal{D}}}
\def\gN{{\mathcal{N}}}
\def\gO{{\mathcal{O}}}
\def\gW{{\mathcal{W}}}
\def\gX{{\mathcal{X}}}
\def\gY{{\mathcal{Y}}}
\def\gZ{{\mathcal{Z}}}
\newcommand{\E}{\mathbb{E}}
\newcommand{\R}{\mathbb{R}}
\DeclareMathOperator*{\argmax}{arg\,max}
\DeclareMathOperator*{\argmin}{arg\,min}
\definecolor{mydarkblue}{rgb}{0,0.08,0.45}
\definecolor{mygreen}{rgb}{0.032, 0.6392, 0.2039}
\definecolor{mypurple}{HTML}{B266FF}
\def\NN{{\mathbb N}}
\def\EE{{\mathbb E}}
\def\PP{{\mathbb P}}
\def\NN{{\mathbb N}}
\def\EE{{\mathbb E}}
\newcommand{\norm}[1]{\left\| #1 \right\|}
\newcommand{\innerprod}[1]{\left\langle #1 \right\rangle}
\newcommand{\sm}{\mathsf{softmax}}
\newcommand{\ex}[2]{\mathbb{E}_{#1}\left[#2\right]}
\newcommand{\piref}{\pi_{\textnormal{ref}}}
\newcommand{\dist}[2]{D_{\textnormal{KL}}\left(#1\|#2\right)}
\newcommand{\concentr}{C_\pi}
\newcommand{\KL}[2]{\textnormal{KL}\left(#1\|#2\right)}
\newcommand{\supp}{\mathrm{supp}}
\renewcommand{\triangle}{\Delta}
\definecolor{blue_ppt}{rgb}{0,0.47,0.97}
\newcommand{\bond}{\texttt{BOND}\xspace}
\newcommand{\WIND}{\texttt{WIND}\xspace}
\newcommand{\limitibon}{\overline{\pi}^\star_{\beta}}
\title{Faster WIND: \\ Accelerating Iterative Best-of-$N$ Distillation  for LLM Alignment}
\author{Tong Yang\thanks{Carnegie Mellon University; emails: \texttt{\{tongyang,zixinw,shicongc,yuejiec\}@andrew.cmu.edu}. }  
\and Jincheng Mei\thanks{Google DeepMind; emails: \texttt{\{jcmei,hadai,schuurmans,bodai\}@google.com}.}  
\and Hanjun Dai\footnotemark[2] 
\and Zixin Wen\footnotemark[1] 
\and Shicong Cen\footnotemark[1] 
\and Dale Schuurmans\footnotemark[2]
\and Yuejie Chi\footnotemark[1] 
\and Bo Dai\footnotemark[2]  
}
\date{    \footnotemark[1]~Carnegie Mellon University\\[0.2ex]%
    \footnotemark[2]~Google DeepMind \\[2ex]%
    October 2024; revised February 2025}
\begin{document}
 
\maketitle

\begin{abstract}
    Recent advances in aligning large language models with human preferences have corroborated the growing importance of best-of-$N$ distillation (\bond). However, the iterative \bond algorithm is prohibitively expensive in practice due to the sample and computation inefficiency. This paper addresses the problem by revealing a unified game-theoretic connection between iterative \bond and self-play alignment, which unifies seemingly disparate algorithmic paradigms. Based on the connection, we establish a novel framework, \textbf{WIN} rate \textbf{D}ominance~(\WIND),
    with a series of efficient algorithms for regularized win rate dominance optimization that approximates iterative \bond in the parameter space. We provide provable sample efficiency guarantee for one of the \WIND variants with the squared loss objective. The experimental results confirm that our algorithm not only accelerates the computation, but also achieves superior sample efficiency compared to existing methods.
\end{abstract}

\noindent\textbf{Keywords:} Reinforcement learning from human feedback (RLHF), preference optimization, matrix game, sample efficiency

\tableofcontents

\section{Introduction}

Fine-tuning large language models (LLMs) to align with human preferences has become a critical challenge in artificial intelligence to ensure the safety of their deployment. Reinforcement Learning from Human Feedback (RLHF) has emerged as a dominant approach, significantly improving LLM performance as demonstrated by InstructGPT~\citep{ouyang2022training} and subsequent works. RLHF combines reward modeling to quantify human preferences and RL fine-tuning to adjust the LLM's output distribution, enhancing desired responses while suppressing unfavorable ones. While RLHF has shown promising results, it comes with significant extra post-training cost, and the aligned LLM may exhibit performance degeneration due to the alignment tax \citep{askell2021general, achiam2023gpt}.

 Alternatively, best-of-$N$ (BoN) sampling has emerged as a simple and surprisingly effective technique to obtain high-quality outputs from an LLM \citep{stiennon2020learning}. In BoN sampling, multiple samples are drawn from an LLM, ranked according to a specific attribute, and the best one is selected. This simple approach can improve model outputs without the need for extensive fine-tuning, offering a potentially more efficient path to alignment. Building upon the success of BoN sampling, a few works explored the iterative variants of this approach~\citep{dong2023raft,sessa2024bondaligningllmsbestofn}. Iterative BoN applies the BoN sampling and selection process repeatedly, potentially leading to even better alignments with human preferences.

 However, BoN incurs significant computational overhead due to making multiple inference calls to generate one output, especially when the number of calls is high.
 To mitigate the high inference cost of (iterative) BoN, \citet{sessa2024bondaligningllmsbestofn} proposed a distillation algorithm, best-of-$N$ distillation (\bond), to train a new model emulating the output of iterative BoN. However, this approach also has a high training cost, due to the need of collecting multiple samples in each round of distillation, leading to a major bottleneck for wider adoption.  
Given the growing importance and significance of the iterative BoN approach, it raises new questions about its theoretical properties, practical implementation, and relationship to established methods like RLHF. 
In this paper, we delve into the theoretical foundations and practical applications of iterative BoN sampling for LLM alignment. We address the following question:
\begin{center}
  \emph{What are the limiting points of iterative BoN, and can we design faster algorithms to find them?}
  \end{center} 

\subsection{Contributions}

We provide comprehensive answers to these questions through the following key contributions.

\begin{itemize}
  \item We introduce a general algorithmic framework for iterative BoN distillation, possibly with a slow-moving anchor, and uncover its limiting point corresponds to the Nash equilibrium of a (regularized) two-player min-max game optimizing the logarithm of the expected win rate. This offers a fresh game-theoretic interpretation that is unavailable before.
\item We show that the \textbf{WIN} rate \textbf{D}ominance (\WIND) policy, which has a higher chance of winning against any other policy, solves the
minmax game of win rate introduced in RLHF \citep{swamy2024minimaximalist,munos2023nash}, and approximates the iterative BoN's limiting point.

\item We propose a novel algorithm framework, \WIND, to find the win rate dominance policy with flexible loss configurations, and demonstrate it exhibits improved sample and computational efficiency compared to prior work, while maintaining provable convergence guarantees.

  \item We conduct extensive experiments to evaluate the performance of \WIND, demonstrating competitive or better performance against state-of-the-art alignment methods such as J-\bond across various benchmarks, highlighting its efficiency especially in the sampling process and training cost.  
\end{itemize}

\subsection{Related work}

\paragraph{RLHF and LLM alignment.} Reinforcement Learning from human feedback (RLHF) is an effective approach to train AI models to produce outputs that align with human value and preference \citep{christiano2017deep, stiennon2020learning,nakano2021webgpt}. Recently, RLHF has become the most effective approach to align language models~\citep{ouyang2022training,bai2022training}. The famous InstructGPT~\citep{ouyang2022training} approach eventually led to the groundbreaking ChatGPT and GPT-4~\citep{achiam2023gpt}. A variety of RLHF methods have been proposed, including  direct preference optimization~\citep{rafailov2024direct} and many other variants~\citep{zhao2023slic,yuan2023rrhf,azar2024general,meng2024simpo, xu2024contrastive,ethayarajh2024kto, tang2024generalized}, to name a few, which directly learns from the preference data without RL fine-tuning. Furthermore, value-incentive preference optimization~\citep{cen2024value} has been proposed to implement the  optimistic (resp. pessimistic) principle for online (resp. offline) RLHF in a practical way with theoretical guarantees.

\paragraph{RLHF via self-play.} Another line of RLHF methods investigate self-play optimization for unregularized and regularized two-player win rate games, respectively~\citep{swamy2024minimaximalist,munos2023nash}. \citet{wu2024self} introduced a scalable self-play algorithm for win rate games, enabling efficient fine-tuning of LLMs, see also \citet{rosset2024direct,zhang2024iterative,gao2024rebel,huang2024correcting} among others.

\paragraph{Best-of-$N$ and \bond.} BoN has empirically shown impressive reward-KL trade-off \citep{nakano2021webgpt,gao2023scaling}, which has been theoretically investigated by \citet{gui2024bonbon} from the perspective of win rate maximization. \citet{beirami2024theoretical} analyzed the KL divergence between the BoN policy and the base policy, and \citet{yang2024asymptotics} studied the asymptotic properties of the BoN policy. The recent work \cite{gui2024bonbon} also proposed a method to use both best-of-$N$ and worst-of-$N$ to train language models. \cite{sessa2024bondaligningllmsbestofn} introduced \bond and J-\bond to train language models to learn BoN policies. \cite{amini2024variational} proposed vBoN which is equivalent to \bond. 
However, there is no existing work for characterizing the properties of iterative BoN yet.

\paragraph{Notation.}
We let $[n]$ denote the index set $\{1, \dots, n\}$. Let $I_n$ denote the $n\times n$ identity matrix, and inner product in Euclidean space \(\mathbb{R}^n\) by $\langle\cdot,\cdot\rangle$. We let $\mathrm{supp}(\rho)$ denote the support set of the distribution $\rho$, and $\mathrm{relint}(\gC)$ represents the relative interior of set $\gC$.
We defer all the proofs to the appendix.

\section{Preliminaries}\label{sec:preliminaries}

\subsection{RLHF: reward  versus win rate }
We consider the language model $\pi_\theta(\cdot)$ as a policy, where $\theta\in\Theta$ denotes its parameters, and $\Theta$ the compact parameter space. Given a prompt $x \in \mathcal{X}$, the policy generates an answer $y \in \mathcal{Y}$ according to the conditional distribution $\pi_\theta(\cdot | x)$. For notational simplicity, we drop the subscript $\theta$ when it is clear from the context. We let $\Delta_\gY$ denote the simplex over $\mathcal{Y}$, and $\Delta_\gY^\gX$ denote the space of policies as follows:
$$\Delta_\gY^\gX\coloneqq\big\{\pi = [ \pi (\cdot |x) ]_{x\in\mathcal{X}} \mid \pi (\cdot |x) \in\Delta_\gY,\forall x\in\gX\big\}.$$

In practice, RLHF optimizes the policy model against the reward model while staying close to a reference model \(\piref\). There are two metrics being considered: reward and win rate.  
\paragraph{Reward maximization.} Suppose there is a 
reward model $r(x,y): \mathcal{X}\times \mathcal{Y} \mapsto \R$, which produces a scalar reward given a prompt \(x\) and a response \(y\). RLHF aims to maximize the KL-regularized value function,  given a reference model \(\piref\):
\begin{align}\label{eq:obj_rlhf}
    V_{\text{rm}}(\pi) = \underset{x\sim\rho,y\sim\pi(\cdot|x)}{\E}[r(x,y)] - \beta\dist{\pi}{\piref},
\end{align}
where    
$$\dist{\pi_1}{\pi_2}=\ex{x\sim\rho}{\KL{\pi_1(\cdot|x)}{\pi_2(\cdot|x)}}$$ 
is the KL divergence between policies $\pi_1$ and $\pi_2$, with $\rho$ being the distribution of prompts. Here, $\beta\geq 0$ is a hyperparameter that balances the reward and the KL divergence. Without loss of generality, we assume $\supp(\rho)$ is $\gX$ throughout the paper.

\paragraph{Win rate maximization.} Another scheme of RLHF aims to maximize  the KL-regularized win rate against the reference model~\citep{gui2024bonbon}. Given a
reward model $r$, a preference model $P_x:\gY\times\gY \rightarrow \{0, 1/2, 1\}$ can be defined as:
\begin{equation}\label{eq:P}
    P_x(y,y')\coloneqq\begin{cases}
        1, \,\,\text{ if $r(x,y)>r(x,y')$,}\\
        1/2,\,\,\text{ if $r(x,y)=r(x,y')$,}\\
        0,\,\,\text{ if $r(x,y)<r(x,y')$.}
    \end{cases}
\end{equation}
Given a policy pair $\pi,\pi'$, the \textit{win rate} of $\pi$ over $\pi'$ is given as \citep{swamy2024minimaximalist}
\begin{align}\label{eq:win_rate}
    P(\pi\succ\pi')&\coloneqq\underset{x\sim\rho,y\sim\pi(\cdot|x),\atop y'\sim\pi'(\cdot|x)}{\EE} P_x(y,y')\notag\\
    &=\E_{x\sim\rho}\pi^\top(\cdot|x)P_x(\cdot,\cdot)\pi'(\cdot|x).
\end{align}
The KL-regularized win rate maximization objective is defined as~\citep{gui2024bonbon}:
\begin{align}\label{eq:obj_win_rate}
   V_{\text{wr}}(\pi) \coloneqq P(\pi\succ\piref)- \beta\dist{\pi}{\piref}.
\end{align}
The win rate maximization is better aligned with evaluation metrics adopted in common benchmarks, and further, can be carried out without explicit reward models as long as the preference model $P_x$ is well-defined.

\subsection{Best-of-$N$ distillation} 
Best-of-$N$ (BoN) is a simple yet strong baseline in RLHF. Given a reward model $r$ and a prompt $x$, BoN samples $n$ i.i.d. responses $y_1, y_2, ..., y_n$ from the policy $\pi(\cdot|x)$ and select the response 
\[y = \arg\max_{ 1\leq i\leq n}r(x,y_i), \quad y_1, \dots, y_n \sim \pi(\cdot|x)\] 
with the highest reward. We call $\pi^{(n)}$  the BoN policy which selects the sample with the highest reward given $n$ samples i.i.d. drawn from $\pi$.
\citet{gui2024bonbon} shows that for any fixed small $\beta>0$, $\piref^{(n)}$ (approximately) maximizes $V_{\text{wr}}(\cdot)$ for properly chosen $n$.
While BoN is widely used in practice~\citep{beirami2024theoretical,gao2023scaling,wang2024transforming}, yet it can be quite expensive in terms of the inference cost for drawing $n$ samples. Hence, BoN distillation (\bond) \citep{sessa2024bondaligningllmsbestofn} is developed to approximate the BoN policy $\pi^{(n)}$ through fine-tuning  from some reference policy $\piref(\cdot|x)$, which can be updated iteratively via an exponential moving average \citep{sessa2024bondaligningllmsbestofn}.

\section{A Unified Game-Theoretic View}

In this section, we present a game-theoretic understanding of iterative BoN, which allows us to connect it to existing game-theoretic RLHF approaches under a \textit{win rate maximization} framework.

\subsection{Iterative BoN as game solving} 

\paragraph{Iterative BoN.} Due to the success of BoN sampling, its iterative version has also been studied~\citep{dong2023raft,sessa2024bondaligningllmsbestofn}, where BoN is performed iteratively by using a moving anchor as the reference policy. To understand its property in generality, we introduce the iterative BoN method in Algorithm~\ref{alg:iter_BoN} that encapsulates iterative BoN methods with or without a moving reference model, which we call the \textit{mixing} and \textit{no-mixing} case, respectively.  
\begin{algorithm}[!htb]
\caption{Iterative BoN}
\label{alg:iter_BoN}
\begin{algorithmic}[1]
\STATE \textbf{Input:} reference policy $\piref$, iterate number $T$, Best-of-$N$ parameter $n$, boolean value \textsf{Mixing}.
\STATE \textbf{Optional:} mixing rates $\alpha_1>0,\alpha_2\geq 0$ such that $\alpha_1+\alpha_2\leq 1$.
\STATE \textbf{Initialization:} $\pi_0\leftarrow\piref$.
\FOR{$t = 0,1,\cdots,T-1$} 
\STATE $\pi_t^{(n)}\gets \textrm{Best-of-$N$}(\pi_t, n)$.
\IF{\textsf{Mixing}}
    \STATE $\pi_{t+1}\propto  (\pi_t^{(n)})^{\alpha_1}\pi_t^{\alpha_2}\piref^{1-\alpha_1-\alpha_2}$; 
\ELSIF{not \textsf{Mixing}}
    \STATE $\pi_{t+1} \gets \pi_t^{(n)}$.
\ENDIF

\ENDFOR
\STATE \textbf{Return} $\pi_T$.
\end{algorithmic}
\end{algorithm}

Algorithm~\ref{alg:iter_BoN} demonstrates these two cases. In the \textit{mixing} case, we obtain new policies by mixing the BoN policy \(\pi_t^{(n)}\), $\pi_t$ and $\piref$ at each iteration with mixing rates $\alpha_1, \alpha_2$. In the \textit{no-mixing} case, the algorithm simply returns the BoN policy $\pi_t^{(n)}$ as $\pi_{t+1}$ for the next iteration. We will provide some theoretical guarantees for both cases, using the following game-theoretic framework.

\paragraph{Game-theoretic perspective.} We show that iterative BoN is implicitly solving the following game. Define a preference matrix $\overline P_x$  of size $|\gY|\times|\gY|$ at $x\in\gX$   by
\begin{equation}\label{eq:P_bar}
    \overline P_x(y,y')\coloneqq\begin{cases}
        1, \,\,\text{ if $r(x,y)\geq r(x,y')$,}\\
        0,\,\,\text{ if $r(x,y)<r(x,y')$.}
    \end{cases}
\end{equation}
Define further  $f_\beta:\Delta_\gX^\gY\times\Delta_\gX^\gY\rightarrow \R$ as
\begin{align}\label{eq:log_game_obj}
    f_\beta(\pi,\pi')&\coloneqq \underset{x\sim\rho,\atop y\in\pi(\cdot|x)}{\E}\big[\log \underset{y'\in\pi'(\cdot|x)}{\E}\overline P_x(y\succeq y')\big]- \beta\dist{\pi}{\piref}.
\end{align}
We introduce the following symmetric two-player {\em log-win-rate} game:
\begin{equation}\label{eq:log_game}
\begin{cases}
    \pi_1 =\arg\max_{\pi}f_\beta(\pi,\pi_2),&\\
    \pi_2 =\arg\max_{\pi}f_\beta(\pi,\pi_1).&
\end{cases}
\end{equation}
Let $\overline\pi^\star_\beta$ be a Nash equilibrium of the above log-win-rate game~\eqref{eq:log_game}, which satisfies the fixed-point characterization:
\begin{align}\label{eq:log_fixed_point}
    \limitibon&\in\arg\max_{\pi} \underset{x\sim\rho,\atop y\in\pi(\cdot|x)}{\E}\big[\log \underset{y'\in\limitibon(\cdot|x)}{\E}\overline P_x(y\succeq y')\big]- \beta\dist{\pi}{\piref}.
\end{align} 
 
Now we present our Theorem~\ref{thm:approx_wind}, which guarantees the convergence of Algorithm~\ref{alg:iter_BoN} to solutions of the above game. 
 \begin{thm}[Iterative BoN solves the log-win-rate game \eqref{eq:log_game}]\label{thm:approx_wind}
    Let $\piref\in \text{relint}\left(\triangle_\gY^\gX\right)$ and $n\geq 2$ in Algorithm~\ref{alg:iter_BoN}. Then $\pi_\infty\coloneqq\lim_{T\rightarrow\infty}\pi_T$ exists, and $(\pi_\infty,\pi_\infty)$ is a Nash equilibrium of the log-win-rate game~\eqref{eq:log_game} when: 
    \begin{enumerate}
        \item (no-mixing) $\alpha_{1}=1,\alpha_{2}=0$ for $\beta=0$;
        \item (mixing) $\alpha_1=\frac{\eta}{(1+\beta\eta)(n-1)}$, $\alpha_2=\frac{n-1-\eta}{(1+\beta\eta)(n-1)}$ for any $\beta,\eta>0$.
    \end{enumerate} 
\end{thm}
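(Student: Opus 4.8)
The plan is to collapse both branches of Algorithm~\ref{alg:iter_BoN} into a single mirror-ascent (proximal) dynamics on the game objective $f_\beta$ from \eqref{eq:log_game_obj}, identify its stationary points with the fixed-point condition \eqref{eq:log_fixed_point}, and then establish convergence by exploiting the \emph{coordination-game} structure of $f_\beta$. First I would make the \bon operator explicit in terms of the soft win-rate $F_\pi(y|x):=\E_{y'\sim\pi(\cdot|x)}\overline P_x(y\succeq y')$ appearing in \eqref{eq:P_bar}. A standard order-statistics computation gives, in the tie-free case, $\pi^{(n)}(y|x)\propto \pi(y|x)\,F_\pi(y|x)^{n-1}$ (with ties one gets the exact expression $\pi(y|x)\,(F_\pi^n-G_\pi^n)/(F_\pi-G_\pi)$ using the strict preference $G_\pi$), so that $\log\pi^{(n)}(\cdot|x)=\log\pi(\cdot|x)+(n-1)\log F_\pi(\cdot|x)+\mathrm{const}$; here $n\ge 2$ is what makes the $\log F_\pi$ term nonzero.

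Second, substitute this into the \textsf{Mixing} update and insert the prescribed rates. One checks $\alpha_1(n-1)=\tfrac{\eta}{1+\beta\eta}$, $\alpha_1+\alpha_2=\tfrac{1}{1+\beta\eta}$, and $1-\alpha_1-\alpha_2=\tfrac{\beta\eta}{1+\beta\eta}$, so the $(n-1)$ factors cancel exactly — this cancellation is the entire point of the rate choice — and the log-update collapses to
$$(1+\beta\eta)\log\pi_{t+1}=\log\pi_t+\eta\log F_{\pi_t}+\beta\eta\log\piref+\mathrm{const},$$
which is precisely the first-order condition of $\pi_{t+1}=\arg\max_{\pi}\{\eta f_\beta(\pi,\pi_t)-\dist{\pi}{\pi_t}\}$, a KL-mirror-ascent step for the game \eqref{eq:log_game}. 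Equivalently the iteration is the explicit map $\mathcal{T}(\pi)\propto \pi^{1/(1+\beta\eta)}F_\pi^{\eta/(1+\beta\eta)}\piref^{\beta\eta/(1+\beta\eta)}$. Setting $\pi_{t+1}=\pi_t=\pi_\infty$ in the display forces $\pi_\infty\propto\piref\,F_{\pi_\infty}^{1/\beta}$, i.e. $\log\tfrac{\pi_\infty}{\piref}=\tfrac1\beta\log F_{\pi_\infty}+\mathrm{const}$, which is exactly the KKT characterization of a best response to $\pi_\infty$, so that $(\pi_\infty,\pi_\infty)$ satisfies \eqref{eq:log_fixed_point} and is a Nash equilibrium. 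This fixed-point half of the argument is pure algebra.

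Third — the crux — I would show the limit exists for \emph{every} $\beta,\eta>0$. The structural fact I would use is that $f_\beta$ is supermodular (a coordination game): the mixed partials $\tfrac{\partial^2}{\partial\pi(y|x)\,\partial\pi'(z|x)}\,\E_{y\sim\pi}\log F_{\pi'}=\overline P_x(y\succeq z)/F_{\pi'}(y|x)\ge 0$, so $f_\beta$ has increasing differences in $(\pi,\pi')$ along the reward order, while $-\beta\dist{\cdot}{\piref}$ makes each best response strongly concave and unique for $\beta>0$. Writing the dynamics in the survival-function (CDF) coordinates, in which the policy set is a lattice, Topkis's theorem makes the proximal best-response map $\mathcal{T}$ order-preserving; since $\pi_0=\piref$ and $\pi_1\propto\piref\,F_{\piref}^{\,\eta/(1+\beta\eta)}$ stochastically dominates $\pi_0$, monotonicity of $\mathcal{T}$ yields a sequence $\{\pi_t\}$ increasing in stochastic-dominance order and bounded by the reward-maximizing point mass, hence convergent. (For large $\beta$ one can alternatively note that $\pi\mapsto F_\pi$ is a positive linear map and invoke a Birkhoff–Hopf/Hilbert-metric contraction, but the monotone-convergence route is what survives for all step sizes.) The no-mixing case $\beta=0,\alpha_1=1,\alpha_2=0$ is the pure sharpening iteration $\pi_{t+1}\propto\pi_t F_{\pi_t}^{\,n-1}$; here I would argue directly that the mass on the top reward level is nondecreasing and tends to $1$, so $\pi_t$ converges to a distribution supported on the reward-maximizers, which renders $\log F\equiv 0$ on its support and is therefore the (unregularized) equilibrium of $f_0$.

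I expect the main obstacle to be exactly this fourth step: the naive game operator $-\nabla_1 f_\beta(\pi,\pi)$ is \emph{not} globally monotone for small $\beta$ (one computes $\langle\log F_\pi-\log F_{\pi'},\pi-\pi'\rangle\ge 0$, the ``wrong'' sign), so a plain monotone-VI or contraction argument fails and the coordination/supermodular structure must be used instead — which in turn requires care in phrasing the monotone-convergence argument on the simplex (via the CDF lattice) and in justifying the clean $\pi^{(n)}\propto\pi F_\pi^{\,n-1}$ identity in the presence of reward ties.
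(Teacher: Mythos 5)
Your first two steps and your treatment of the no-mixing case are essentially the paper's own proof: the identity $\pi^{(n)}(y|x)\propto\pi(y|x)F_\pi(y|x)^{n-1}$ with $F_\pi(y|x)\coloneqq\overline P_x(y,:)\pi(\cdot|x)$, the cancellation $(n-1)\alpha_1=\tfrac{\eta}{1+\beta\eta}$, $\alpha_1+\alpha_2=\tfrac{1}{1+\beta\eta}$, $1-\alpha_1-\alpha_2=\tfrac{\beta\eta}{1+\beta\eta}$, and the Gibbs/KKT identification $\pi_\infty\propto\piref F_{\pi_\infty}^{1/\beta}$ of the limit as the unique maximizer of the strongly concave best-response problem are all exactly what the paper does (it unrolls the same log-recursion and passes to the limit with an elementary series lemma). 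The genuine gap is in the step you yourself flag as the crux: existence of the limit in the mixing case. Your supermodularity claim is false. Increasing differences must be checked in coordinates in which first-order stochastic dominance is the lattice order, i.e.\ in survival coordinates $s_j=\PP_\pi(r\geq r_j)$, $s'_j=\PP_{\pi'}(r\geq r_j)$ over the distinct reward levels $r_1<\dots<r_m$; there the cross term reads $\E_{y\sim\pi}\log F_{\pi'}(y)=\sum_j (s_j-s_{j+1})\log(1-s'_{j+1})$, whose mixed partials are $\partial^2/\partial s_i\partial s'_i=+1/(1-s'_i)>0$ but $\partial^2/\partial s_i\partial s'_{i+1}=-1/(1-s'_{i+1})<0$. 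Both signs occur, so $f_\beta$ has neither increasing nor decreasing differences on this lattice, and Topkis gives nothing. Your computation of nonnegative mixed partials in the raw coordinates $(\pi(y),\pi'(z))$ is not the relevant one: the componentwise order is vacuous on the simplex (two distinct probability vectors are never comparable), so it certifies no lattice structure. A second obstruction is that even granting monotone best responses, your iteration map $\mathcal{T}(\pi)\propto\pi^{1/(1+\beta\eta)}F_\pi^{\eta/(1+\beta\eta)}\piref^{\beta\eta/(1+\beta\eta)}$ contains the geometric-mixing factor $\pi^{1/(1+\beta\eta)}$, and FOSD is not preserved under geometric mixing and renormalization.

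The monotone-convergence outline can be rescued, but with the likelihood-ratio order in place of FOSD/Topkis: if $\pi\preceq_{lr}\pi'$ (i.e.\ $\pi'(\cdot|x)/\pi(\cdot|x)$ is increasing in the reward order), then $\mathcal{T}(\pi')(y|x)/\mathcal{T}(\pi)(y|x)\propto\big(\pi'(y|x)/\pi(y|x)\big)^{\frac{1}{1+\beta\eta}}\big(F_{\pi'}(y|x)/F_{\pi}(y|x)\big)^{\frac{\eta}{1+\beta\eta}}$, where the first factor is increasing by hypothesis and the second because the likelihood-ratio order implies the reversed-hazard-rate order (ratios of CDFs are increasing in the reward level); hence $\mathcal{T}$ is $\preceq_{lr}$-monotone, and since $\pi_1/\pi_0\propto F_{\piref}^{\eta/(1+\beta\eta)}$ is increasing, induction gives an $\preceq_{lr}$-increasing sequence, whose per-context CDFs are pointwise nonincreasing and bounded, hence convergent, after which your fixed-point algebra applies. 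By contrast, the paper reaches the same conclusion with no order-theoretic machinery at all: it unrolls the log-recursion to $\log\pi_{t+1}(y|x)=\log\piref(y|x)+\tfrac{\eta}{1+\eta\beta}\sum_{i=0}^{t}(1+\beta\eta)^{-i}\log F_{\pi_{t-i}}(y|x)+c_x$, shows the ratios $\pi_t(y|x)/\pi_t(y_1|x)$ against a fixed optimal response $y_1\in\gY^\star(x)$ are monotone in $t$ (hence convergent), and identifies the limit via its series-convergence lemma. One point in your favor: your tie-aware BoN formula $\pi(y|x)\big(F_\pi^n-G_\pi^n\big)/\big(F_\pi-G_\pi\big)$ is more careful than the paper's $n\pi F_\pi^{n-1}$, and you are right that the exact calibration of $\alpha_1,\alpha_2$ in the theorem is tied to the $\geq$-tie convention of \eqref{eq:P_bar}.
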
 
In the no-mixing case, we can show that the limiting policy obtained by Algorithm~\ref{alg:iter_BoN} converges to the equilibrium of the unregularized log-win-rate game. In the mixing case, 
we show that with a proper choice of the mixing rates, Algorithm~\ref{alg:iter_BoN} solves the regularized log-win-rate game. To the best of our knowledge, this is the first game-theoretic understanding of {\em iterative} BoN using a general preference model.

\subsection{Self-play and win rate dominance}

The $\log$-win-rate game~\eqref{eq:log_game} is a non-zero-sum game that may be challenging to optimize: the function $f_\beta$ is not convex-concave, the Nash equilibria may not be unique, and the $\log$ term introduces nonlinearity, which induces difficulty in estimation.  
Therefore, we seek a good alternative to the $\log$-win-rate game that maintains its core properties while being more amenable to optimization. 

Specifically, we now consider the following alternative two-player {\em win-rate} game:
\begin{align}\label{eq:minmax}
    \max_{\pi}\min_{\pi'} P(\pi\succ\pi') &- \beta\dist{\pi}{\piref}+ \beta\dist{\pi'}{\piref},
\end{align}
which eliminates the nonlinearity in the reward, and has been recently studied by~\citet{swamy2024minimaximalist,wu2024self}
for $\beta=0$ and \citet{munos2023nash} for $\beta>0$. 

The following proposition guarantees the game \eqref{eq:minmax} is well-defined and is equivalent to the following fixed point problem:   
\begin{equation}\label{eq:optimal_policy_regularized}
  \begin{split}
    \pi^\star_\beta&\in\arg\max_{\pi}P(\pi\succ \pi^\star_\beta)- \beta\dist{\pi}{\piref}\\
&=\underset{x\sim\rho,y\sim\pi(\cdot|x),\atop y'\sim\pi^{\star}_\beta(\cdot|x)}{\EE}\left[P_x(y,y')-\beta\log\frac{\pi(y|x)}{\piref(y|x)}\right].
  \end{split}
\end{equation}

\begin{prop}[existence of $\pi^\star_\beta$]\label{prop:existence}
  $\pi^\star_\beta$ exists and $(\pi^\star_\beta,\pi^\star_\beta)$ is the Nash equilibrium of the minmax game~\eqref{eq:minmax}. Moreover, when $\beta>0$, $(\pi^\star_\beta,\pi^\star_\beta)$ is the unique Nash equilibrium.
\end{prop}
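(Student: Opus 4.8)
The plan is to exploit two structural features of the objective
\[
  g(\pi,\pi') \coloneqq P(\pi\succ\pi') - \beta\dist{\pi}{\piref} + \beta\dist{\pi'}{\piref},
\]
namely its convex--concave geometry and its antisymmetry. Since $P(\pi\succ\pi')=\E_{x\sim\rho}\,\pi(\cdot|x)^\top P_x\,\pi'(\cdot|x)$ is bilinear and $\dist{\cdot}{\piref}$ is convex, $g(\cdot,\pi')$ is concave and $g(\pi,\cdot)$ is convex on the compact convex set $\Delta_\gY^\gX$; when $\beta>0$ the entropy term makes these \emph{strictly} concave, resp.\ \emph{strictly} convex. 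Because $g$ is an expectation over $x\sim\rho$ of decoupled per-prompt games (the same $x$ feeds both $y$ and $y'$), it suffices to analyze, for each $x$, a finite symmetric matrix game on $\Delta_\gY$ with KL regularization and then aggregate. For the second feature I would use $P_x(y,y')+P_x(y',y)=1$, which gives $P(\pi\succ\pi')+P(\pi'\succ\pi)=1$ and hence the identity $g(\pi,\pi')+g(\pi',\pi)=1$; this is the symmetry that forces the equilibrium to have the form $(\pi^\star_\beta,\pi^\star_\beta)$.

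For existence I would produce a saddle point directly. When $\beta>0$ the cleanest route is the best-response map $\Phi(\bar\pi)(y|x)\propto \piref(y|x)\exp\bigl(\tfrac1\beta\,\E_{y'\sim\bar\pi(\cdot|x)}P_x(y,y')\bigr)$, which is the unique maximizer of $P(\cdot\succ\bar\pi)-\beta\dist{\cdot}{\piref}$ (a per-prompt Gibbs/softmax solution). As the exponent is linear in $\bar\pi$ and bounded, $\Phi$ is a continuous self-map of $\Delta_\gY^\gX$, so Brouwer's theorem yields a fixed point $\pi^\star_\beta=\Phi(\pi^\star_\beta)$, which is exactly \eqref{eq:optimal_policy_regularized}. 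For $\beta=0$ the best response is not single-valued, so there I would instead invoke Sion's minimax theorem on the compact convex domain to obtain an equilibrium. I would then verify that the fixed point is a saddle point: $\pi^\star_\beta$ maximizes $g(\cdot,\pi^\star_\beta)$ by construction, and the antisymmetry identity turns ``minimize $g(\pi^\star_\beta,\cdot)$'' into ``maximize $P(\cdot\succ\pi^\star_\beta)-\beta\dist{\cdot}{\piref}$,'' the same problem, so $\pi^\star_\beta$ attains the inner minimum as well; hence $(\pi^\star_\beta,\pi^\star_\beta)$ is a Nash equilibrium.

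For uniqueness when $\beta>0$ I would use the standard product structure of saddle points of a convex--concave function: the equilibrium set is $X^\star\times Y^\star$. Strict concavity in the first argument forces $X^\star$ to be a singleton (the maximizer of $g(\cdot,\psi)$ is unique for each fixed $\psi$), and strict convexity forces $Y^\star$ to be a singleton, so the equilibrium is unique. The antisymmetry $g(\pi,\pi')+g(\pi',\pi)=1$ shows that $(\psi,\pi)$ is a saddle whenever $(\pi,\psi)$ is, whence $X^\star=Y^\star$ and the unique equilibrium is symmetric, equal to $(\pi^\star_\beta,\pi^\star_\beta)$. For $\beta=0$ the same symmetry argument still gives $X^\star=Y^\star$, so a symmetric equilibrium exists even though uniqueness may fail.

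I anticipate the main obstacle to be analytic rather than structural: the KL divergence is only lower semicontinuous and blows up toward the boundary of the simplex, so the continuity and minimax arguments must be justified carefully. This is precisely where $\piref\in\relint(\Delta_\gY^\gX)$ is essential---it guarantees that the Gibbs best response $\Phi(\bar\pi)$ has full support and depends continuously on $\bar\pi$ (keeping $\Phi$ a genuine self-map into the interior), and that the regularized optima are attained in the relative interior rather than on the boundary, which legitimizes both the Brouwer and the minimax steps.
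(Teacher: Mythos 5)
Your proposal is correct, and it rests on the same two pillars as the paper's proof: the antisymmetry coming from $P_x + P_x^\top = J$ (your identity $g(\pi,\pi')+g(\pi',\pi)=1$ is exactly this, aggregated over $x\sim\rho$), and the interchangeability/product structure of saddle points (the paper cites Nash's interchangeability for zero-sum games; your $X^\star\times Y^\star$ argument is the same fact). The genuine difference is in how existence is established for $\beta>0$: the paper asserts existence and uniqueness of a Nash equilibrium $(\pi_1^\star,\pi_2^\star)$ directly from strong concavity--convexity, writes the Gibbs first-order conditions, and uses $P_x+P_x^\top=J$ to swap the pair and force $\pi_1^\star=\pi_2^\star$ by uniqueness; you instead construct a symmetric equilibrium from scratch by applying Brouwer's theorem to the soft best-response map $\Phi(\bar\pi)\propto\piref\exp\bigl(\tfrac1\beta P_x\bar\pi\bigr)$ and then verify the saddle property via antisymmetry, deferring uniqueness to a separate strict-convexity argument. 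Your route is more self-contained (it does not presuppose a minimax existence theorem for the regularized game, and it delivers the fixed-point characterization \eqref{eq:optimal_policy_regularized} as the very object Brouwer produces), at the cost of being slightly longer; the paper's is shorter but leans on the standard saddle-point existence result as a black box. One small correction to your anticipated obstacle: in this finite setting with $\piref\in\relint(\Delta_\gY^\gX)$, the map $\pi\mapsto\dist{\pi}{\piref}$ is not merely lower semicontinuous but continuous and bounded on the whole simplex (only its gradient blows up at the boundary), so no delicate semicontinuity argument is needed---interiority of $\piref$ is still what keeps $\Phi$ well defined and the Gibbs solutions in the interior, as you say.
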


\paragraph{Win rate dominance.} The fixed-point equation \eqref{eq:optimal_policy_regularized} identifies a policy with a higher winning probability against any other policy. For $\beta=0$, $\pi^\star_0$ satisfies $P(\pi\succ\pi^\star_0)\leq1/2$ for any $\pi$, ensuring it outperforms other policies. When $\beta>0$, the KL divergence term encourages $\pi^\star_\beta$ to remain close to $\piref$ while maintaining a high win rate. We term \eqref{eq:optimal_policy_regularized} the \textit{\textbf{Win} rate \textbf{D}ominance} (\WIND) optimization problem.

\subsection{Connecting iterative BoN with \WIND}

Due to the monotonicity of $\log(\cdot)$, it is natural to postulate that the win rate game and the log-win-rate game underpinning iterative BoN are connected. We establish the novel relationship  rigorously, which allows a unifying game-theoretic view for many existing algorithms.   
We define a constant $c_\beta\in(0,+\infty]$  related to $\piref$ as:
\begin{align}\label{eq:c_beta}
c_\beta\coloneqq\min_{x\in\gX,\atop y\in\gY\backslash\gY^\star(x)}\left\{\frac{\sum_{y^\star\in\gY^\star(x)}\piref(y^\star|x)}{4\max\left\{\log\frac{\piref(y|x)}{\underset{y^\star\in\gY^\star(x)}{\max}\piref(y^\star|x)},0\right\}}\right\},
\end{align}
where $\gY^\star(x)\coloneqq\argmax_{y\in\gY} r(x,y)$ is the set of optimal responses for each $x\in\gX$. We now demonstrate the relationship between the equilibria set of the log-win-rate game $\limitibon$ and the win-rate game $\pi^\star_\beta$.
\begin{thm}[relationship between two games (informal)]\label{thm:wind_approx_relation}
    Let $\piref\in \text{relint}\left(\triangle_\gY^\gX\right)$ and $n\geq 2$ in Algorithm~\ref{alg:iter_BoN}. Then 
    \begin{itemize}
        \item When $\beta= 0$, $\limitibon$ is also a solution to \eqref{eq:optimal_policy_regularized};
        \item When $\beta\in(0,c_\beta)$ where $c_\beta$ is defined in \eqref{eq:c_beta}, for all $x\in\gX$, $\limitibon$ satisfies
        \begin{align}\label{eq:game_diff}
    \norm{\overline\pi^\star_{\beta}(\cdot | x)-\pi^\star_{\beta} (\cdot | x)}_1\leq 4(|\gY|-|\gY^\star(x)|) \exp\left(\frac{-\sum_{y^\star\in\gY^\star(x)} \piref(y^\star|x)}{4\beta}\right)\rightarrow 0\text{ as }\beta\rightarrow 0.
        \end{align}
    \end{itemize}
\end{thm}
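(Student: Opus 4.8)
The plan is to work entirely from the Gibbs form of the two fixed points and reduce everything to controlling the mass each equilibrium places off the optimal set $\gY^\star(x)$. Since maximizing a linear payoff minus $\beta\dist{\cdot}{\piref}$ produces a softmax tilt of $\piref$, the win-rate fixed point \eqref{eq:optimal_policy_regularized} reads $\pi^\star_\beta(y|x)\propto\piref(y|x)\exp\big(q^\star(y|x)/\beta\big)$ with $q^\star(y|x)=\EE_{y'\sim\pi^\star_\beta}[P_x(y,y')]$, while \eqref{eq:log_fixed_point} reads $\limitibon(y|x)\propto\piref(y|x)\big(\overline q^\star(y|x)\big)^{1/\beta}$ with $\overline q^\star(y|x)=\EE_{y'\sim\limitibon}[\overline P_x(y\succeq y')]$. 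The key observation is that both tilts are \emph{flat} on $\gY^\star(x)$: every optimal response ties or beats every response, so $q^\star(y|x)=1-m/2$ is constant over $y\in\gY^\star(x)$ (where $m=\sum_{y\in\gY^\star(x)}\pi^\star_\beta(y|x)$) and $\overline q^\star(y|x)=1$ is constant there. Hence, conditioned on $\gY^\star(x)$, both $\pi^\star_\beta$ and $\limitibon$ equal $\piref$ renormalized on $\gY^\star(x)$, and differ only through their total optimal masses $m$ and $\overline m=\sum_{y\in\gY^\star(x)}\limitibon(y|x)$.

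With this structure I would split the $L^1$ distance into a sum over $\gY^\star(x)$ and a sum over its complement. On $\gY^\star(x)$ the two policies are the same $\piref$-profile scaled by $m$ and $\overline m$, contributing exactly $|m-\overline m|\le(1-m)+(1-\overline m)$; off $\gY^\star(x)$ the triangle inequality bounds the contribution by $(1-m)+(1-\overline m)$. Thus $\norm{\limitibon(\cdot|x)-\pi^\star_\beta(\cdot|x)}_1\le 2(1-m)+2(1-\overline m)$, so the whole theorem reduces to bounding the two suboptimal masses by $(|\gY|-|\gY^\star(x)|)\exp\!\big(-\tfrac{1}{4\beta}\sum_{y^\star\in\gY^\star(x)}\piref(y^\star|x)\big)$.

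To bound these I would first secure the self-consistent floor $m,\overline m\ge\sum_{y^\star\in\gY^\star(x)}\piref(y^\star|x)$: lower-bounding each partition function by its optimal part and using $q^\star(y|x)\le 1-m$, $\overline q^\star(y|x)\le 1-\overline m$ for suboptimal $y$ shows the tilt can only shift mass toward $\gY^\star(x)$ relative to $\piref$. Comparing a suboptimal $y$ to the heaviest optimal response then gives
\[
\pi^\star_\beta(y|x)\le\frac{\piref(y|x)}{\sum_{y^\star\in\gY^\star(x)}\piref(y^\star|x)}\exp\!\Big(-\tfrac{m}{2\beta}\Big),\qquad \limitibon(y|x)\le\frac{\piref(y|x)}{\sum_{y^\star\in\gY^\star(x)}\piref(y^\star|x)}\big(1-\overline m\big)^{1/\beta},
\]
using that the tilt gap between optimal and suboptimal responses is at least $m/2$ (resp. governed by $\log(1-\overline m)\le-\overline m$). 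Feeding in the floor and the hypothesis $\beta<c_\beta$, which is precisely the statement that $\max\{\log(\piref(y|x)/\max_{y^\star}\piref(y^\star|x)),0\}\le\tfrac{1}{4\beta}\sum_{y^\star\in\gY^\star(x)}\piref(y^\star|x)$, absorbs the prefactor into the exponent and bounds each suboptimal probability by $\exp\!\big(-\tfrac{1}{4\beta}\sum_{y^\star}\piref(y^\star|x)\big)$; summing over the $|\gY|-|\gY^\star(x)|$ suboptimal responses and combining with the $L^1$ split produces the factor $4(|\gY|-|\gY^\star(x)|)$. The $\beta\to0$ limit is then immediate since $\piref\in\relint(\triangle_\gY^\gX)$ forces $\sum_{y^\star}\piref(y^\star|x)>0$.

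For the boundary case $\beta=0$ I would argue directly at the level of the games rather than through the degenerate tilt: the fixed-point characterizations force both $\pi^\star_0$ and $\limitibon$ (the iterative-BoN limit supplied by Theorem~\ref{thm:approx_wind}) to be supported on $\gY^\star(x)$, while conversely any policy supported on $\gY^\star(x)$ is a best response to itself in \eqref{eq:optimal_policy_regularized}; hence $\limitibon$ solves \eqref{eq:optimal_policy_regularized}. I expect the main obstacle to be the bootstrapping in the third paragraph: the exponent in each per-element bound depends on the very masses $m,\overline m$ being controlled, so the argument must first establish the $\piref$-mass floor and only then invoke $c_\beta$, whose definition must be seen to be calibrated exactly so that the prefactor $\piref(y|x)/\max_{y^\star}\piref(y^\star|x)$ is dominated by, rather than swamps, the $1/(4\beta)$ decay.
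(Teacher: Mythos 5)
Your proposal is correct, and it takes a genuinely different route from the paper's proof. The paper argues \emph{dynamically}: it unrolls the multiplicative recursions of both Algorithm~\ref{alg:iter_BoN} (mixing case) and Algorithm~\ref{alg:iter_PMDA}, uses the monotone decrease of suboptimal probabilities along both trajectories (anchored at $\piref$) to lower-bound the per-step advantage terms by $\tfrac14\sum_{y^\star\in\gY^\star(x)}\piref(y^\star|x)$ and $\tfrac34\sum_{y^\star\in\gY^\star(x)}\piref(y^\star|x)$ respectively, passes to the limit of the discounted sums, and then concludes by a triangle inequality pivoting through the common $\beta=0$ limit $\overline\pi^\star_0$ (the renormalization of $\piref$ on $\gY^\star(x)$), combining $\norm{\pi^\star_{\beta,x}-\overline\pi^\star_{0,x}}_1\leq 2(|\gY|-|\gY^\star(x)|)e^{-S/(4\beta)}$ with $\norm{\overline\pi^\star_{\beta,x}-\overline\pi^\star_{0,x}}_1\leq 2(|\gY|-|\gY^\star(x)|)e^{-3S/(4\beta)}$, where $S=\sum_{y^\star\in\gY^\star(x)}\piref(y^\star|x)$. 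You instead argue \emph{statically} from the Gibbs fixed-point characterizations of $\pi^\star_\beta$ (via Proposition~\ref{prop:existence}) and $\limitibon$ (via \eqref{eq:log_fixed_point}), exploit the flatness of both tilts on $\gY^\star(x)$ to reduce the $\ell_1$ distance to the two suboptimal masses, and control those masses by a self-consistent floor plus per-element bounds absorbed by the $c_\beta$ condition; your gap-of-$m/2$, floor $m,\overline m\geq S$, and absorption $\piref(y|x)/\piref(y_1|x)\leq e^{S/(4\beta)}$ reproduce the paper's constants exactly. What the paper's route buys is that the trajectory monotonicity is needed anyway to prove existence of the limits (Theorem~\ref{thm:approx_wind}), so the bound comes almost for free on top of it; what your route buys is modularity and generality---it applies to \emph{any} solutions of the two fixed-point problems, not just the algorithmic limits, and it makes transparent where the factor $4$ and the exponent $S/(4\beta)$ come from. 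Two small wording points: your phrase ``lower-bounding each partition function by its optimal part'' should be ``upper-bounding the partition function by giving every response the optimal tilt'' (that is the inequality direction that yields $m,\overline m\geq S$, and your stated inequalities $q^\star(y|x)\leq 1-m$, $\overline q^\star(y|x)\leq 1-\overline m$ do deliver it); and in the $\beta=0$ bullet, the log-game fixed point alone does not force $\limitibon$ to be supported on $\gY^\star(x)$ (degenerate equilibria on tied suboptimal responses exist)---but this does not matter, since the explicit form of the no-mixing limit from Theorem~\ref{thm:approx_wind} gives the support property, and your converse (any policy supported on the optimal sets is a best response to itself) is all that is needed.
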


Theorem~\ref{thm:wind_approx_relation} shows when $\beta = 0$, both games have the solution $\overline{\pi}^\star_0$. For small positive $\beta$, the $\ell_1$ distance between the solutions of the two games is bounded by a term that decreases exponentially with $1/\beta$. We verify Theorem~\ref{thm:wind_approx_relation} empirically on contextual bandits in Section~\ref{sec_app:toy_exp}.
This result provides theoretical justification for using iterative BoN as an approximation to \WIND, or vice versa, especially when $\beta$ is small. More importantly, it paves a way for efficient algorithm to \WIND, bypassing the $\log$ operator in the win-rate game.

\section{Faster WIND}

Based on the understanding of the connection between the $\log$-win-rate game and the win-rate game, in this section, we propose a new sample-efficient algorithm for finding the \WIND solution in \eqref{eq:minmax}, which includes two ingredients: {\bf (i)} identifying a memory-efficient, exact policy optimization algorithm with linear last-iterate convergence \citep{sokota2023unifiedapproachreinforcementlearning}, and {\bf (ii)} developing a series of sample-efficient algorithms with flexible loss functions and finite-time convergence guarantee. With slight abuse of terminology, we shall refer to our algorithmic framework also as \WIND.

\subsection{Exact policy optimization with last-iterate linear convergence}

Recognizing that \eqref{eq:minmax} is an KL-regularized matrix game, there are many existing algorithms that can be applied to find $\pi^\star_\beta$. Nonetheless, it is desirable to achieve fast last-iterate convergence with a small memory footprint. This is especially important in LLM optimization, for the memory efficiency.
For example, extragradient algorithms (e.g., \citet{korpelevich1976extragradient,popov1980,cen2021fast})---although fast-convergent---are expected to be expensive in terms of memory usage due to the need of storing an additional extrapolation point (i.e., the LLM) in each iteration. 

It turns out that the magnetic mirror descent algorithm in \citet{{sokota2023unifiedapproachreinforcementlearning}}, which is proposed to solve a  variational inequality formulation equivalent to \eqref{eq:minmax}, meets our consideration. We present a tailored version of this algorithm in Algorithm~\ref{alg:iter_PMDA}, and state its linear last-iterate  convergence in Theorem~\ref{thm:rate}.

\begin{algorithm}[!thb]
\caption{\WIND (exact gradient, adapted from \citet{sokota2023unifiedapproachreinforcementlearning} tailored for our setting)}
\label{alg:iter_PMDA}
\begin{algorithmic}[1]
\STATE \textbf{Input:} reference policy $\piref$, initial policy $\pi^{(0)}$, regularization coefficient $\beta>0$, learning rate $\eta>0$.
\FOR{$t = 0,1,\cdots$}
\STATE Update $\pi(\cdot|x)$ for all $x\in \gX$:
\begin{align}\label{eq:update_mirror_analytical}
  \pi^{(t+1)}(y|x)\propto(\pi^{(t)}(y|x))^{\frac{1}{1+\beta\eta}}(\piref(y|x))^{\frac{\beta\eta}{1+\beta\eta}}\exp\left(\frac{\eta}{1+\beta\eta}\E_{y'\sim\pi^{(t)}(\cdot|x)}P_x(y,y')\right)
\end{align}
\ENDFOR 
\end{algorithmic}
\end{algorithm}

\begin{thm}[Linear last-iterate convergence of Algorithm~\ref{alg:iter_PMDA}, \citet{sokota2023unifiedapproachreinforcementlearning}]\label{thm:rate}
    Assume $\beta>0$ and $\pi^{(0)},\piref\in\text{relint}(\triangle^\gX_\gY)$.
    When the learning rate $\eta\in(0,\beta]$, $\pi^{(t)}$ in Algorithm~\ref{alg:iter_PMDA} satisfies: 
    \begin{align}\label{eq:rate}
        D_{\mathrm{KL}}\big(\pi^\star_\beta||{\pi^{(t)}}\big)\leq\bigg(\frac{1}{1+\eta\beta}\bigg)^t \dist{\pi^\star_\beta}{\pi^{(0)}}.
    \end{align}
\end{thm}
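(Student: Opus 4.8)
The plan is to treat one step of Algorithm~\ref{alg:iter_PMDA} as a single proximal (magnetic mirror descent) update and run a one-step Bregman/three-point contraction argument, in the spirit of \citet{sokota2023unifiedapproachreinforcementlearning}. First I would observe that the dynamics decouple across prompts: the update \eqref{eq:update_mirror_analytical} for $\pi^{(t+1)}(\cdot|x)$ depends only on $\pi^{(t)}(\cdot|x)$, $\piref(\cdot|x)$, and $P_x$, while $\dist{\cdot}{\cdot}=\E_{x\sim\rho}\KL{\cdot}{\cdot}$. Hence it suffices to prove the per-prompt contraction $(1+\eta\beta)\KL{p^\star}{p^{(t+1)}}\le\KL{p^\star}{p^{(t)}}$ for the single-context matrix game with payoff $P_x$, abbreviating $p^{(t)}=\pi^{(t)}(\cdot|x)$, $q=\piref(\cdot|x)$, $p^\star=\pi^\star_\beta(\cdot|x)$, and then take $\E_{x\sim\rho}$ and unroll over $t$. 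Since $p^{(0)},q\in\text{relint}(\Delta_\gY)$ and $\beta>0$, all iterates and $p^\star$ stay in the relative interior, so every KL below is finite. A key structural fact I would record is that $P_x+P_x^\top=\vone\vone^\top$ (each ordered pair splits a unit of preference mass), so $P_x=\tfrac12\vone\vone^\top+A$ with $A=\tfrac12(P_x-P_x^\top)$ skew-symmetric and $\max_{y,y'}|A_{y,y'}|\le\tfrac12$.

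Next I would rewrite the closed-form update as the maximizer
\[
p^{(t+1)}=\arg\max_{p\in\Delta_\gY}\Big\{\eta\innerprod{P_x p^{(t)},\,p}-\eta\beta\,\KL{p}{q}-\KL{p}{p^{(t)}}\Big\},
\]
and write its first-order optimality condition against the comparator $p^\star$, together with the first-order optimality condition satisfied by the fixed point $p^\star$ of \eqref{eq:optimal_policy_regularized} (which exists and is the unique equilibrium by Proposition~\ref{prop:existence}). The constant Lagrange-multiplier directions drop since all test directions are sum-zero. To each inequality I would apply the three-point identity for the KL Bregman divergence, $\innerprod{\log(a/b),\,u-a}=\KL{u}{b}-\KL{u}{a}-\KL{a}{b}$, converting inner products into KL differences. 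Adding the two resulting inequalities and canceling the $\KL{\cdot}{q}$ terms yields
\[
(1+\eta\beta)\KL{p^\star}{p^{(t+1)}}\le\KL{p^\star}{p^{(t)}}-\KL{p^{(t+1)}}{p^{(t)}}-\eta\beta\,\KL{p^{(t+1)}}{p^\star}+\eta\innerprod{P_x(p^{(t)}-p^\star),\,p^{(t+1)}-p^\star}.
\]

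The remaining work is to control the bilinear ``rotational'' cross term, which is where the real difficulty lies. Using $P_x=\tfrac12\vone\vone^\top+A$ and that $p^{(t)}-p^\star$ and $p^{(t+1)}-p^\star$ are sum-zero, the $\vone\vone^\top$ part vanishes and skew-symmetry gives $\innerprod{P_x(p^{(t)}-p^\star),\,p^{(t+1)}-p^\star}=\innerprod{A(p^{(t)}-p^{(t+1)}),\,p^{(t+1)}-p^\star}$. Bounding this by $\tfrac12\norm{p^{(t)}-p^{(t+1)}}_1\norm{p^{(t+1)}-p^\star}_1$ (from $\max|A_{y,y'}|\le\tfrac12$), then invoking Pinsker's inequality on each factor and a weighted AM--GM split, I would show that when $\eta\le\beta$ this term is dominated by $\KL{p^{(t+1)}}{p^{(t)}}+\eta\beta\KL{p^{(t+1)}}{p^\star}$, so the last three terms on the right are jointly nonpositive. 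This leaves the clean per-context contraction $(1+\eta\beta)\KL{p^\star}{p^{(t+1)}}\le\KL{p^\star}{p^{(t)}}$; taking $\E_{x\sim\rho}$ and iterating gives \eqref{eq:rate}.

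I expect the main obstacle to be precisely the AM--GM/Pinsker balancing that pins the admissible step size to $\eta\le\beta$: the two negative KL terms carry indices $(t{+}1,t)$ and $(t{+}1,\star)$, which must be matched to the two factors of the cross term, and this matching only works after the regrouping $A(p^{(t)}-p^\star)=A(p^{(t)}-p^{(t+1)})+A(p^{(t+1)}-p^\star)$ that exploits $\innerprod{Av,v}=0$. Getting the weights of the AM--GM split to land exactly on the available coefficients (rather than merely up to a constant) is the delicate point that produces the sharp factor $1/(1+\eta\beta)$ under the stated step-size range.
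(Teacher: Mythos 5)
Your proposal is correct, but it takes a genuinely more self-contained route than the paper. The paper's proof of Theorem~\ref{thm:rate} (Appendix~\ref{sec_app:proof_thm_rate}) never performs the one-step analysis: it reformulates \eqref{eq:optimal_policy_regularized} as a monotone variational inequality (Lemma~\ref{lm:VI}), checks that $F_x(\pi_x)=-P_x\pi_x-\beta\log\pi_{\text{ref},x}$ is monotone and $1$-Lipschitz w.r.t.\ the $\ell_1$-norm (both consequences of $P_x+P_x^\top$ being the all-ones matrix, the same structural fact you exploit), notes that the negative entropy is $1$-strongly convex w.r.t.\ $\ell_1$, and then invokes \citet[Theorem~3.4]{sokota2023unifiedapproachreinforcementlearning} as a black box per context, finishing by taking expectation over $x\sim\rho$. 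You replace that citation with an explicit derivation, and your steps check out: adding the first-order optimality condition of the proximal step (tested at $p^\star$) to that of the fixed point (tested at $p^{(t+1)}$) and applying the three-point identity does make the $\KL{\cdot}{q}$ terms cancel, yielding $(1+\eta\beta)\KL{p^\star}{p^{(t+1)}}\le\KL{p^\star}{p^{(t)}}-\KL{p^{(t+1)}}{p^{(t)}}-\eta\beta\KL{p^{(t+1)}}{p^\star}+\eta\innerprod{P_x(p^{(t)}-p^\star),\,p^{(t+1)}-p^\star}$; your regrouping via $P_x=\tfrac12\vone\vone^\top+A$ with $\innerprod{Av,\,v}=0$ bounds the cross term by $\tfrac{\eta}{2}ab$ with $a=\norm{p^{(t)}-p^{(t+1)}}_1$, $b=\norm{p^{(t+1)}-p^\star}_1$, and Young plus Pinsker gives $\tfrac{\eta}{2}ab\le\tfrac14a^2+\tfrac{\eta^2}{4}b^2\le\KL{p^{(t+1)}}{p^{(t)}}+\eta\beta\KL{p^{(t+1)}}{p^\star}$ whenever $\eta\le2\beta$, so the last three terms are indeed jointly nonpositive on the stated range $\eta\le\beta$ (in fact with slack, because your skew part $A$ has entries bounded by $1/2$ where the paper's Lipschitz constant is $1$). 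What the paper's route buys is brevity and reuse: the monotonicity and Lipschitz facts of Lemma~\ref{lm:VI} are established once and then recycled in the proof of Theorem~\ref{thm:complexity}, where the same contraction argument must be redone with error terms. What your route buys is a transparent, citation-free proof that pinpoints exactly where the step-size restriction enters and shows it is not tight for this particular $\{0,\tfrac12,1\}$-valued payoff structure.
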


\begin{rmk}\label{rmk:thm_linear_rate}
We note that when $\beta=0$, the update rule~\eqref{eq:update_mirror_analytical} recovers~\citep[Algorithm~1]{swamy2024minimaximalist}.  
When $\beta>0$, the update rule in \eqref{eq:update_mirror_analytical} is different from that of~\citet{munos2023nash}, which is
\begin{align*} 
    \pi^{(t+1)}(y|x)&\propto \widetilde\pi^{(t)} (y|x) \cdot\exp\left(\eta\EE_{y'\sim\widetilde\pi^{(t)}(\cdot|x)}P_x(y,y')\right),
\end{align*}
where $\widetilde\pi^{(t)}$ is a mixed policy defined as
$$\widetilde\pi^{(t)}(y|x)\propto \big(\pi^{(t)}(y|x)\big)^{1-\eta\beta}\left(\piref(y|x)\right)^{\eta\beta} . $$
As such, it requires extra memory to store $\widetilde\pi^{(t)}$. Moreover, \citet{munos2023nash} shows a slower rate of $\gO(1/T)$, whereas Algorithm~\ref{alg:iter_PMDA} admits linear convergence.
\end{rmk}

\subsection{Sample-efficient algorithm}\label{sec:app_framework}

 We now derive practical sample-efficient  methods for approximating the exact update \eqref{eq:update_mirror_analytical} of \WIND  in the parameter space $\Theta$ of the policy $\pi_\theta$, $\theta\in \Theta$. For exposition, we use $\phi_\theta$ to denote the logits before softmax, i.e.,
 $$\pi_{\theta}=\sm\circ\phi_\theta,$$ 
where $\sm(x)_i\coloneqq e^{x_i} /\sum_{j}e^{x_j}$ is the softmax function.

We consider the existence of reward model approximation error, i.e.,  we may use an inaccurate judger $\widehat P_x$, which is an approximation of $P_x$. For example, instead of training a reward model, we could use an LLM $\widehat P$ as a judger to directly judge if response $y$ is better than $y'$ given a prompt $x$, and use $\widehat P_x$ as an approximation of $P_x$.

\paragraph{Algorithm derivation with the squared risk.}\label{sec:practical_alg}
Let $\theta_t,\theta_\text{ref}$  denote the parameters of $\pi^{(t)}$ and $\piref$ in Algorithm~\ref{alg:iter_PMDA}, respectively.  
We rewrite the update rule~\eqref{eq:update_mirror_analytical} as
\begin{align}\label{eq:log_update}
  \phi_{\theta_{t+1}}(y|x)&=\frac{1}{1+\beta\eta}\phi_{\theta_t}(y|x) + \frac{\beta\eta}{1+\beta\eta}\phi_{\theta_{\textnormal{ref}}}(y|x)+\frac{\eta}{1+\beta\eta}\E_{y'\sim\pi_{\theta_t}(\cdot|x)}P_x(y,y')+Z_t(x)
\end{align}
for some function $Z_t:\gX\rightarrow\R$. We define a proxy $\varphi_t:\gX\times\gY\times\gY\rightarrow\R$ using the empirical win-rate as
\begin{align}\label{eq:varphi}
    \varphi_t(x,y,y')&\coloneqq \frac{1}{1+\beta\eta}\phi_{\theta_t}(y|x) + \frac{\beta\eta}{1+\beta\eta}\phi_{\theta_{\textnormal{ref}}}(y|x)+\frac{\eta}{1+\beta\eta}\widehat P_x(y,y')+Z_t(x).
\end{align}
Our main observation is that the update \eqref{eq:log_update} of $\phi_{\theta_{t}}(y|x)$ is approximating the conditional expectation of $\varphi_t$, that is, for all $(x,y)$,
$$\psi_t(x,y)\coloneqq\E_{y'\sim\pi_{\theta_t}(\cdot|x)} [\varphi_t(x,y,y')|x,y] .$$ 
Furthermore, this conditional expectation has the lowest risk, due to the following lemma:
\begin{lm}[Conditional mean minimizes the square loss]\label{lm:conditional_mean}
    For any two random variables $u,v$, we have
    \begin{equation}\label{eq:conditional_exe_regressor}
        \E_{u,v}\left[(v-\E_v(v|u))^2\right]\leq \E_{u,v}\left[(v-g(u))^2\right]
    \end{equation}
for any function $g$. In particular, the equality holds if and only if $g(u)=\E_v(v|u)$ almost everywhere on the support of the distribution of $u$.
\end{lm}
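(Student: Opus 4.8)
The plan is to prove the classical fact that the conditional expectation minimizes the mean squared error, using an orthogonality decomposition. First I would fix the function $g$ and write the deviation $v - g(u)$ as the sum of two pieces by inserting and subtracting the conditional mean:
\begin{equation*}
v - g(u) = \big(v - \E_v(v\mid u)\big) + \big(\E_v(v\mid u) - g(u)\big).
\end{equation*}
The first term is the ``residual'' and the second term is measurable with respect to $u$. Squaring and taking expectations gives three terms: the square of the residual, the square of the $u$-measurable difference (which is nonnegative), and a cross term. The heart of the argument is to show the cross term vanishes.

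Next I would handle the cross term using the tower property of conditional expectation. Writing $h(u) \coloneqq \E_v(v\mid u) - g(u)$, which is a function of $u$ alone, I would compute
\begin{equation*}
\E_{u,v}\big[\big(v - \E_v(v\mid u)\big)\, h(u)\big] = \E_u\Big[h(u)\,\E_v\big[v - \E_v(v\mid u)\,\big|\,u\big]\Big],
\end{equation*}
and observe that the inner conditional expectation is $\E_v(v\mid u) - \E_v(v\mid u) = 0$, since $\E_v(v\mid u)$ is $u$-measurable and hence passes through the conditioning. Therefore the entire cross term is zero, and we are left with
\begin{equation*}
\E_{u,v}\big[(v - g(u))^2\big] = \E_{u,v}\big[(v - \E_v(v\mid u))^2\big] + \E_u\big[h(u)^2\big] \geq \E_{u,v}\big[(v - \E_v(v\mid u))^2\big],
\end{equation*}
which is exactly the desired inequality \eqref{eq:conditional_exe_regressor}.

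For the equality characterization, I would note that the gap between the two sides equals $\E_u[h(u)^2]$, a nonnegative quantity that vanishes if and only if $h(u) = 0$ almost everywhere with respect to the distribution of $u$, i.e. $g(u) = \E_v(v\mid u)$ almost everywhere on the support of $u$. This completes both assertions of the lemma.

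This is a textbook result with no genuine obstacle; the only point requiring care is justifying that the cross term vanishes, which rests entirely on the tower property together with the $u$-measurability of $\E_v(v\mid u)$. I would also implicitly assume the relevant second moments are finite so that all expectations are well-defined, which is automatic in the paper's setting since the logits, the reference logits, and the bounded judger values $\widehat P_x \in \{0, 1/2, 1\}$ all take values in bounded ranges, making $\varphi_t$ and hence $v = \varphi_t(x,y,y')$ square-integrable.
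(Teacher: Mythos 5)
Your proof is correct and follows essentially the same route as the paper: both insert and subtract $\E_v(v\mid u)$, expand the square, and show the cross term vanishes (the paper does this by writing out the integral against $F(v\mid u)$, which is exactly your tower-property argument), then read off the equality case from the nonnegative remainder $\E_u[h(u)^2]$. No gaps.
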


To invoke Lemma~\ref{lm:conditional_mean}, we assume the LLM is expressive enough, such that $\psi_t$ can be represented by $\phi_\theta$:
\begin{asmp}[expressive power]\label{asmp:expressive}
    For any $t\in\NN$, there exists $\theta_{t+1}^\star\in\Theta$ such that 
    \begin{equation}\label{eq:expressive}
       \forall (x,y)\in \gX\times\gY:\quad \phi_{\theta_{t+1}^\star}(y|x)=\psi_t(x,y).
    \end{equation}
\end{asmp}
Note that $\supp(\rho)=\gX$, $\supp(\pi^{(t)}(\cdot|x))=\gY$ for all $x\in\gX$, $t\in\NN$. Thus under Assumption~\ref{asmp:expressive}, by Lemma~\ref{lm:conditional_mean} we know that for all $t$, $\theta_{t+1}^\star\in\Theta$ satisfies \eqref{eq:expressive} if and only if
\begin{equation}\label{eq:theta_t+1*}
    \theta_{t+1}^\star\in\argmin_{\theta\in\Theta} R_t^{\textnormal{sq}}(\theta),
\end{equation}
where we define {\em the squared expected risk} at the $t$-th iteration $R_t^{\textnormal{sq}}(\theta)$ as
$$R_t^{\textnormal{sq}}(\theta)\coloneqq\underset{x\sim\rho,\atop y,y'\sim\pi_{\theta_t}(\cdot|x)}{\E}\left[\left(\varphi_t(x,y,y')-\phi_\theta(y|x)\right)^2\right].$$ 
In implementation, at each iteration $t$, we shall approximate $\theta_{t+1}^\star$ by minimizing the {\em empirical} risk: we sample $x_i^{(t)}\sim \rho$, $y_i^{(t)},{y_{i}'}^{(t)}\sim \pi_{\theta_t}(\cdot|x_i^{(t)})$, $i\in[M]$, and minimize the empirical risk $R^{\textnormal{sq}}_{t,M}(\theta)$ defined as 
\begin{align}\label{eq:empirical_risk}
 R_{t,M}^{\textnormal{sq}}(\theta)&\coloneqq \frac{1}{M}\sum_{i=1}^M\big(\varphi_t(x_i^{(t)},y_i^{(t)},{y_{i}'}^{(t)})-\phi_\theta(y_i^{(t)}|x_i^{(t)})\big)^2.\tag{SQ}
\end{align}

We summarize the update procedure in Algorithm~\ref{alg:practice}.

\begin{algorithm}[!htb]
\caption{\WIND (sample-efficient version)}
\label{alg:practice}
\begin{algorithmic}[1]
\STATE \textbf{Input:} reference parameter $\theta_{\textnormal{ref}}$, initial parameter $\theta_0$, regularization coefficient $\beta>0$, learning rate $\eta>0$, training set $\gD$, iteration number $T\in\NN_+$, sampling number $M\in\NN_+$.
\FOR{$t = 0,1,\cdots,T-1$}
\STATE Sample $x_i^{(t)}\sim \rho$, $y_i^{(t)},{y_{i}'}^{(t)}\sim \pi_{\theta_t}(\cdot|x_i^{(t)})$, $i\in[M]$.

\STATE $\theta_{t+1}\gets \argmin_{\theta\in\Theta} R_{t,M}(\theta). $ \trianglecomment{$R_{t,M}$ could be $R_{t,M}^{\textnormal{sq}}$, $R_{t,M}^{\textnormal{kl}}$, $R_{t,M}^{\textnormal{nce}}$, etc.}

\ENDFOR
\STATE \textbf{Return} $\theta_T$.
\end{algorithmic}
\end{algorithm}

\paragraph{Alternative risk functions.} By utilizing different variational forms, we could derive objectives different from \eqref{eq:empirical_risk}. For illustration, we provide two alternatives of $R^{\textnormal{sq}}_{t,M}(\theta)$ by using the KL divergence and the NCE loss, respectively (see Appendix~\ref{sec_app:other_objectives} 
for derivations):
    \begin{align}\label{eq:KL_loss}
        R_{t,M}^{\textnormal{kl}}(\theta)&\coloneqq-\frac{1}{M}\sum_{i=1}^M \big[\mathbbm{1}_{\{v_i=1\}}\log \zeta_\theta(x,y)+\mathbbm{1}_{\{v_i=0\}}\log (1-\zeta_\theta(x,y))\big],\tag{KL}
    \end{align}
    and
    \begin{align}\label{eq:NCE_loss}
        &R_{t,M}^{\textnormal{nce}}(\theta)\coloneqq-\frac{1}{M}\sum_{i=1}^M \Big[\left(\mathbbm{1}_{\{v_i=1\}}+\mathbbm{1}_{\{v_i'=0\}}\right)\log \frac{\zeta_\theta(x_i,y_i)}{\zeta_\theta(x_i,y_i)+p}+\left(\mathbbm{1}_{\{v_i=0\}}+\mathbbm{1}_{\{v_i'=1\}}\right)\log \frac{p}{\zeta_\theta(x_i,y_i)+p}\Big],\tag{NCE}
    \end{align}
where $v_i\sim \mathsf{Ber}(\widehat P_{x_i}(y_i,y_i'))$, $v_i'\sim \mathsf{Ber}(p)$, $p\in(0,1)$ is a hyperparameter, $\zeta_{\theta}$ is defined as
\begin{align*}
    \zeta_{\theta}(x,y)&= \frac{1+\beta\eta}{\eta}\phi_{\theta}(y|x)-\frac{1}{\eta}\phi_{\theta_t}(y|x) \notag-\beta\phi_{\theta_{\textnormal{ref}}}(y|x)-\frac{1+\beta\eta}{\eta}Z_t(x),
\end{align*}
and $\mathbbm{1}_{\{A\}}$ is the indicator function that equals 1 if $A$ is true and 0 otherwise. 

When we use the regression objective~\eqref{eq:empirical_risk}, our \WIND algorithm shares a similar form to SPPO~\citep{wu2024self}. However, \WIND differs from SPPO  in the following aspects: (i) we solve the regularized game with the KL regularization term $\beta\dist{\pi'}{\piref}$. This term is crucial in practice and is also considered in other iterative \bond methods~\citep{dong2023raft,sessa2024bondaligningllmsbestofn};  (ii) our sampling scheme is more sample-efficient: in SPPO, for each $x_i$, they sample $K$ responses $\{y_{i,j}\}_{j\in[K]}$ to estimate $\E_{y'\sim\pi_{\theta_t}(\cdot|x_i)}[P_{x_i}(y_{i,j},y')]$ by $\frac{1}{K}\sum_{k=1}^K P_{x_i}(y_{i,j},y_{i,k})$ for each $j\in[K]$. On the other hand, Lemma~\ref{lm:conditional_mean} implies estimating the conditional mean with multiple samples is unnecessary and for each $x_i$,  sampling two responses $y_i$ and $y_i'$ is enough; (iii) we allow different risk functions beyond the squared loss.  


\subsection{Convergence analysis}\label{sec:practical_convergence}

We provide a finite-sample complexity guarantee for Algorithm~\ref{alg:practice} when the risk $R_{t,M} = R_t^{\textnormal{sq}}$ is the squared loss. Our results could be easily extended to other risks. We define the reward model approximation error $\delta_P$ as
\begin{equation}\label{eq:model_error}
    \delta_P\coloneqq \max_{x\in\gX, y,y'\in\gY}\left|P_x(y,y')-\widehat P_x(y,y')\right|. 
\end{equation}

We require the following assumptions to prove the convergence of Algorithm~\ref{alg:practice}. 
The first assumes $\phi_\theta$ is differentiable and $\Theta$, $Z_t$ is bounded.
\begin{asmp}[differentiability and boundedness]\label{asmp:Lipschitz}
    The parameter space $\Theta$ is compact, $\phi_\theta(y|x)$ is differentiable w.r.t. $\theta$ for any $(x,y)\in\gX\times\gY$, and $Z_t$ in \eqref{eq:log_update} is uniformly bounded, i.e., $\exists Z\geq 0$ such that $|Z_t(x)|\leq Z$ for all $x\in\gX$ and $t\in\NN$.
\end{asmp}
Assumption~\ref{asmp:Lipschitz} guarantees the (uniform) boundedness of $\phi_{\theta}$. 
Especially, there exists $L_0>0$ such that for any $\theta,\theta'\in\Theta$ and $(x,y)\in\gX\times\gY$, we have
\begin{align}\label{eq:Lipschitz}
    |\phi_{\theta}(y|x)-\phi_{\theta'}(y|x)|\leq L_0.
\end{align}
 
Assumption~\ref{asmp:Lipschitz} also guarantees there exist $L,C>0$ such that for all $x\in\gX,y,y'\in\gY,\theta\in\Theta$ and $t$, we have
\begin{equation}\label{eq:L}
    \norm{\nabla_\theta\left[\left(\varphi_t(x,y,y')-\phi_\theta(y|x)\right)^2\right]}_2\leq L
\end{equation}
and
\begin{equation}\label{eq:M}
    \left(\varphi_t(x,y,y')-\phi_\theta(y|x)\right)^2\leq C.
\end{equation}

The next assumption controls the concentrability coefficient, which is commonly used in the RL literature, see \citet{yuan2023linear, munos2003error,munos2005error,munos2008finite,yang2023federated} for example.

\begin{asmp}[concentrability coefficient]\label{asmp:concentr}
    For Algorithm~\ref{alg:practice}, there exists finite $\concentr>0$ such that for all $t\in\NN$ and $x\in\gX$, we have
    \begin{align*}
        \E_{y\sim \piref(\cdot|x)}\left[\left(\frac{\pi^\star_\beta(y|x)}{\piref(y|x)}\right)^2\right]\leq \concentr \qquad \mbox{and}\qquad  \E_{y\sim \piref(\cdot|x)}\left[\left(\frac{\pi_{\theta_{t+1}}(y|x)}{\piref(y|x)}\right)^2\right]\leq \concentr.
        \end{align*}
\end{asmp} 

We define
\begin{align}\label{eq:C_1}
    C_1\coloneqq\exp\left( \frac{2}{\beta}\left(\delta_{P}+\frac{1+\beta\eta}{\eta}L_0+1\right)\right)C_\pi.
\end{align}

We also assume for every $t$, the expected risk $R_t$ and the empirical risk $R_{t,N}$ both satisfy Polyak-Łojasiewicz (PL) condition, which has been proven to hold for over-parameterized neural networks including transformers~\citep{liu2022loss,wu2024convergence,yang2024context}.
\begin{asmp}[Polyak-Łojasiewicz condition]\label{asmp:PL}
    For all $t\in\NN$, risk $R_t$ and empirical risk $R_{t,M}$ both satisfy the Polyak-Łojasiewicz condition with parameter $\mu>0$, i.e., for all $t\in\NN$ and $\theta\in\Theta$, we have 
    $$\frac{1}{2}\norm{\nabla_{\theta} R_t(\theta)}_2^2\geq \mu \left(R_t(\theta)-R_t(\theta_{t+1}^\star)\right)$$
    and
    $$\frac{1}{2}\norm{\nabla_{\theta} R_{t,N}(\theta)}_2^2\geq \mu \left(R_{t,M}(\theta)-R_{t,M}(\theta_{t+1})\right).$$
\end{asmp}

\begin{rmk}[Assumption~\ref{asmp:PL} is satisfied with linear function approximation]\label{rmk:linear_FA}
     We consider a special case where $\phi_\theta(y|x)=\phi(x,y)^\top\theta$ for all $(x,y)\in\gX\times\gY$, where $\phi(x,y)$ are the feature maps. If 
     for all $t\in\NN$, we have $$\E_{x\sim\rho,y\sim\pi_{\theta_t}(\cdot|x)}\big[\phi(x,y)\phi(x,y)^\top\big]\geq\frac{\mu}{2}$$
and 
$$\frac{1}{M}\sum_{i=1}^M\phi(x_i^{(t)},y_i^{(t)})\phi(x_i^{(t)},y_i^{(t)})^\top\geq\frac{\mu}{2},$$ 
then it is straightforward to verify that $R_t$ and $R_{t,M}$ are both $\mu$-strongly convex, which indicates Assumption~\ref{asmp:PL} holds~\citep{karimi2016linear}.
\end{rmk}

The following theorem gives the convergence of Algorithm~\ref{alg:practice}. 

\begin{thm}[Convergence of Algorithm~\ref{alg:practice}]\label{thm:complexity}
    Let $\theta_0=\theta_{\textnormal{ref}}$ and $\eta\in(0,\beta]$ in Algorithm~\ref{alg:practice}. Under Assumption \ref{asmp:expressive},\ref{asmp:Lipschitz},\ref{asmp:concentr},\ref{asmp:PL}, for any $T\in\NN$ and $\delta\in(0,1)$, with probability at least $1-\delta$, Algorithm~\ref{alg:practice} satisfies:
\begin{align}\label{eq:convergence_practice}
      \dist{\pi^\star_\beta}{\pi_{\theta_{T}}}&\leq \left(\frac{1}{1+\beta\eta}\right)^T\dist{\pi^\star_\beta}{\pi_{\theta_{0}}}\notag\\
      &\quad+\frac{2}{\beta}\delta_P+\frac{2(1+\beta\eta)}{\beta\eta}\sqrt{C_1 C_r\log\left(\frac{T}{\delta}\right)}\sqrt{\frac{2L^2 \log M}{\mu(M-1)} +\frac{C+2L^2/\mu}{M}},
    \end{align}
    where $C_r$ is an absolute constant, 
    $C_1$, $L$, $\delta_P$, $C$, $\mu$ are defined in 
    \eqref{eq:C_1},  \eqref{eq:L}, \eqref{eq:model_error}, \eqref{eq:M}, Assumption~\ref{asmp:PL}, respectively.
\end{thm}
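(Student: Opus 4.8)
The plan is to establish a one-step ``contraction-plus-error'' recursion for $\dist{\pi^\star_\beta}{\pi_{\theta_{t+1}}}$ and then unroll it against the geometric factor. First I would introduce the idealized iterate $\widehat\pi^{(t+1)}$ obtained by applying the \emph{exact} update \eqref{eq:update_mirror_analytical} with the true preference $P_x$ to the current policy $\pi_{\theta_t}$. The one-step version of Theorem~\ref{thm:rate} (which holds for an arbitrary starting policy, not only along the Algorithm~\ref{alg:iter_PMDA} trajectory) gives the contraction $\dist{\pi^\star_\beta}{\widehat\pi^{(t+1)}}\le \frac{1}{1+\beta\eta}\dist{\pi^\star_\beta}{\pi_{\theta_t}}$. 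The key is the exact identity
\[
\dist{\pi^\star_\beta}{\pi_{\theta_{t+1}}}=\dist{\pi^\star_\beta}{\widehat\pi^{(t+1)}}+\E_{x\sim\rho,\,y\sim\pi^\star_\beta(\cdot|x)}\Big[\log\tfrac{\widehat\pi^{(t+1)}(y|x)}{\pi_{\theta_{t+1}}(y|x)}\Big],
\]
which replaces the (non-metric) triangle inequality by an honest accounting of the per-step error.

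Next I would control the error term by writing the log-ratio as a difference of logits. By construction $\widehat\pi^{(t+1)}$ has logit equal to the exact conditional mean built from $P_x$, while $\phi_{\theta_{t+1}}$ approximates $\psi_t$ built from $\widehat P_x$. Their difference splits into (i) a \emph{bias} term $\frac{\eta}{1+\beta\eta}\E_{y'\sim\pi_{\theta_t}}[P_x(y,y')-\widehat P_x(y,y')]$, uniformly bounded by $\frac{\eta}{1+\beta\eta}\delta_P$, and (ii) a \emph{statistical} term $\Delta_t\coloneqq\psi_t-\phi_{\theta_{t+1}}$. After absorbing the ($y$-independent) normalization through a log-sum-exp estimate, the bias contributes at most $\frac{2\eta}{1+\beta\eta}\delta_P$ per step, which---summed against $\sum_k(1+\beta\eta)^{-k}=\frac{1+\beta\eta}{\beta\eta}$---produces exactly the $\frac{2}{\beta}\delta_P$ term.

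The statistical term is the technical heart. Using Lemma~\ref{lm:conditional_mean} and Assumption~\ref{asmp:expressive}, the excess-risk identity $R_t^{\mathrm{sq}}(\theta)-R_t^{\mathrm{sq}}(\theta_{t+1}^\star)=\E_{x\sim\rho,\,y\sim\pi_{\theta_t}}[(\phi_\theta(y|x)-\psi_t(x,y))^2]$ shows that controlling $\Delta_t$ in $L^2(\rho\otimes\pi_{\theta_t})$ amounts to bounding the excess population risk of the empirical minimizer $\theta_{t+1}$. I would prove a fast-rate uniform-deviation bound between $R_{t,M}^{\mathrm{sq}}$ and $R_t^{\mathrm{sq}}$: boundedness \eqref{eq:M} and the gradient bound \eqref{eq:L} give the slow $C/M$ piece, while the PL condition (Assumption~\ref{asmp:PL}) localizes the comparison and yields the fast $\frac{L^2\log M}{\mu(M-1)}$ piece, with a union bound over $t\in[T]$ supplying the $\log(T/\delta)$ factor. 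To convert this $L^2(\pi_{\theta_t})$ control into the $L^1(\pi^\star_\beta)$ quantity appearing in the error term, I would apply Cauchy--Schwarz and change measure to $\piref$ via Assumption~\ref{asmp:concentr}; bounding the density ratios $\pi^\star_\beta/\piref$ and $\pi_{\theta_{t+1}}/\piref$---the former through the fixed-point form \eqref{eq:optimal_policy_regularized}, which scales like $\exp(1/\beta)$, together with the logit bound $L_0$---produces the constant $C_1$ of \eqref{eq:C_1}.

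Finally I would collect the per-step bias and statistical errors into a single recursion $\dist{\pi^\star_\beta}{\pi_{\theta_{t+1}}}\le\frac{1}{1+\beta\eta}\dist{\pi^\star_\beta}{\pi_{\theta_t}}+\epsilon_t$ and unroll from $t=0$ to $T-1$ using $\theta_0=\theta_{\mathrm{ref}}$; the geometric sum of the contraction factors converts the per-step errors into the stated $\frac{2}{\beta}\delta_P$ and $\frac{2(1+\beta\eta)}{\beta\eta}\sqrt{C_1 C_r\log(T/\delta)}\sqrt{\frac{2L^2\log M}{\mu(M-1)}+\frac{C+2L^2/\mu}{M}}$ contributions. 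I expect the main obstacle to be the statistical step: establishing the fast-rate generalization bound with the precise stated form under only the PL condition (rather than strong convexity), and then pushing it through the change of measure so that the $\beta$-dependence lands correctly inside $C_1$.
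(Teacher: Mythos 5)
Your overall architecture matches the paper's proof: a one-step contraction-plus-error recursion, identification of the per-step statistical error with the excess population risk via Lemma~\ref{lm:conditional_mean} and Assumption~\ref{asmp:expressive}, Cauchy--Schwarz plus a change of measure to $\piref$ through Assumption~\ref{asmp:concentr}, and unrolling against $\sum_s(1+\beta\eta)^{-s}\le\frac{1+\beta\eta}{\beta\eta}$. Your packaging of the recursion is different in form --- you insert an idealized exact iterate $\widehat\pi^{(t+1)}$, use the KL chain identity, and apply Jensen to the normalization --- whereas the paper views the actual update as an \emph{exact} mirror step on a perturbed operator $\widehat F^{(t)}_x$ and runs the Bregman three-point argument \eqref{eq:p3}. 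This difference is largely cosmetic: after your Jensen step the error reduces to precisely the paper's inner product $\langle\delta^{(t)}_x,\pi^\star_{\beta,x}-\pi_{\theta_{t+1},x}\rangle$ in \eqref{eq:bound_error_pre}, and your claimed ``one-step contraction from an arbitrary start'' itself requires the same three-point machinery (or citing \citet{sokota2023unifiedapproachreinforcementlearning}), so nothing is simplified. One misattribution to fix: the factor $\exp\left(\frac{2}{\beta}(\cdot)\right)$ in $C_1$ of \eqref{eq:C_1} does \emph{not} come from the fixed-point form of $\pi^\star_\beta$; the ratios $\pi^\star_\beta/\piref$ and $\pi_{\theta_{t+1}}/\piref$ are simply \emph{assumed} controlled by Assumption~\ref{asmp:concentr}. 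What must actually be derived is a bound on $\piref/\pi_{\theta_t}$ (the excess risk lives in $L^2(\pi_{\theta_t})$, so one must pass from $\pi_{\theta_t}$ to $\piref$ before invoking concentrability), which the paper obtains by unrolling the noisy parameter-space update to get \eqref{eq:asmp_ratio}; a direct argument from the logit bound \eqref{eq:Lipschitz} would also work here.

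The genuine gap is the statistical step, which you yourself flag as the main obstacle. You propose a ``fast-rate uniform-deviation bound'' between $R^{\textnormal{sq}}_{t,M}$ and $R^{\textnormal{sq}}_t$, with the PL condition supplying localization. A uniform-deviation (empirical process) argument over the compact set $\Theta$ necessarily brings in covering numbers of the parameter space, hence dimension-dependent factors, and will not produce the stated dimension-free bound $C_r\left(\frac{2L^2\log M}{\mu(M-1)}+\frac{C+2L^2/\mu}{M}\right)\log\left(\frac{T}{\delta}\right)$ with $C_r$ an absolute constant; moreover, plain uniform convergence gives slow $1/\sqrt{M}$ rates, and PL alone does not localize an empirical-process bound the way it localizes an \emph{algorithmic} analysis. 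The paper closes this step with algorithmic stability rather than uniform convergence: PL plus the Lipschitz bound \eqref{eq:L} imply the ERM step is uniformly $\gamma$-stable with $\gamma=\frac{2L^2}{\mu(M-1)}$ \citep[Corollary~4]{charles2018stability}; PL implies the generalized Bernstein condition (Definition~\ref{def:bernstein_condition}) with $B=\frac{2L^2}{\mu}$ \citep{kang2022sharper}; and the high-probability excess-risk bound for stable ERM of \citet{klochkov2021stability} (Lemma~\ref{lm:generalization_bd}) then yields \eqref{eq:generalization_bd_ours}, with the union bound over $t\le T-1$ supplying $\log(T/\delta)$. Without this stability-based chain (or an equivalent substitute), your plan does not produce the bound in the precise form stated in the theorem.
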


Theorem~\ref{thm:complexity} indicates that, assuming no model approximation error, the total sample complexity for Algorithm~\ref{alg:practice} to reach $\varepsilon$-accuracy is 
$$2MT=\widetilde{O}\left(\left(\frac{1+\beta\eta}{\beta\eta}\right)^2\left(\frac{L^2}{\mu}+C\right)C_1 C_r\frac{1}{\varepsilon^2}\right).$$
In contrast with SPPO~\citep{wu2024self}, which only ensures average-iterate convergence without quantifying sample efficiency, our method has stronger theoretical guarantees, offering last-iterate convergence and explicit sample complexity bounds.

\section{Experiments}

We report our experimental results in this section.

\subsection{Contextual bandits}\label{sec_app:toy_exp}

In this section we conduct contextual bandit experiments to validate
Theorem~\ref{thm:wind_approx_relation}.
\paragraph{Experiments setup.} We set $|\mathcal{X}|=20,|\mathcal{Y}|=100$, and initialize $r(x_i,y_j)\overset{i.i.d}{\sim}\gN(0,1)$, where $i\in[|\mathcal{X}|],j\in[|\mathcal{Y}|]$, and $\gN(0,1)$ stands for the standard Gaussian distribution. We set $\piref$ and $\rho$ to be uniform distributions and randomly initialized $\pi^{(0)}$ in Algorithm~\ref{alg:iter_PMDA} using the Dirichlet distribution with parameters all set to be 1. For the distance metric, we use the average $\ell_1$ distance $D_{\ell_1}:\Delta^{\gX}_{\gY}\times \Delta_{\gY}^{\gX}\rightarrow \R$ defined as
\begin{align}\label{eq:average_l1_dist}
    D_{\ell_1}(\pi,\pi')\coloneqq \E_{x\sim\rho}\norm{\pi_x-\pi'_x}_1.
\end{align}
We conduct the following two experiments:
\begin{itemize}
    \item For the no-mixing case where $\alpha_1=1,\alpha_2=0$, we show the convergence of both iterative BoN (c.f. Algorithm~\ref{alg:iter_BoN}) and exact \WIND (c.f. Algorithm~\ref{alg:iter_PMDA}) to $\overline\pi^\star_0$: we plot the average $\ell_1$ distance between $\overline\pi^\star_0$ and the iterates for both algorithms. In this experiments we set learning rate $\eta$ in Algorithm~\ref{alg:iter_PMDA} to be 16.
    
    \item For the mixing case where $\alpha_1=\frac{\eta}{(1+\beta\eta)(n-1)}$ and $\alpha_2=\frac{n-1-\eta}{(1+\beta\eta)(n-1)}$, we verify that $\limitibon$ and $\pi^\star_\beta$ are very close to each other: we fix the iteration number $T=5000$ for both Algorithm~\ref{alg:iter_BoN} and \ref{alg:iter_PMDA}, and increase $\beta$ from 0.01 to 0.1 to plot the change of average $\ell_1$ distance between the final outputs of the two algorithms $D_{\ell_1}(\pi_T,\pi^{(T)})$ with respect to $\beta$. In this experiments we set $\eta=1$.
\end{itemize}

\begin{figure}[ht]
    \centering
    \begin{minipage}[c]{0.45\textwidth}
  \subfigure[no-mixing]{\label{subfig:no_mixing}
  \includegraphics[width=\linewidth]{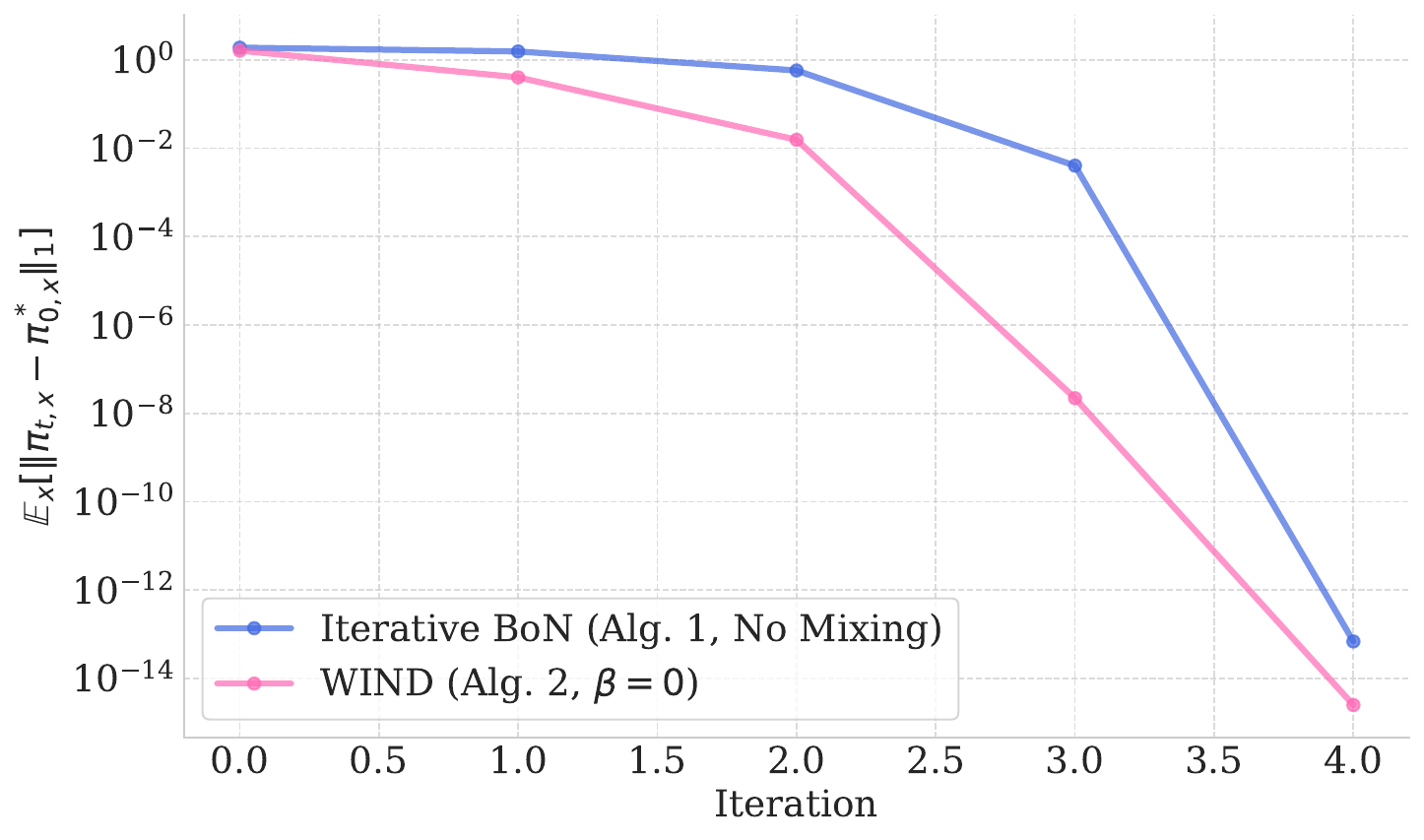}}
  \end{minipage}\hfill
  \begin{minipage}[c]{0.45\textwidth}
  \subfigure[mixing]{\label{subfig:mixing}
  \includegraphics[width=\linewidth]{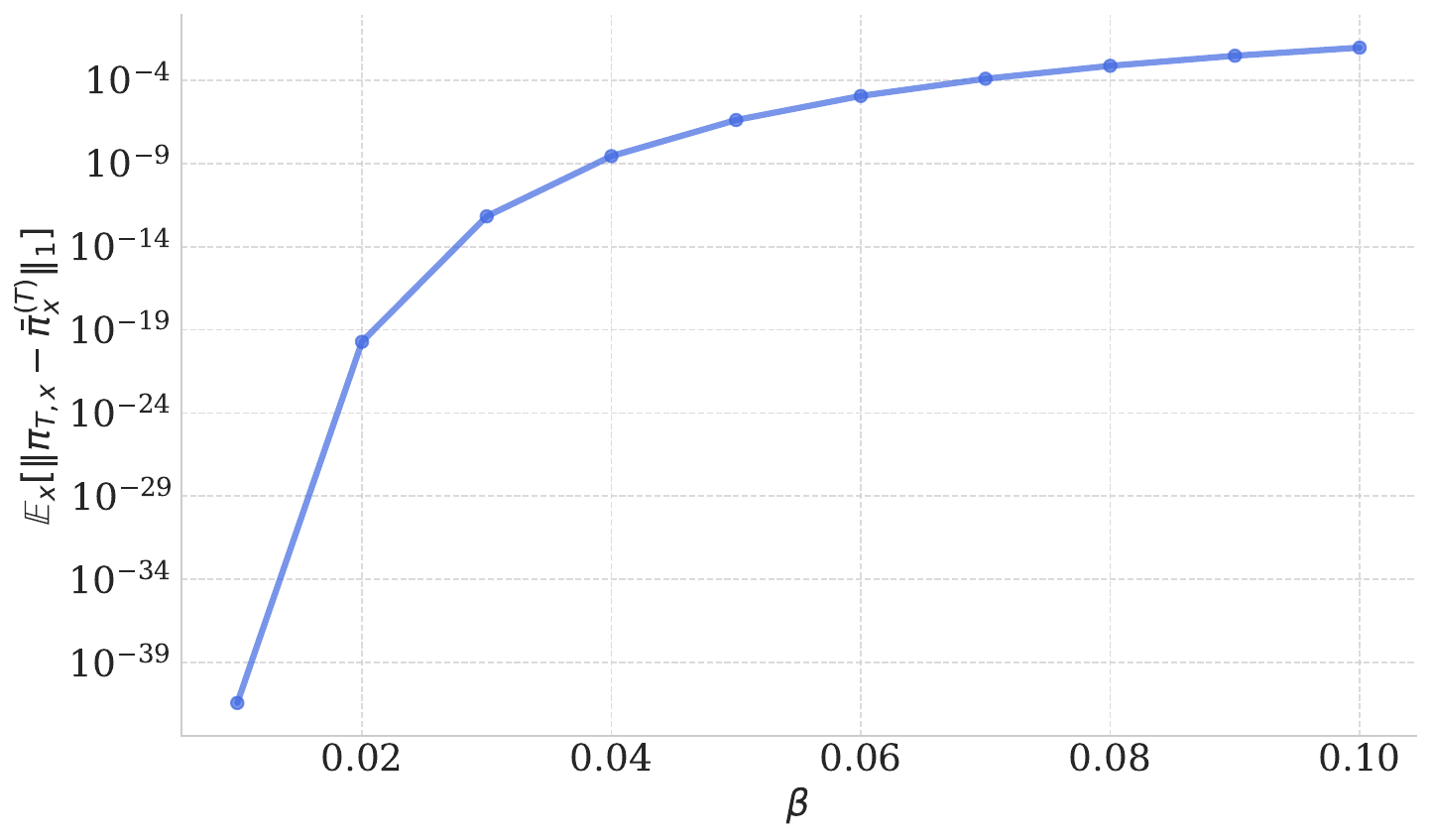}}
  \end{minipage}\hfill
    \caption{Empirical validation of Theorem~\ref{thm:wind_approx_relation} on contextual bandit experiments. For (a) the no-mixing case, we show the convergence of both iterative BoN (c.f. Algorithm~\ref{alg:iter_BoN}) and exact \WIND (c.f. Algorithm~\ref{alg:iter_PMDA}) to $\overline\pi^\star_0$; for (b) the mixing case, we show $\limitibon$ and $\pi^\star_\beta$ are very close to each other.}
    \label{fig:toy}
\end{figure}
\paragraph{Results.} Our results are presented in Figure~\ref{fig:toy}.
Figure~\ref{subfig:no_mixing} indicates that for the no-mixing case, both algorithms converge to $\overline\pi^\star_0$ with \WIND slightly faster than iterative BoN.  From Figure~\ref{subfig:mixing}, we can see that $\limitibon$ and $\pi^\star_\beta$ are very close to each other when $\beta$ is small and their distance approaches 0 very quickly as $\beta\rightarrow 0$, which corroborates~\eqref{eq:game_diff}.

\begin{table*}[t]
\centering
\begin{tabular}{c|c|c|c|ccc} 
\toprule
\multirow{2}{*}{Model} & \multirow{2}{*}{GSM8k} & \multirow{2}{*}{HellaSwag} & \multirow{2}{*}{MMLU} & \multicolumn{3}{c}{MT-Bench} \\  
                                &&&& 1st Turn & 2nd Turn  & Avg \\
\midrule
Llama-3-8B-SPPO Iter1 & 75.44 & 79.80 & 65.65 & 8.3813 & 7.6709 & 8.0283 \\ 
Llama-3-8B-SPPO Iter2 & 75.13 & 80.39 & 65.67 & 8.3875 & 7.4875 & 7.9375 \\ 
Llama-3-8B-SPPO Iter3 & 74.91 & \textbf{80.86} & 65.60 & 8.0500 & 7.7625 & 7.9063 \\ 
\midrule
Llama-3-8B-JBOND Iter1 & 76.12 & 77.70 & 65.73 & 8.2875 & 7.4281  & 7.8578 \\ 
Llama-3-8B-JBOND Iter2 & 75.74 & 77.47 & 65.85 & 8.2563 & 7.4403  & 7.8495 \\ 
Llama-3-8B-JBOND Iter3 & 76.12 & 77.36 & 65.83 & 8.2750 & 7.2767 & 7.7774 \\ 
\midrule
\rowcolor{lightgray}
Llama-3-8B-WIND Iter1 (Ours) & 75.82 & 78.73 & 65.77 & 8.2875 & 7.6875 & 7.9875 \\ 
\rowcolor{lightgray}
Llama-3-8B-WIND Iter2 (Ours) & 76.19 & 79.05 & 65.77 & 8.3625 & 7.7500 & 8.0563 \\ 
\rowcolor{lightgray}
Llama-3-8B-WIND Iter3 (Ours) & \textbf{77.18} & 79.31 & \textbf{65.87} & \textbf{8.5625} & \textbf{7.8354} & \textbf{8.2013} \\ 
\bottomrule
\end{tabular}
\caption{Results on GSM8k, HellaSwag, MMLU and MT-Bench.}
\label{tab:llama3-8b-results-1}
\end{table*}

\subsection{LLM alignment}
We follow the experiment setup in \cite{wu2024self} and Snorkel\footnote{\url{https://huggingface.co/snorkelai/Snorkel-Mistral-PairRM-DPO}}. We use Llama-3-8B-Instruct\footnote{\url{https://huggingface.co/meta-llama/Meta-Llama-3-8B-Instruct}} as the base pretrained model for baseline comparisons. For fair comparison, we chose the same prompt dataset UltraFeedback \citep{cui2023ultrafeedback} and round splits, and the same Pair-RM framework \citep{jiang2023llm} for the preference model as in \cite{wu2024self} and Snorkel. The learning rate is set to be $5 \times 10^{-7}$. In each iteration, we generate answers from $20000$ prompts in the UltraFeedback dataset to train the model. The global training batch size is $64$ ($4$ per device $\times$ $16$ GPUs). Our experiments are run on $16$ A$100$ GPUs, where each has $40$ GB memory. We modify the per-device batch size and gradient accumulation steps in SPPO GitHub repository\footnote{\url
{https://github.com/uclaml/SPPO}} while keeping the actual training batch size, to avoid out-of-memory error.

\paragraph{Baselines and Benchmarks.}
We consider two baselines: SPPO \citep{wu2024self} and a variant of J-\bond \citep{sessa2024bondaligningllmsbestofn}. Here we follow the exact same setting in their repository of the SPPO paper to reproduce SPPO results, with the only change being that we use different computation devices.

We consider 4 major evaluation benchmarks: GSM8k, HellaSwag, MMLU and MT-Bench. They evaluated the following capability:
\begin{itemize}
    \item GSM8k \citep{cobbe2021training} evaluates the mathematical reasoning at a grade school level.
    \item HellaSwag \citep{zellers2019hellaswag} measures the commonsense reasoning by letting language models select a choice to finish a half-complete sentence.
    \item MMLU \citep{hendrycks2020measuring} is a large-scale benchmark that encompasses a variety of tasks to measure the language models' knowledge.
    \item MT-Bench \citep{zheng2023judging} is also a LLM-as-a-judge benchmark that evaluates the LLM's multi-round chat capability. The scores given by GPT-4 is reported.
\end{itemize}

\paragraph{Results.} For traditional benchmarks (GSM8k, HellaSwag and MMLU), which do not involve using LLMs as the judges, the results are shown in Table~\ref{tab:llama3-8b-results-1}. The model Llama-3-8B-WIND of ours achieved optimal in the last iteration in GSM8k and MMLU, while performing better than the J-\bond variant and slightly worse than SPPO in HellaSwag. In fact, our method shows consistent improvement over iterations: for all three benchmarks, our method continues to improve with more iterations of training, while both SPPO and J-\bond variant show performance regressions with increasing number of iterations. For MT-Bench, Llama-3-8B-WIND achieves comparable results in comparison with SPPO, and outperforms J-\bond.
\begin{figure}[ht]
    \centering
        \centering
        \includegraphics[width=0.5\linewidth]{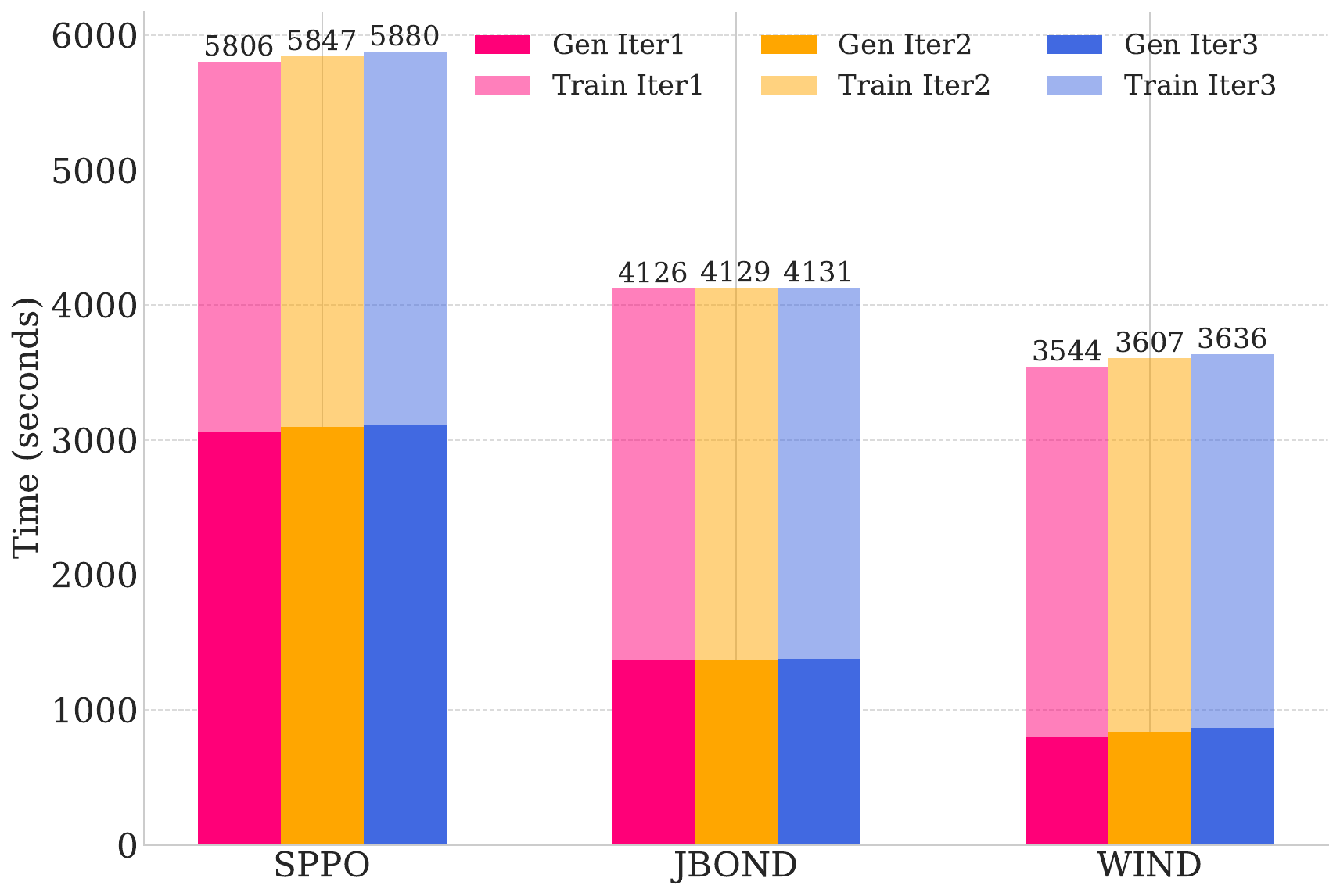}
        \caption{Run time (seconds) of different methods. }
        \label{tab:llama3-8b-running-time}
        \vspace{-0.2in}
\end{figure}
\paragraph{Runtime.} We also report the running time used by different methods in our setting. Since we base our implementation on the SPPO GitHub Repository, we only modify the objectives and the sampling process to reflect the difference between these algorithms. Figure~\ref{tab:llama3-8b-running-time} shows that our method achieves much better sample efficiency during data generation.
In sum, the proposed \WIND achieves superior performance with less computation cost, making iterative \bond practice applicable.

\section{Conclusion}

This work establishes a unified game-theoretic framework that connects iterative BoN with existing game-theoretic RLHF approaches. We present \WIND, a sample-efficient efficient algorithm for win rate dominance optimization with finite-sample guarantees, which provides an accelerated alternative to iterative \bond. Empirical validation on multiple public benchmarks demonstrates the effectiveness and efficiency of our approach compared to several state-of-the-art methods. In the future, it is interesting to explore schemes that incorporate exploration in the game-theoretic framework when the payoff matrix is accessed using bandit feedbacks \citep{yang2025incentivize}.

\section*{Acknowledgement}
 The work of T. Yang, Z. Wen, S. Cen and Y. Chi is supported in part by the grants NSF CIF-2106778, DMS-2134080 and ONR N00014-19-1-2404. In addition, the authors thank Xingyu Xu for the great discussion on the proof details.

{
\bibliographystyle{abbrvnat}
\bibliography{biblio}
}
 
\appendix

\appendix


\section{Other Objectives for Algorithm~\ref{alg:practice}}\label{sec_app:other_objectives}
In this section we give some possible alternative objectives for Algorithm~\ref{alg:practice} by utilizing different variational forms.

\paragraph{$f$-divergence objectives:} We could use $f$-divergence $D_f$ as the objective function. For a convex function $f$, $D_f$ is defined as
\begin{align}\label{eq:f_divergence}
    D_f(P||Q)\coloneqq \E_Q \left[f\left(\frac{P}{Q}\right)\right].
\end{align}
Let $$U=(x,y)\quad \text{and} \quad V=\begin{cases}
    1, \text{ if $r(x,y)> r(x,y')$,}\\
    0, \text{ if $r(x,y)<r(x,y')$,}\\
    z\sim Ber(1/2), \text{ if $r(x,y)=r(x,y')$,}
\end{cases}$$
Then $V|U$ is a function of $y'$ and $P_{V|U}=\mathsf{Ber}(\E_{y'}P_x(y,y'))$. Further define 
\begin{equation}\label{eq:psi}
    \zeta_{\theta}(x,y)= \frac{1+\beta\eta}{\eta}\phi_{\theta}(y|x)-\frac{1}{\eta}\phi_{\theta_t}(y|x) -\beta\phi_{\theta_{\textnormal{ref}}}(y|x)-\frac{1+\beta\eta}{\eta}Z_t(x).
\end{equation}
We let $Q_{V|U}=\mathsf{Ber}(\zeta_\theta(y|x))$, then by solving
\begin{equation}\label{eq:approx_update_f}
    \theta_{t+1}=\arg\min_{\theta}\E_U D_f(P_{V|U}||Q_{V|U}),
\end{equation}
we could approximate the update rule~\eqref{eq:log_update}.

In particular, when $f(x)=x\log x$, we have $D_f=D_{\text{KL}}$, and \eqref{eq:approx_update_f} becomes
\begin{align}\label{eq:loss_KL}
    \theta_{t+1}&=\arg\min_{\theta}\E_{x\sim\rho,y\sim\pi_{\theta_t}(\cdot|x)}\E_{v\sim P_{V|U}}\log\frac{P_{V|U}(v)}{Q_{V|U}(v)}\notag\\
    &=\arg\min_{\theta}\E_{x\sim\rho,y\sim\pi_{\theta_t}(\cdot|x)}\E_{v\sim P_{V|U}}[-\log Q_{V|U}(v)].
\end{align}
We could approximate the above objective by sampling $x_i\sim\rho,y_i,y_i'\sim\pi_{\theta_t}(\cdot|x_i)$ ($i\in[M]$) and minimizing the empirical risk:
\begin{align}\label{eq:loss_KL_empirical}
    \theta_{t+1}=\arg\min_{\theta}R_{t,M}^{\textnormal{kl}}(\theta)\coloneqq-\frac{1}{M}\sum_{i=1}^M \left[\mathbbm{1}_{\{v_i=1\}}\log \zeta_\theta(x,y)+\mathbbm{1}_{\{v_i=0\}}\log (1-\zeta_\theta(x,y))\right],
\end{align}
where 
\begin{equation}\label{eq:v}
    v_i=\begin{cases}
        1, \text{ if $r(x_i,y_i)> r(x_i,y_i')$,}\\
        0, \text{ if $r(x_i,y_i)<r(x_i,y_i')$,}\\
        z\sim Ber(1/2), \text{ if $r(x_i,y_i)=r(x_i,y_i')$.}
    \end{cases}
\end{equation}
For other $f$-divergence objectives, we may not be able to get rid of the unknown $P_{V|U}$ on the RHS of \eqref{eq:approx_update_f}, but \eqref{eq:approx_update_f} could provide a gradient estimator for the objective that allows us to optimize $\theta$ by stochastic gradient descent.

\paragraph{Noise contrastive estimation (NCE) objectives.} We could use NCE~\citep{gutmann2010noise} as objectives. NCE is a method to estimate the likelihood of a data point by contrasting it with noise samples. Let $D_\theta$ be the discriminator (parameterized by $\theta$) that distinguishes the true data from noise samples. The NCE objective is
\begin{equation}\label{eq:nce}
    \min_{\theta}\E_{z\sim P_{\textnormal{data}}}[-\log D_\theta(z)]+\E_{z\sim P_{\textnormal{noise}}}[-\log(1-D_\theta(z))],
\end{equation}
where the optimal solution of \eqref{eq:nce} is
\begin{align*}
    D_{\theta^\star}(z)=\frac{P_{\textnormal{data}}(z)}{P_{\textnormal{data}}(z)+P_{\textnormal{noise}}(z)}.
\end{align*}
In our case, we let $P_{\textnormal{data}}=P_{V|U}$ and $P_{\textnormal{noise}}=\mathsf{Ber}(p)$ where $p\in(0,1)$ is a hyperparameter. We also let 
$$D_\theta(1|x,y)=\frac{\zeta_\theta(x,y)}{\zeta_\theta(x,y)+p},\quad\text{and}\quad D_\theta(0|x,y)=\frac{p}{\zeta_\theta(x,y)+p},$$
where $\zeta_\theta$ is defined in \eqref{eq:psi}. Then we could approximate the update rule~\eqref{eq:log_update} by solving
\begin{align}\label{eq:nce_update}
    \theta_{t+1}&=\arg\min_{\theta}\E_{x\sim\rho,y\sim\pi_{\theta_t}(\cdot|x)}\E_{v\sim P_{V|U}}\left[-\mathbbm{1}_{\{v=1\}}\log \frac{\zeta_\theta(x,y)}{\zeta_\theta(x,y)+p}-\mathbbm{1}_{\{v=0\}}\log \frac{p}{\zeta_\theta(x,y)+p}\right]\notag\\
    &\quad\quad\quad\,\,+\E_{x\sim\rho,y\sim\pi_{\theta_t}(\cdot|x)}\E_{v\sim \mathsf{Ber}(p)}\left[-\mathbbm{1}_{\{v=1\}}\log \frac{p}{\zeta_\theta(x,y)+p}-\mathbbm{1}_{\{v=0\}}\log \frac{\zeta_\theta(x,y)}{\zeta_\theta(x,y)+p}\right].
\end{align}

The sample version of \eqref{eq:nce_update} becomes 
\begin{align}\label{eq:nce_update_sample}
    \theta_{t+1}=\arg\min_{\theta}R_{t,M}^{\textnormal{nce}}(\theta)\coloneqq-\frac{1}{M}\sum_{i=1}^M &\bigg[\left(\mathbbm{1}_{\{v_i=1\}}+\mathbbm{1}_{\{v_i'=0\}}\right)\log \frac{\zeta_\theta(x_i,y_i)}{\zeta_\theta(x_i,y_i)+p} \nonumber\\
    &  \quad +\left(\mathbbm{1}_{\{v_i=0\}}+\mathbbm{1}_{\{v_i'=1\}}\right)\log \frac{p}{\zeta_\theta(x_i,y_i)+p}\bigg],
\end{align}
where $v_i$ is defined in \eqref{eq:v} and $v_i'\sim \mathsf{Ber}(p)$.

\section{Proofs}\label{sec_app:proofs} 

For notational simplicity, for any policy $\pi\in \Delta_\gY^\gX$, we denote $\pi_x = \pi(\cdot |x)$ throughout the proof.

\subsection{Proof of Proposition~\ref{prop:existence}}\label{sec_app:proof_prop_exist}
We first prove the case when $\beta=0$. 
This part of proof is inspired by~\citet[Lemma~2.1]{swamy2024minimaximalist}.
Let
\begin{align*}
        \pi_1\coloneqq\arg\max_{\pi}\min_{\pi'} P(\pi\succ\pi'),\quad\pi_2\coloneqq\arg\min_{\pi'}\max_{\pi} P(\pi\succ\pi'),
\end{align*}
i.e., $(\pi_1,\pi_2)$ is a Nash equilibrium of \eqref{eq:minmax} (which is guaranteed to exist since the policy space is compact).
Then $\forall \pi,\pi'$ and $\forall x\in \gX$, we have:
\begin{equation*}
    \pi_{1,x}^\top P_x \pi'_x\geq \pi_{1,x}^\top P_x \pi_{2,x}\geq \pi_{x}^\top P_x \pi_{2,x},
\end{equation*}
which is equivalent to
\begin{equation*}
    (\pi'_x)^\top P_x^\top \pi_{1,x} \geq \pi_{2,x}^\top P_x^\top \pi_{1,x}\geq \pi_{2,x}^\top P_x^\top \pi_x.
\end{equation*}
Note that 
\begin{equation}\label{eq:shifted_skew_symmetry}
    P_x+P_x^\top=J,
\end{equation}
where $J\in\R^{|\gY|\times|\gY|}$ is the matrix of all ones. This gives
\begin{equation*}
    -(\pi'_x)^\top P_x \pi_{1,x} \geq -\pi_{2,x}^\top P_x \pi_{1,x}\geq -\pi_{2,x}^\top P_x \pi_x,
\end{equation*}
i.e.,
\begin{equation*}
    (\pi'_x)^\top P_x \pi_{1,x} \leq \pi_{2,x}^\top P_x \pi_{1,x}\leq \pi_{2,x}^\top P_x \pi_x,\,\, \forall \pi,\pi'\in\triangle_\gY^\gX,\,\,\forall x\in \gX.
\end{equation*}
This implies $(\pi_2,\pi_1)$ is also a Nash equilibrium of \eqref{eq:minmax}. Then by the interchangeability of Nash equilibrium strategies for two-player zero-sum games~\citep{nash1950non}, $(\pi_1,\pi_1)$ and $(\pi_2,\pi_2)$ are both the Nash equilibria of \eqref{eq:minmax}, which indicates that $\pi_1,\pi_2$ are both the solutions of \eqref{eq:optimal_policy_regularized}.

Next we prove the case when $\beta>0$.
When $\beta>0$, due to the strong concavity-convexity of the  minmax problem~\eqref{eq:minmax}, there exists a unique Nash equilibrium $(\pi_1^\star,\pi_2^\star)$ of it. Moreover, it is straightforward to compute that $(\pi_1^\star,\pi_2^\star)$ satisfies the following relation:
\begin{equation}\label{eq:zero_sum_regularized_sol}
\forall x\in\gX:\quad
    \begin{cases}
        \pi_1^\star(\cdot|x)\propto \pi_{\text{ref}}(\cdot|x)\circ\exp\left(\frac{1}{\beta}P_x\pi_2^\star(\cdot|x)\right), \\
        \pi_2^\star(\cdot|x)\propto \pi_{\text{ref}}(\cdot|x)\circ\exp\left(-\frac{1}{\beta}P_x^\top\pi_1^\star(\cdot|x)\right),
    \end{cases}
\end{equation}
where we use $\circ$ to denote the element-wise product of two vectors.

Again, using \eqref{eq:shifted_skew_symmetry}, we have
\begin{equation*}
\forall x\in\gX:\quad
    \begin{cases}
        \pi_1^\star(\cdot|x)\propto \pi_{\text{ref}}(\cdot|x)\circ\exp\left(\frac{1}{\beta}P_x\pi_2^\star(\cdot|x)\right), \\
        \pi_2^\star(\cdot|x)\propto \pi_{\text{ref}}(\cdot|x)\circ\exp\left(\frac{1}{\beta}P_x\pi_1^\star(\cdot|x)\right),
    \end{cases}
\end{equation*}
which implies $(\pi_2^\star,\pi_1^\star)$ is also a Nash equilibrium of \eqref{eq:minmax}. By the uniqueness of the Nash equilibrium we immediately know that $\pi_1^\star=\pi_2^\star$. Letting $\pi^\star_\beta=\pi_1^\star=\pi_2^\star$, we have $\pi^\star_\beta$ satisfies~\eqref{eq:optimal_policy_regularized}. 

On the other hand, if $\pi^\star_\beta$ is the solution of \eqref{eq:optimal_policy_regularized}, then $(\pi^\star_\beta, \pi^\star_\beta)$ satisfies \eqref{eq:zero_sum_regularized_sol} and thus is a Nash equilibrium of \eqref{eq:minmax}. In addition, by the uniqueness of \eqref{eq:minmax}, we deduce that \eqref{eq:optimal_policy_regularized} has a unique solution.


\subsection{Proofs of Theorem~\ref{thm:approx_wind} and Theorem~\ref{thm:wind_approx_relation}}\label{sec_app:proof_approx_wind}

We merge Theorem~\ref{thm:approx_wind} and Theorem~\ref{thm:wind_approx_relation} into the following theorem (recall we define $\overline{P}_x$ in \eqref{eq:P_bar}):

\begin{thm}[solution to iterative BoN (formal)]\label{thm:approx_wind_formal}
    Let $\piref\in \text{relint}\left(\triangle_\gY^\gX\right)$ and $n\geq 2$ in Algorithm~\ref{alg:iter_BoN}. Then $\lim_{T\rightarrow\infty}\pi_T$ exists in the following two cases:
    \begin{itemize}
        \item (No-mixing)  $\alpha_{1}=1,\alpha_{2}=0$. In this case $\overline\pi^\star_0\coloneqq\lim_{T\rightarrow\infty}\pi_T$ is a solution to both \eqref{eq:log_fixed_point} and \eqref{eq:optimal_policy_regularized} with $\beta=0$.
        \item (Mixing) $\alpha_1=\frac{\eta}{(1+\beta\eta)(n-1)}$, $\alpha_2=\frac{n-1-\eta}{(1+\beta\eta)(n-1)}$ for any $\beta,\eta>0$. In this case $\overline\pi^\star_\beta\coloneqq\lim_{T\rightarrow\infty}\pi_T$ satisfies:
        \begin{align}\label{eq:solution_regularized}
            \limitibon&\in\arg\max_{\pi} \underset{x\sim\rho,\atop y\in\pi(\cdot|x)}{\E}\log \underset{y'\in\limitibon(\cdot|x)}{\E}\overline P_x(y\succeq y')- \beta\dist{\pi}{\piref}.
        \end{align}
        
       Moreover, if 
       \begin{align}\label{eq:beta_bd}
           \beta\leq \min_{x\in\gX,\atop y\in\gY\backslash\gY^\star(x)}\left\{\frac{\sum_{y^\star\in\gY^\star(x)}\piref(y^\star|x)}{4\max\left\{\log\frac{\piref(y|x)}{\underset{y^\star\in\gY^\star(x)}{\max}\piref(y^\star|x)},0\right\}}\right\},
       \end{align}
       then for all $x\in\gX$, we have
       \begin{align}\label{eq:diff_limit}
           \quad\norm{\overline\pi^\star_{\beta,x}-\pi^\star_{\beta,x}}_1\leq 4(|\gY|-|\gY^\star(x)|)
           e^{\frac{-\sum_{y^\star\in\gY^\star(x)} \piref(y^\star|x)}{4\beta}}\rightarrow 0\text{ as }\beta\rightarrow 0.
       \end{align}
    \end{itemize}
\end{thm}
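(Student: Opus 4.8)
The plan is to first obtain a closed form for the Best-of-$N$ operator and then recognize the iterative-BoN update as a magnetic mirror-descent step for a log-transformed win-rate game, whose fixed point is exactly \eqref{eq:solution_regularized}. Writing $C_\pi(y|x)\coloneqq\E_{y'\sim\pi(\cdot|x)}\overline P_x(y\succeq y')$ for the (weak) win probability of $y$ against a fresh draw from $\pi$, a direct counting argument over the $n$ i.i.d.\ draws gives $\pi^{(n)}(y|x)\propto \pi(y|x)\,C_\pi(y|x)^{\,n-1}$ in the generic tie-free case, with an analogous ratio formula in the presence of ties. Taking logarithms linearizes the BoN map, so that the mixing update $\pi_{t+1}\propto(\pi_t^{(n)})^{\alpha_1}\pi_t^{\alpha_2}\piref^{1-\alpha_1-\alpha_2}$ becomes, after substituting $\alpha_1=\tfrac{\eta}{(1+\beta\eta)(n-1)}$, $\alpha_2=\tfrac{n-1-\eta}{(1+\beta\eta)(n-1)}$ and using $\alpha_1+\alpha_2=\tfrac1{1+\beta\eta}$, $\alpha_1(n-1)=\tfrac{\eta}{1+\beta\eta}$, $1-\alpha_1-\alpha_2=\tfrac{\beta\eta}{1+\beta\eta}$, exactly the update \eqref{eq:update_mirror_analytical} with the linear payoff $\E_{y'}P_x(y,y')$ replaced by the log win rate $\log C_\pi(y|x)$. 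Setting $\pi_{t+1}=\pi_t$ then recovers the fixed point $\limitibon(y|x)\propto\piref(y|x)\,C_{\limitibon}(y|x)^{1/\beta}$, i.e.\ \eqref{eq:solution_regularized}.

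Second, I would establish convergence of this iteration. For the no-mixing case ($\beta=0$, $\alpha_1=1$) the update is simply $\pi_{t+1}=\pi_t^{(n)}$; since for $r(x,y)>r(x,\tilde y)$ the ratio $\pi_{t+1}(y|x)/\pi_{t+1}(\tilde y|x)=\tfrac{\pi_t(y|x)}{\pi_t(\tilde y|x)}\big(\tfrac{C_{\pi_t}(y|x)}{C_{\pi_t}(\tilde y|x)}\big)^{n-1}$ is nondecreasing in $t$, the relative mass of higher-reward responses never decreases, which I would use to show the mass on $\gY\setminus\gY^\star(x)$ vanishes and $\pi_T$ converges to a limit supported on $\gY^\star(x)$. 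Any such limit satisfies $P(\pi\succ\overline\pi^\star_0)\le 1/2$ for all $\pi$ (an optimal response weakly beats everything), hence solves \eqref{eq:optimal_policy_regularized} with $\beta=0$. For the mixing case I would prove the update map $T$ contracts toward its unique fixed point $\limitibon$: the dependence of $\log T(\pi)$ on $\log\pi$ carries the contractive factor $\tfrac1{1+\beta\eta}<1$, and I would control the expansive contribution of $\log C_\pi$ through its Lipschitz dependence on $\pi$, choosing the step size so that the composite map is contractive; the $\piref$-anchor keeps the iterates in a compact subset of $\relint(\triangle_\gY^\gX)$ on which the metric stays equivalent to KL, and last-iterate convergence follows.

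Third, for the quantitative comparison \eqref{eq:diff_limit} I would use the two explicit fixed points side by side: $\limitibon(y|x)\propto\piref(y|x)C_{\limitibon}(y|x)^{1/\beta}$ and, from \eqref{eq:zero_sum_regularized_sol} in the proof of Proposition~\ref{prop:existence}, $\pi^\star_\beta(y|x)\propto\piref(y|x)\exp(\tfrac1\beta(P_x\pi^\star_{\beta,x})_y)$. On $\gY^\star(x)$ both reduce to $\piref$ normalized over the optimal set, because $C_{\limitibon}(y^\star|x)=1$ and the coordinate $(P_x\pi^\star_{\beta,x})_{y^\star}$ takes the same value for every optimal $y^\star$. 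For suboptimal $y$ the optimal responses beat $y$ outright, giving $C_{\limitibon}(y|x)\le 1-\sum_{y^\star\in\gY^\star(x)}\piref(y^\star|x)$ and an exponent gap of at least $\tfrac12\sum_{y^\star}\piref(y^\star|x)$ for $\pi^\star_\beta$ (using $\sum_{y^\star}\pi^\star_\beta(y^\star|x)\ge\sum_{y^\star}\piref(y^\star|x)$); together with $1-u\le e^{-u}$ this yields per-coordinate bounds of order $\tfrac{\piref(y|x)}{\max_{y^\star}\piref(y^\star|x)}\exp(-\tfrac{1}{4\beta}\sum_{y^\star}\piref(y^\star|x))$, where the hypothesis \eqref{eq:beta_bd} is exactly what absorbs the prior ratio $\piref(y|x)/\max_{y^\star}\piref(y^\star|x)$ into the exponent. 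Summing over the $|\gY|-|\gY^\star(x)|$ suboptimal responses and combining the two estimates through the common limiting distribution by the triangle inequality gives \eqref{eq:diff_limit}.

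The main obstacle is the second step, convergence of the iteration: because $f_\beta$ is not convex–concave and the effective payoff $\log C_\pi$ is nonlinear in $\pi$, the linear-convergence result Theorem~\ref{thm:rate} for the win-rate game \eqref{eq:minmax} cannot be invoked as a black box, and one must instead exploit the monotone structure of the cumulative win probability $C_\pi$ (equivalently, reduce to a one-dimensional monotone dynamical system on the reward-level CDF) to certify that the composite mirror-descent map genuinely contracts. The tie-breaking bookkeeping in the Best-of-$N$ closed form is a secondary technical nuisance, and the quantitative bound of the third step is essentially mechanical once the two fixed-point formulas are in hand.
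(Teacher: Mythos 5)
Your Steps 1 and 3, and your no-mixing argument, track the paper closely: the closed form $\pi^{(n)}(y|x)\propto\pi(y|x)\left(\overline P_x(y,:)\pi_x\right)^{n-1}$, the log-linearization of the mixing update, and the fixed-point identification \eqref{eq:solution_regularized} are exactly the paper's \eqref{eq:BoN_expression}--\eqref{eq:bon_limit} (the paper handles no-mixing even more directly, via $\pi_T=\piref^{(n^T)}$). Your Step 3 comparison of the two explicit fixed points --- exponent gap at least $\tfrac12\sum_{y^\star}\piref(y^\star|x)$ for the win-rate game, at least $\sum_{y^\star}\piref(y^\star|x)$ for the log game, with \eqref{eq:beta_bd} absorbing the prior ratio and a triangle inequality through $\overline\pi^\star_0$ --- reproduces the paper's bounds \eqref{eq:bound_mirror0} and \eqref{eq:bound_ibon0}, and is arguably cleaner since you work with fixed-point equations rather than unrolled iterates.

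The genuine gap is the one you flag yourself: existence of $\lim_{T\to\infty}\pi_T$ in the mixing case, which is the actual content of the theorem. Your plan --- treat the update as a contraction, controlling $\log C_\pi$ ``through its Lipschitz dependence on $\pi$'' and choosing the step size to make the composite map contract --- fails in exactly the regime the theorem cares about. In logit space the update is $\log\pi_{t+1}=\frac{1}{1+\beta\eta}\log\pi_t+\frac{\eta}{1+\beta\eta}\log C_{\pi_t}+\frac{\beta\eta}{1+\beta\eta}\log\piref+c_x$, and since $\log C_\pi(y|x)=\log\sum_{y'}\overline P_x(y,y')\pi(y'|x)$ is a log-sum-exp of the logits, its sup-norm Lipschitz constant is exactly $1$ and cannot be improved (near the limit, $C_\pi(y|x)$ for the worst response is of the order of $\pi(y|x)$ itself). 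The best available contraction factor is therefore $\frac{1+\eta}{1+\beta\eta}$, which exceeds $1$ for \emph{every} $\eta>0$ whenever $\beta<1$ --- precisely the small-$\beta$ regime where \eqref{eq:diff_limit} has content --- and no step-size choice repairs this, because $\eta$ multiplies both the contracting and the expanding term. Nor can you fall back on the variational-inequality machinery behind Theorem~\ref{thm:rate}: the operator $\pi\mapsto-\log\left(\overline P_x\pi\right)$ is not monotone (already with two responses one gets $\innerprod{F(\pi)-F(\pi'),\pi-\pi'}<0$). The paper avoids contraction altogether: from the unrolled recursion \eqref{eq:unroll}, for every suboptimal $y$ one has $\log\frac{\pi_{t+1}(y_1|x)}{\pi_{t+1}(y|x)}=\log\frac{\piref(y_1|x)}{\piref(y|x)}+\frac{\eta}{1+\eta\beta}\sum_{i=0}^{t}\left(\frac{1}{1+\beta\eta}\right)^i\log\frac{1}{\overline P_x(y,:)\pi_{t-i,x}}$ with every summand nonnegative, while ratios within $\gY^\star(x)$ stay frozen at their $\piref$ values; the monotone, bounded ratios converge, and Lemma~\ref{lm:series_convergence} then identifies the limit as the fixed point. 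This monotonicity argument is what makes the claim true for all $\beta,\eta>0$; to complete your proof you must replace the contraction step by it (or an equivalent), after which your Steps 1 and 3 go through.
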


\begin{rmk}\label{rmk_app:equiv}
    It's easy to see that $\limitibon$ is a solution to \eqref{eq:solution_regularized} (see also \eqref{eq:log_fixed_point}) if and only if $(\limitibon,\limitibon)$ is a Nash equilibrium of the log-win-rate game~\eqref{eq:log_game}.
\end{rmk}

Now we give the proof of~Theorem~\ref{thm:approx_wind_formal}.

\paragraph{Step 1: show convergence for the no-mixing case.} We first prove the convergence result for the no-mixing case. Note that for any policy $\pi$, $\pi^{(n)}$ has the expression
\begin{align}\label{eq:BoN_expression}
    \forall (x,y)\in\gX\times\gY:\quad \pi^{(n)}(y|x)&=\binom{n}{1}\pi(y|x)\PP_{y_i\sim \pi(\cdot|x)}\left(r(x,y)\geq\max_{1\leq i\leq n-1} r(x,y_i)\right)\notag\\
&= n \pi(y|x)\left(\overline{P}_x(y,:)\pi_x\right)^{n-1},
\end{align}
where $\overline{P}_x$ is defined in \eqref{eq:P_bar}. 

When $\alpha_1=1,\alpha_2=0$, Algorithm~\ref{alg:iter_BoN} can be simplified as
$$\forall t\in\NN:\quad\pi_{t+1}=\pi_t^{(n)}.$$
Then $\pi_T$ is equivalent to $\piref^{\left(n^T\right)}$ --- the best of-$n^T$ policy of $\piref$. For any $x$, define $\gY^\star(x)$ as the set of responses that maximize the reward function $r(x,\cdot)$, i.e.,
$$\gY^\star(x)\coloneqq\argmax_{y\in\gY} r(x,y).$$
Then for any $y\in\gY^\star(x)$ and $y'\in\gY$, we have $\overline{P}_x(y,y')=1$. In addition, 
\begin{align*}
    \forall y\in\gY^\star(x):\quad \overline{P}_x(y,:) \pi_{\text{ref},x}&=1,\\
    \forall y\in\gY\setminus\gY^\star(x):\quad \overline{P}_x(y,:) \pi_{\text{ref},x}& < 1,
\end{align*}
where the latter holds since $\pi_{\text{ref}}\in\text{relint}\left(\triangle_\gY^\gX\right)$. 
By the BoN expression~\eqref{eq:BoN_expression}, we deduce that 
\begin{align}\label{eq:iterBoN0}
    \lim_{T\rightarrow\infty}\pi_T(y|x)=\begin{cases}
        \frac{\piref(y|x)}{\sum_{y\in\gY^\star(x)}\piref(y|x)}, & \textnormal{if }y\in\gY^\star(x),\\
        0, & \textnormal{otherwise}.
    \end{cases}
\end{align}

We let $\overline\pi^\star_0\coloneqq\lim_{T\rightarrow\infty}\pi_T$. Now we show that $(\overline\pi^\star_0,\overline\pi^\star_0)$ is a Nash equilibrium of \eqref{eq:minmax} when $\beta=0$, which implies $\overline\pi^\star_0$ is a solution to \eqref{eq:optimal_policy_regularized} when $\beta=0$.
Note that for any $(x,y)\in\gX\times\gY$, we have
\begin{align*}
    (\overline\pi^\star_{0,x})^\top P_x(:,y)=\begin{cases}
    \frac{1}{2},\quad & \textnormal{if }y\in\gY^\star(x),\\
    1,\quad & \textnormal{otherwise},
    \end{cases}
\end{align*}
which gives 
\begin{align}\label{eq:minmax_1}
    \forall \pi,x:\quad (\overline\pi^\star_{0,x})^\top P_x\pi_x\geq \frac{1}{2}= (\overline\pi^\star_{0,x})^\top P_x\overline\pi^\star_{0,x}.
\end{align}

On the other hand, for any $(x,y)\in\gX\times\gY$, we have
\begin{align*}
    P_x(y,:)\overline\pi^\star_{0,x}=\begin{cases}
    \frac{1}{2},\quad & \textnormal{if }y\in\gY^\star(x),\\
    0,\quad & \textnormal{otherwise},
    \end{cases}
\end{align*}
which implies
\begin{align}\label{eq:minmax_2}
    \forall \pi,x:\quad (\pi_x)^\top P_x\overline\pi^\star_{0,x}\leq \frac{1}{2}= (\overline\pi^\star_{0,x})^\top P_x\overline\pi^\star_{0,x}.
\end{align}
Together, \eqref{eq:minmax_1} and \eqref{eq:minmax_2} indicate that $(\overline\pi^\star_0,\overline\pi^\star_0)$ is a Nash equilibrium of \eqref{eq:minmax} when $\beta=0$, which also indicates that $\overline\pi^\star_0$ is a solution to \eqref{eq:optimal_policy_regularized}. It's straightforward to verify with \eqref{eq:iterBoN0} that $\overline\pi^\star_0$ is also a solution to \eqref{eq:log_fixed_point}.


\paragraph{Step 2: show convergence of the mixing case.} 
Recall that in the mixing case we set $\alpha_1=\frac{\eta}{(1+\beta\eta)(n-1)},\alpha_2=\frac{n-1-\eta}{(1+\beta\eta)(n-1)}$. We take logarithm on both sides of the iteration in line 5 of Algorithm~\ref{alg:iter_BoN} and
unroll it as follows:
\begin{align}\label{eq:unroll}
    \log \pi_{t+1} (y|x)&= \alpha_1 \log \widetilde{\pi}_{t} (y|x) +\alpha_2 \log\pi_t (y|x) +(1-\alpha_1-\alpha_2)\log \piref(y|x)+c_x\notag\\
    &= (\alpha_1+\alpha_2) \log\pi_t +(n-1)\alpha_1\log\left(\overline{P}_x(y,:)\pi_{t,x}\right) + (1-\alpha_1-\alpha_2)\log \piref(y|x)+c_x'\notag\\
&= (\alpha_1+\alpha_2)^{t+1} \log \pi_0(y|x) + (n-1)\alpha_1\sum_{i=0}^{t}(\alpha_1+\alpha_2)^i\log \left(\overline{P}_x(y,:)\pi_{t-i,x}\right) \notag\\
&\quad + (1-(\alpha_1+\alpha_2)^{t+1})\log \piref(y|x)+c_x''\notag\\
&= \log \piref(y|x) + (n-1)\alpha_1\sum_{i=0}^{t}(\alpha_1+\alpha_2)^i\log \left(\overline{P}_x(y,:)\pi_{t-i,x}\right)+c_x''\notag\\
&= \log \piref(y|x) + \frac{\eta}{1+\eta\beta}\sum_{i=0}^{t}\left(\frac{1}{1+\beta\eta}\right)^i\log \left(\overline{P}_x(y,:)\pi_{t-i,x}\right)+c_x'',
\end{align}
where $c_x,c_x',c_x''$ are constants that depend on $x$, and the second equality makes use of \eqref{eq:BoN_expression}, and the last equality follows from our choice of $\alpha_1,\alpha_2$.

Note that for all $\pi\in\triangle_\gY^\gX$, we have
\begin{align*}
    \forall y\in\gY^\star(x):\quad \overline{P}_x(y,:)\pi_{x}&=1,\\
    \forall y\in\gY\setminus\gY^\star(x):\quad \overline{P}_x(y,:)\pi_{x}&\leq 1.
\end{align*}
For each $x\in\gX$, we let $y_1(x)\in\gY^\star(x)$ such that $\piref(y_1|x)=\max_{y\in\gY^\star(x)}\piref(y|x)$. For notation simplicity, when it does not cause confusion, we simply write $y_1(x)$ as $y_1$. \eqref{eq:unroll} indicates that for all $y\in\gY$, we have
\begin{align}
    \log \left(\frac{\pi_{t+1}(y_1|x)}{\pi_{t+1}(y|x)}\right) = \log \left(\frac{\piref(y_1|x)}{\piref(y|x)}\right),\,\,\text{if }y\in\gY^\star(x),\label{eq:ratio_1}\\
    \log \left(\frac{\pi_{t+1}(y_1|x)}{\pi_{t+1}(y|x)}\right) = \log \left(\frac{\piref(y_1|x)}{\piref(y|x)}\right)+\frac{\eta}{1+\eta\beta}\sum_{i=0}^{t}\left(\frac{1}{1+\beta\eta}\right)^i\log \left(\frac{1}{\overline{P}_x(y,:)\pi_{t-i,x}}\right),\,\,\text{if }y\notin\gY^\star(x).\label{eq:ratio_2}
\end{align}
Especially, \eqref{eq:ratio_2} indicates that the ratio $\frac{\pi_{t+1}(y|x)}{\pi_{t+1}(y_1|x)}$ is decreasing with $t$ for all $y\notin\gY^\star(x)$. Since it has a lower bound 0, we have that the ratio $\frac{\pi_{t+1}(y|x)}{\pi_{t+1}(y_1|x)}$ converges as $t\rightarrow\infty$ for all $y\notin\gY^\star(x)$. Therefore, \eqref{eq:ratio_1} together with \eqref{eq:ratio_2} implies that $\limitibon\coloneqq\lim_{t\rightarrow\infty}\pi_{t+1}$ exists. To see that $\limitibon$ is a solution to \eqref{eq:solution_regularized}, we make use of the following lemma. 

\begin{lm}\label{lm:series_convergence}
    For any sequence $\{a_t\}_{t=0}^\infty$ in $\R$ where $a_t\leq 0$ for all $t$ and $a\coloneqq\lim_{t\rightarrow\infty}a_t$ exists ($a$ can be $-\infty$), for any $\alpha\in(0,1)$, we have
    \begin{equation}\label{eq:series_convergence}
        \lim_{t\rightarrow\infty}\sum_{i=0}^{t}\alpha^i a_{t-i}=\frac{a}{1-\alpha}.  
    \end{equation}
\end{lm}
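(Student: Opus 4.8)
\textbf{Proof plan for Lemma~\ref{lm:series_convergence}.}

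The plan is to prove the Cesàro-type limit \eqref{eq:series_convergence} by splitting the weighted sum into a ``head'' that captures the contribution far from the tail index and a ``tail'' whose weight is geometrically small. First I would handle the case where the limit $a$ is finite; the case $a=-\infty$ can be treated at the end by a separate truncation argument. Fix $\alpha\in(0,1)$ and write $S_t\coloneqq\sum_{i=0}^t \alpha^i a_{t-i}$. The key reindexing observation is that for large $i$ the factor $\alpha^i$ is tiny, while the terms $a_{t-i}$ with $i$ small correspond to $a_t, a_{t-1}, \dots$, i.e. exactly the tail of the sequence $\{a_k\}$, which is close to $a$. So the dominant contribution comes from small $i$, where $a_{t-i}\approx a$, and $\sum_{i\ge 0}\alpha^i = \tfrac{1}{1-\alpha}$.

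Concretely, given $\varepsilon>0$, I would use convergence $a_k\to a$ to pick $K$ so that $|a_k-a|<\varepsilon$ for all $k\ge K$. Then for $t\ge K$ I split
\begin{align*}
    S_t-\frac{a}{1-\alpha}
    &=\sum_{i=0}^{t}\alpha^i(a_{t-i}-a)-\sum_{i=t+1}^{\infty}\alpha^i a\\
    &=\sum_{i=0}^{t-K}\alpha^i(a_{t-i}-a)+\sum_{i=t-K+1}^{t}\alpha^i(a_{t-i}-a)-\frac{\alpha^{t+1}}{1-\alpha}a.
\end{align*}
In the first sum the index $t-i$ ranges over $K,K+1,\dots,t$, so every $|a_{t-i}-a|<\varepsilon$, giving a bound $\varepsilon\sum_{i\ge 0}\alpha^i=\varepsilon/(1-\alpha)$. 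The second sum has at most $K$ terms, each weighted by $\alpha^i$ with $i\ge t-K+1$, so its modulus is at most $(\max_{0\le k<K}|a_k-a|)\cdot\alpha^{t-K+1}/(1-\alpha)$, which $\to 0$ as $t\to\infty$; the last term likewise vanishes. Taking $t\to\infty$ and then $\varepsilon\to 0$ yields \eqref{eq:series_convergence}.

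The main obstacle is the case $a=-\infty$, where the rearrangement above is meaningless since $a/(1-\alpha)$ is formally $-\infty$ and we cannot subtract it. Here I would instead argue directly that $S_t\to-\infty$: the hypothesis $a_t\le 0$ guarantees every term is nonpositive, so $S_t\le \alpha^{K'}a_{t-K'}$ for any fixed $K'$ once $t\ge K'$ (keeping a single nonpositive term and discarding the rest only increases the sum). Given any $M>0$, divergence $a_k\to-\infty$ lets me fix $K'$ and a threshold so that $\alpha^{K'}a_{t-K'}<-M$ for all large $t$, forcing $S_t<-M$; hence $S_t\to-\infty=\tfrac{a}{1-\alpha}$. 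The monotonicity from $a_t\le 0$ is what makes this clean and is precisely why the lemma is stated with that sign hypothesis, matching its application to $\log(\overline P_x(y,:)\pi_{t,x})\le 0$ in \eqref{eq:unroll}.
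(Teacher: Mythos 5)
Your proposal is correct and takes essentially the same route as the paper: for finite $a$ you split the weighted sum into a block controlled by the convergence $a_k\to a$ and a block controlled by the geometric decay of the weights (the paper splits at a fixed index $N$ in $i$-space while you split at $i=t-K$, which is the same two-block truncation viewed in mirror image), and for $a=-\infty$ both arguments use nonpositivity to discard all but a single term of the sum. No gaps.
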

\begin{proof}[Proof of Lemma~\ref{lm:series_convergence}]
    If $a=-\infty$, then $$\lim_{t\rightarrow\infty}\sum_{i=0}^{t}\alpha^i a_{t-i}\leq \lim_{t\rightarrow\infty}a_t =-\infty.$$
    If $a>-\infty$, we have
    $$\sum_{i=0}^t \alpha^i a_{t-i}=\sum_{i=0}^t \alpha^i a+\sum_{i=0}^t \alpha^i (\underbrace{a_{t-i}-a}_{e_{t-i}})=\frac{1-\alpha^{t+1}}{1-\alpha}a + \sum_{i=0}^t \alpha^i e_{t-i},$$
    thus we only need to verify that $\lim_{t\rightarrow\infty}\sum_{i=0}^{t}\alpha^i e_{t-i}=0$.

    For any $\epsilon>0$, there exists $N\in\NN$ such that 
    $$\forall t\geq N:\quad \left|\sum_{i=N}^t \alpha^i e_{t-i}\right|\leq \frac{\alpha^N}{1-\alpha} b\leq \epsilon/2,$$
    where $b=\max_{i\geq N}|e_i|$, and $b<\infty$ because $e_t$ converges to 0. 
    We fix $N$ and choose $T$ such that for all $t\geq T$, we have
    $$\sum_{i=0}^N \alpha^i a_{t-i}\leq \epsilon/2.$$ 
    Then for all $t\geq T$, we have
    \begin{align*}
        \left|\sum_{i=0}^{t}\alpha^i e_{t-i}\right|&\leq \left|\sum_{i=0}^{N}\alpha^i e_{t-i}\right|+\left|\sum_{i=N+1}^{t}\alpha^i e_{t-i}\right|\leq \epsilon/2+\epsilon/2=\epsilon.
    \end{align*}
    This completes the proof.
\end{proof}
Let $t\rightarrow\infty$ on both sides of \eqref{eq:unroll}, and by Lemma~\ref{lm:series_convergence}, we have
\begin{align}\label{eq:bon_limit}
    \limitibon(y|x)\propto\piref(y|x)\frac{1}{\beta}\log \left(\overline{P}_x(y,:)\overline{\pi}^\star_{\beta,x}\right).
\end{align}

Note that for any $x$, the strongly-concave problem
$$
\max_{\pi_x\in\gY}\log \left(\underset{y'\sim\overline\pi^\star_\beta(\cdot|x)}{\E}\overline{P}_x(y,y')\right) - \beta \textnormal{KL}(\pi_x||\piref(\cdot|x))
$$
has a unique solution  $\overline\pi^\star_{\beta,x}$. Therefore, $\limitibon$ is a solution to \eqref{eq:solution_regularized}. By Remark~\ref{rmk_app:equiv} we know that $(\limitibon,\limitibon)$ is a Nash equilibrium of the log-win-rate game~\eqref{eq:log_game}.

\paragraph{Step 3: bound the distance between $\pi^\star_\beta$ and $\limitibon$.} 
We let $\pi^{(0)}=\piref$ in Algorithm~\ref{alg:iter_PMDA} and unroll the iteration~\eqref{eq:update_mirror_analytical} similar to \eqref{eq:unroll}. We have
\begin{align}\label{eq:unroll_mirror}
    \log \pi^{(t+1)} (y|x)= \log \piref(y|x) + \frac{\eta}{1+\eta\beta}\sum_{i=0}^{t}\left(\frac{1}{1+\beta\eta}\right)^i  P_x(y,:)\pi_x^{(t-i)}+c_x''',
\end{align}
where $\pi^{(t)}$ is the policy at the $t$-th round of Algorithm~\ref{alg:iter_PMDA}. Furthermore, similar to \eqref{eq:ratio_1} and \eqref{eq:ratio_2}, we have
\begin{align}
    \log \left(\frac{\pi^{(t+1)}(y_1|x)}{\pi^{(t+1)}(y|x)}\right) = \log \left(\frac{\piref(y_1|x)}{\piref(y|x)}\right),\,\,\text{if }y\in\gY^\star(x),\label{eq:ratio_1_mirror}\\
    \log \left(\frac{\pi^{(t+1)}(y_1|x)}{\pi^{(t+1)}(y|x)}\right) = \log \left(\frac{\piref(y_1|x)}{\piref(y|x)}\right)+\frac{\eta}{1+\eta\beta}\sum_{i=0}^{t}\left(\frac{1}{1+\beta\eta}\right)^i\left(P_x(y_1,:)-P_x(y,:)\right)\pi_x^{(t-i)},\,\,\text{if }y\notin\gY^\star(x).\label{eq:ratio_2_mirror}
\end{align}
For any $\pi\in\triangle_\gY^\gX$, we have
\begin{align}\label{eq:increase_mirror}
    \forall y\neq\gY^\star(x):\quad \left(P_x(y_1,:)-P_x(y,:)\right)\pi_x\geq \frac{1}{2}\sum_{y\in\gY^\star(x)}\pi(y|x)\geq 0,
\end{align}
we know that $\log \left(\frac{\pi^{(t+1)}(y_1|x)}{\pi^{(t+1)}(y|x)}\right)$ is increasing with $t$ for all $y\in\gY\setminus\gY^\star(x)$, and thus $\pi^{(t)}(y|x)$ is decreasing with $t$ for all $y\in\gY\setminus\gY^\star(x)$. Moreover, by a similar argument as in Step 1, we have that $\lim_{t\rightarrow\infty}\pi^{(t)}$ exists and is the solution to \eqref{eq:optimal_policy_regularized} (even when $\eta>\beta$).
    
    Note that \eqref{eq:ratio_2_mirror} is equivalent to
\begin{align}\label{eq:mirror_limit}
    \log \left(\frac{\pi^{(t+1)}(y_1|x)}{\pi^{(t+1)}(y|x)}\right) &= \frac{\eta}{1+\eta\beta}\sum_{i=0}^{t}\left(\frac{1}{1+\beta\eta}\right)^i\bigg(\underbrace{\left(P_x(y_1,:)-P_x(y,:)\right)\pi_x^{(t-i)}+\beta\log \left(\frac{\piref(y_1|x)}{\piref(y|x)}\right)}_{\xi^{(t-i)}} \bigg)\notag\\
    & \quad+\left(\frac{1}{1+\beta\eta}\right)^{t+1}\log \left(\frac{\piref(y_1|x)}{\piref(y|x)}\right).
\end{align}
Also note that by \eqref{eq:increase_mirror} and the decreasing property of $\pi^{(t)}(y|x)$ for all $y\notin\gY^\star(x)$, we have
\begin{align*}
    \forall x\in\gX,\forall y\neq\gY^\star(x):\quad \left(P_x(y_1,:)-P_x(y,:)\right)\pi_x^{(t-i)}&\geq \frac{1}{2}\sum_{y\in\gY^\star(x)}\piref(y|x).
\end{align*}
From the above expression and our choice of $\beta$ we know that 
\begin{align}\label{eq:delta_mirror}
    \forall x\in\gX,\forall y\neq\gY^\star(x):\quad \xi^{(t-i)}\geq \frac{1}{4}\sum_{y\in\gY^\star(x)}\piref(y|x).
\end{align}
Then by \eqref{eq:mirror_limit} we know that
\begin{align*}
    \forall x\in\gX,\forall y\neq\gY^\star(x):\quad \log \left(\frac{\pi_\beta^\star(y_1|x)}{\pi_\beta^\star(y|x)}\right)\geq  \frac{1}{4\beta}\sum_{y\in\gY^\star(x)}\piref(y|x),
\end{align*}
which indicates that for all $y\in\gY\setminus\gY^\star(x)$,
\begin{align}
    \pi_\beta^\star(y|x)&\leq \pi_\beta^\star(y_1|x)\exp\left(-\frac{1}{4\beta}\sum_{y\in\gY^\star(x)}\piref(y|x)\right)\leq \exp\left(-\frac{1}{4\beta}\sum_{y\in\gY^\star(x)}\piref(y|x)\right),
\end{align}
which gives
$$
\sum_{y\in\gY\setminus\gY^\star(x)}\pi_\beta^\star(y|x)\leq (|\gY|-|\gY^\star(x)|)\exp\left(-\frac{1}{4\beta}\sum_{y\in\gY^\star(x)}\piref(y|x)\right).
$$
Combining the above relation with \eqref{eq:ratio_1_mirror}, we obtain
$$
\forall y\in\gY^\star(x):\quad \pi_\beta^\star(y|x)\geq \frac{\piref(y|x)}{\sum_{y\in\gY^\star(x)}\piref(y|x)}\cdot \left(1-(|\gY|-|\gY^\star(x)|)\exp\left(-\frac{1}{4\beta}\sum_{y\in\gY^\star(x)}\piref(y|x)\right)\right),
$$
and
$$\forall y\in\gY^\star(x):\quad \pi_\beta^\star(y|x)\leq \frac{\piref(y|x)}{\sum_{y\in\gY^\star(x)}\piref(y|x)}.$$

Recall that we write the expression of $\overline\pi^\star_0$ in \eqref{eq:iterBoN0}. Therefore, we have
\begin{align}\label{eq:bound_mirror0}
    \forall x\in\gX:\quad\norm{\pi^\star_{\beta,x}-\overline\pi^\star_{0,x}}_1\leq 2(|\gY|-|\gY^\star(x)|)\exp\left(-\frac{1}{4\beta}\sum_{y\in\gY^\star(x)}\piref(y|x)\right).
\end{align}

For the iteration in Algorithm~\ref{alg:iter_BoN}, similar to \eqref{eq:mirror_limit} we have 
\begin{align}
    \log \left(\frac{\pi^{(t+1)}(y_1|x)}{\pi^{(t+1)}(y|x)}\right) &= \frac{\eta}{1+\eta\beta}\sum_{i=0}^{t}\left(\frac{1}{1+\beta\eta}\right)^i\bigg(\underbrace{\log \left(\frac{1}{\overline{P}_x(y,:)\pi_{t-i,x}}\right)+\beta\log \left(\frac{\piref(y_1|x)}{\piref(y|x)}\right)}_{\delta^{(t-i)}} \bigg)\notag\\
    & \quad+\left(\frac{1}{1+\beta\eta}\right)^{t+1}\log \left(\frac{\piref(y_1|x)}{\piref(y|x)}\right).
\end{align}
Note that for all $y\in\gY\setminus\gY^\star(x)$, we have
$$
\log \left(\frac{1}{\overline{P}_x(y,:)\pi_{t-i,x}}\right)\geq \log \left(\frac{1}{1-\sum_{y\in\gY^\star(x)}\pi_{t-i}(y|x)}\right)\geq \log \left(\frac{1}{1-\sum_{y\in\gY^\star(x)}\piref(y|x)}\right)\geq \sum_{y\in\gY^\star(x)}\piref(y|x),
$$ 
where in the second inequality we use the fact that $\pi_t(y|x)$ is decreasing with $t$ for all $y\notin\gY^\star(x)$. Then by a similar argument as in \eqref{eq:delta_mirror}, we have
\begin{align}\label{eq:delta_ibon}
    \forall x\in\gX,\forall y\neq\gY^\star(x):\quad \delta^{(t-i)}\geq \frac{3}{4}\sum_{y\in\gY^\star(x)}\piref(y|x).
\end{align} 
Therefore, analogous to \eqref{eq:bound_mirror0}, we have
\begin{align}\label{eq:bound_ibon0}
    \forall x\in\gX:\quad\norm{\pi^\star_{\beta,x}-\overline\pi^\star_{\beta,x}}_1\leq 2(|\gY|-|\gY^\star(x)|)\exp\left(-\frac{3}{4\beta}\sum_{y\in\gY^\star(x)}\piref(y|x)\right).
\end{align}
Combining \eqref{eq:bound_mirror0} and \eqref{eq:bound_ibon0}, we have
\begin{align*}
\norm{\overline\pi^\star_{\beta,x}-\pi^\star_{\beta,x}}_1&\leq \norm{\overline\pi^\star_{\beta,x}-\overline\pi^\star_{0,x}}_1+\norm{\pi^\star_{\beta,x}-\overline\pi^\star_{0,x}}_1\\
&\leq 2(|\gY|-|\gY^\star(x)|)\left(\exp\left(-\frac{3}{4\beta}\sum_{y\in\gY^\star(x)}\piref(y|x)\right)+\exp\left(-\frac{1}{4\beta}\sum_{y\in\gY^\star(x)}\piref(y|x)\right)\right),
\end{align*}
from which we can see that \eqref{eq:diff_limit} holds.

\subsection{Proof of Theorem~\ref{thm:rate}}\label{sec_app:proof_thm_rate}

To start with, we reformulate problem~\eqref{eq:optimal_policy_regularized} as a monotone variational inequality (VI) problem.
We first define the operator $F_x:\triangle_\gY\rightarrow\R^{|\gY|}$ for all $x\in\gX$ as
\begin{equation}\label{eq:F_x}
    F_x(\pi_x)\coloneqq -P_x\pi_x-\beta\log\pi_{\text{ref},x},\,\,\forall \pi_x\in \triangle_\gY.
\end{equation}

We also let 
\begin{equation}\label{eq:negative_entropy}
  h:\triangle_\gY\rightarrow\R,\,\, h(p)\coloneqq\sum_i p_i\log p_i
\end{equation}
denote the negative entropy.
The following lemma gives the VI form of \WIND.
\begin{lm}\label{lm:VI}
    Assume $\beta>0$ and $\pi^{(0)},\piref\in\text{relint}(\triangle^\gX_\gY)$. Then \eqref{eq:optimal_policy_regularized} is equivalent to the following monotone VI problem:
    \begin{equation}\label{eq:MVI}
      \underset{x\sim\rho}{\E}\left[\innerprod{F_x(\pi^\star_{\beta,x})+\beta\nabla h(\pi_{\beta,x}^\star), \pi_x-\pi^\star_{\beta,x}}\right]\geq 0,\,\,\forall \pi\in\Delta_\gY^\gX,
    \end{equation}
    where for all $x\in\gX$, $F_x$ is monotone and 1-Lipschitz continuous w.r.t. the $\ell_1$-norm. 
\end{lm}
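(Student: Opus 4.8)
The plan is to establish the lemma in three pieces: the equivalence between the fixed point \eqref{eq:optimal_policy_regularized} and the VI \eqref{eq:MVI}, the monotonicity of each $F_x$, and its Lipschitz bound. For the equivalence, I would first observe that for a \emph{fixed} second argument $\pi^\star_\beta$, the map $\pi\mapsto P(\pi\succ\pi^\star_\beta)-\beta\dist{\pi}{\piref}$ is concave over $\triangle_\gY^\gX$: the win-rate term $\E_{x\sim\rho}\pi_x^\top P_x\pi^\star_{\beta,x}$ is linear in $\pi$, while $-\beta\dist{\pi}{\piref}$ is (strictly, for $\beta>0$) concave. Hence $\pi^\star_\beta$ solves \eqref{eq:optimal_policy_regularized} if and only if it satisfies the first-order optimality condition $\innerprod{\nabla_\pi g(\pi^\star_\beta),\,\pi-\pi^\star_\beta}\le 0$ for all $\pi\in\triangle_\gY^\gX$, where $g$ is the objective above. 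Writing $\dist{\pi}{\piref}=\E_{x\sim\rho}[h(\pi_x)-\innerprod{\pi_x,\log\pi_{\text{ref},x}}]$ and differentiating per prompt $x$, the gradient at $\pi_x$ is $\rho(x)\big(P_x\pi^\star_{\beta,x}-\beta\nabla h(\pi_x)+\beta\log\pi_{\text{ref},x}\big)$; evaluating at $\pi_x=\pi^\star_{\beta,x}$ and recognizing $P_x\pi^\star_{\beta,x}+\beta\log\pi_{\text{ref},x}=-F_x(\pi^\star_{\beta,x})$ turns the optimality condition into exactly \eqref{eq:MVI}. Since $\supp(\rho)=\gX$, I can pass freely between the aggregated expectation form and the per-$x$ condition by perturbing $\pi$ at a single prompt.

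For monotonicity, the key simplification is that $F_x$ is affine, so $F_x(\pi_x)-F_x(\pi'_x)=-P_x(\pi_x-\pi'_x)$, and with $d\defas\pi_x-\pi'_x$ I need only $d^\top P_x d\le 0$. Using the shifted skew-symmetry \eqref{eq:shifted_skew_symmetry}, $P_x+P_x^\top=J$ with $J$ the all-ones matrix, I get $2\,d^\top P_x d=d^\top J d=(\1^\top d)^2$; and because $\pi_x,\pi'_x$ are both distributions, $\1^\top d=0$, whence $d^\top P_x d=0$ and monotonicity holds (in fact with equality). For the Lipschitz bound I again use $F_x(\pi_x)-F_x(\pi'_x)=-P_x d$ and measure the output in the dual $\ell_\infty$-norm: since every entry of $P_x$ lies in $\{0,1/2,1\}$, I have $|(P_x d)_i|\le\sum_j|(P_x)_{ij}|\,|d_j|\le\norm{d}_1$, so $\norm{F_x(\pi_x)-F_x(\pi'_x)}_\infty\le\norm{\pi_x-\pi'_x}_1$, i.e. $F_x$ is $1$-Lipschitz with respect to the $\ell_1$-norm.

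The steps are individually short; the points requiring care are mostly bookkeeping rather than a hard computation. First, for the gradient $\nabla h(\pi^\star_{\beta,x})=\log\pi^\star_{\beta,x}+\1$ to be well-defined I must confirm $\pi^\star_\beta\in\text{relint}(\triangle_\gY^\gX)$, which follows because the KL regularization forces the fixed point $\pi^\star_{\beta}(\cdot|x)\propto\piref(\cdot|x)\circ\exp(\tfrac1\beta P_x\pi^\star_{\beta,x})$ to have full support whenever $\piref$ does. Second, I should be explicit that the first-order condition is \emph{both} necessary and sufficient, which is precisely where concavity is used; without it the VI would only be necessary. Third, the phrase ``$1$-Lipschitz w.r.t. the $\ell_1$-norm'' must be read in the standard VI convention of bounding the output in the dual ($\ell_\infty$) norm---the naive $\ell_1\to\ell_1$ operator norm of $P_x$ is \emph{not} bounded by $1$, so fixing this convention correctly is the one genuinely subtle point.
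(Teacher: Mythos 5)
Your proposal is correct and follows essentially the same route as the paper's proof: first-order optimality of the (strictly) concave per-fixed-opponent objective for the equivalence, the shifted skew-symmetry $P_x+P_x^\top=J$ together with $\1^\top(\pi_x-\pi'_x)=0$ for monotonicity, and the entrywise bound on $P_x$ giving $\norm{F_x(p)-F_x(q)}_\infty\leq\norm{p-q}_1$ for the Lipschitz claim. If anything, you are slightly more careful than the paper on two points it leaves implicit---the sign in the monotonicity computation (the paper drops a minus sign, harmlessly, since the quadratic form vanishes) and the dual-norm reading of ``$1$-Lipschitz w.r.t.\ the $\ell_1$-norm.''
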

\begin{proof}[Proof of Lemma~\ref{lm:VI}]
    By the proof of Proposition~\ref{prop:existence} we know that when $\beta>0$ and $\piref\in\text{relint}(\triangle^\gX_\gY)$, we have $\pi^\star_\beta\in\text{relint}(\triangle^\gX_\gY)$. By the optimality condition, $\pi^\star_\beta$ satisfies \eqref{eq:optimal_policy_regularized} if and only if
\begin{align}\label{eq:optimality_condition}
    \innerprod{\nabla f^\star(\pi^\star_\beta), \pi-\pi^\star_\beta}\geq 0,\,\,\forall \pi,
\end{align}
where
$$f^\star(\pi)\coloneqq\underset{x\sim\rho}{\E}\big[\innerprod{\pi_x,\, -P_x\pi^\star_{\beta,x}-\beta\log\pi_{\text{ref},x}+\beta\log\pi_x}\big].$$
By \eqref{eq:F_x} and \eqref{eq:negative_entropy}, we have \eqref{eq:optimality_condition} equivalent to \eqref{eq:MVI}.
To see the monotonicity of $F_x$, we have
\begin{align}\label{eq:monotonicity}
   \innerprod{F_x(\pi_x)-F_x(\pi_x'),\pi_x-\pi_x'} 
    &=(\pi_x-\pi_x')^\top P_x(\pi_x-\pi_x')\notag\\
    &=(\pi_x-\pi_x')^\top \frac{1}{2}(P_x+P_x^\top)(\pi_x-\pi_x')\notag\\
    &=(\pi_x-\pi_x')^\top\frac{1}{2} J (\pi_x-\pi_x')=0,
\end{align}
where $J\in\R^{|\gY|\times|\gY|}$ is the matrix of all ones.

Furthermore, we have
\begin{align}\label{eq:Lip}
    \forall x\in\gX, \forall p,q\in\triangle_\gY:\quad\norm{F_x(p)-F_x(q)}_\infty\leq \norm{P_x(p-q)}_\infty\leq\norm{p-q}_1,
\end{align}
where the second inequality follows from the fact that each entry of $P_x$ only take its value in $\{0,1/2, 1\}$. We finish up by noting \eqref{eq:Lip} indicates that $F_x$ is 1-Lipschitz with respect to $\ell_1$-norm. 
\end{proof}

We use the following \textit{proximal mirror descent ascent} rule~\citep{sokota2023unifiedapproachreinforcementlearning,pattathil2023symmetric} to solve the monotone VI problem~\eqref{eq:MVI}:
\begin{align}\label{eq:update_mirror}
\pi^{(t+1)}=\arg\min_{\pi}\underset{x\sim\rho}{\E}\bigg[\innerprod{F_x(\pi^{(t)}_x), \pi_x}+\beta h(\pi_x)+\frac{1}{\eta} B_h(\pi_x,\pi^{(t)}_x)\bigg],
\end{align}
where $\eta>0$ is the learning rate, and the Bregman distance $B_h:\triangle_\gY\times\triangle_\gY\rightarrow\R_+$ is generated from the negative entropy $h$: 
\begin{equation*}
  B_h(p,q)\coloneqq h (p)-h(q)-\innerprod{\nabla h(q),p-q}=\text{KL}(p||q).
\end{equation*}

It's straightforward to verify that the analytical solution of \eqref{eq:update_mirror} is \eqref{eq:update_mirror_analytical} in Algorithm~\ref{alg:iter_PMDA}.
Note that the negative entropy $h$ (c.f.~\eqref{eq:negative_entropy}) is 1-strongly convex on $\triangle_\gY$ with respect to the $\ell_1$-norm~\citep[Example 5.27]{beck2017first}. Furthermore, Lemma~\ref{lm:VI} shows $F_x$ is 1-Lipschitz with respect to $\ell_1$-norm. 
With these facts, the theorem follows directly from \citet[Theorem 3.4]{sokota2023unifiedapproachreinforcementlearning}:
\begin{align*}
        \forall x\in \gX:\quad \textnormal{KL}(\pi^\star_\beta(\cdot|x)||\pi^{(t)}(\cdot|x))\leq\left(\frac{1}{1+\eta\beta}\right)^t \textnormal{KL}(\pi^\star_\beta(\cdot|x)||\pi^{(0)}(\cdot|x)).
\end{align*}
The relation \eqref{eq:rate} can be deduced from the above relation by taking the expectation over $x\sim\rho$ on both sides.


\subsection{Proof of Lemma~\ref{lm:conditional_mean}}\label{sec_app:proof_lm_conditional_mean}

To start with, we have
\begin{align}\label{eq:square}
    &\quad\,\,\E_{u,v}\left[(v-g(u))^2\right]\notag\\
    &=\E_{u,v}\left[\left((v-\E_v(v|u))+(\E_v(v|u)-g(u))\right)^2\right]\notag\\
    &=\E_{u,v}\left[(v-\E_v(v|u))^2\right]+2\E_{u,v}\left[(v-\E_v(v|u))(\E_v(v|u)-g(u))\right]+\E_{u,v}\left[(\E_v(v|u)-g(u))^2\right].
\end{align}
We use $F(u)$, $F(u,v)$ and $F(v|u)$ to denote the distribution of $u$, the joint distribution of $u,v$ and the distribution of $v$ conditioned on $u$, resp. Then the cross term
\begin{align}\label{eq:cross}
    \E_{u,v}\left[(v-\E_v(v|u))(\E_v(v|u)-g(u))\right]&=\int_{(u,v)}(v-\E_v(v|u))(\E_v(v|u)-g(u)) d F(u,v)\notag\\
    &=\int_u \left(\int_v v-\E_v(v|u) d F(v|u) \right) (\E_v(v|u)-g(u)) d F(u)\notag\\
    &=0,
\end{align}
where the last relation follows from the fact that 
$$\int_v v-\E_v(v|u) d F(v|u)= \E_v(v|u)-\E_v(v|u)=0. $$

Combining \eqref{eq:cross} and \eqref{eq:square}, we have that
\begin{align*}
    \E_{u,v}\left[(v-g(u))^2\right] = \E_{u,v}\left[(v-\E_v(v|u))^2\right] + \E_{u}\left[(\E_v(v|u)-g(u))^2\right]\geq \E_{u,v}\left[(v-\E_v(v|u))^2\right],
\end{align*}
and the equality holds if and only if $g(u)=\E_v(v|u)$ almost everywhere on the support set of $F(u)$.


\subsection{Proof of Theorem~\ref{thm:complexity}}\label{sec_app:proof_complexity}

We first introduce the three-point property of the Bregman divergence~\citep[Proposition~D.1]{sokota2023unifiedapproachreinforcementlearning}, \citep[Proposition~2.3]{bauschke2003bregman}:
\begin{lm}[three-point property of the Bregman divergence]\label{lm:three_point}
    Let $\psi:\triangle_\gY\rightarrow \R$ 
    be a function that's differentiable on $int(\triangle_\gY)$. Let $p,q\in\triangle_\gY$ and $r,s\in int(\triangle_\gY)$. Then the following equality holds:
    \begin{align}
        B_\psi(r,s) + B_\psi(s,r) &= \innerprod{\nabla \psi(r)-\nabla \psi(s),r-s}.\label{eq:three_point_1}\\
        B_\psi(p,r) &= B_\psi(p,s)+B_\psi(s,r)+\innerprod{\nabla \psi(s)-\nabla \psi(r),p-s}.\label{eq:three_point_2}\\
        B_\psi(p,s) + B_\psi(q,r) &= B_\psi(p,r) + B_\psi(q,s)+\innerprod{\nabla \psi(r)-\nabla \psi(s),p-q}.\label{eq:three_point_3}
    \end{align}
\end{lm}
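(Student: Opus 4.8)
The plan is to prove all three identities by direct expansion of the defining formula $B_\psi(p,q)=\psi(p)-\psi(q)-\langle\nabla\psi(q),p-q\rangle$, which is well-defined as soon as the second argument lies in $\mathrm{int}(\triangle_\gY)$ so that $\nabla\psi$ exists there. No convexity or strong convexity of $\psi$ is needed: these are purely algebraic bookkeeping identities, and the hypotheses $r,s\in\mathrm{int}(\triangle_\gY)$ serve only to guarantee that the gradients $\nabla\psi(r),\nabla\psi(s)$ appearing on both sides are defined. Accordingly, I would present each of \eqref{eq:three_point_1}--\eqref{eq:three_point_3} as a short self-contained display obtained by substituting the definition and cancelling.

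For \eqref{eq:three_point_1}, I would write out $B_\psi(r,s)$ and $B_\psi(s,r)$ and add them. The function values $\psi(r)$ and $\psi(s)$ appear with opposite signs and cancel, leaving $-\langle\nabla\psi(s),r-s\rangle-\langle\nabla\psi(r),s-r\rangle$; rewriting $s-r=-(r-s)$ collapses this to $\langle\nabla\psi(r)-\nabla\psi(s),r-s\rangle$, which is the claim. For \eqref{eq:three_point_2}, the approach is to expand the three terms on the right-hand side, namely $B_\psi(p,s)$, $B_\psi(s,r)$, and $\langle\nabla\psi(s)-\nabla\psi(r),p-s\rangle$, and sum them. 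The two copies of $\psi(s)$ cancel between the first two terms, and the two occurrences of $\langle\nabla\psi(s),p-s\rangle$ cancel against each other; the remaining $\nabla\psi(r)$ terms combine via $\langle\nabla\psi(r),(s-r)+(p-s)\rangle=\langle\nabla\psi(r),p-r\rangle$, yielding exactly $B_\psi(p,r)$.

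For \eqref{eq:three_point_3}, I would subtract the two Bregman terms on the right, $B_\psi(p,r)$ and $B_\psi(q,s)$, from the two on the left, $B_\psi(p,s)$ and $B_\psi(q,r)$. All four function values $\psi(p),\psi(q),\psi(r),\psi(s)$ cancel, and it remains only to collect the inner products according to the point at which the gradient is evaluated: the $\nabla\psi(s)$ contributions combine to $-\langle\nabla\psi(s),p-q\rangle$ and the $\nabla\psi(r)$ contributions to $\langle\nabla\psi(r),p-q\rangle$, whose sum is the asserted $\langle\nabla\psi(r)-\nabla\psi(s),p-q\rangle$. Since every step is elementary cancellation, there is no genuine obstacle here; the only care required is tracking the signs inside the inner products and keeping the four base points $p,q,r,s$ distinct throughout the manipulation.
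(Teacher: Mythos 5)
Your proof is correct and complete: all three identities follow, exactly as you show, by expanding $B_\psi(p,q)=\psi(p)-\psi(q)-\innerprod{\nabla\psi(q),\,p-q}$ and cancelling, and you are right that no convexity is needed and that the hypothesis $r,s\in \mathrm{int}(\triangle_\gY)$ serves only to make $\nabla\psi(r),\nabla\psi(s)$ well-defined. The paper itself gives no proof, deferring to \citet{sokota2023unifiedapproachreinforcementlearning} (Proposition~D.1) and \citet{bauschke2003bregman} (Proposition~2.3), and your direct-expansion argument is precisely the standard one found in those references, so your write-up simply makes the cited result self-contained.
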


To start with, we rewrite the update rule in Algorithm~\ref{alg:practice}
\begin{align}\label{eq:regression}
    \theta_{t+1}\leftarrow \argmin_{\theta\in\Theta} \frac{1}{M}\sum_{i=1}^M\big(\varphi_t(x_i^{(t)},y_i^{(t)},{y'}^{(t)}_{i})-\phi_\theta(y_i^{(t)}|x_i^{(t)})\big)^2.
\end{align}
to a similar form as the update rule~\eqref{eq:update_mirror}.


\paragraph{Step 1: reformulate the update rule~\eqref{eq:regression}.} 
We let $\delta_{S}^{(t)},\delta_{P}^{(t)}\in\R^{|\gX|\times|\gY|}$ denote the statistical error and the model approximation error at the $t$-th round, respectively:
\begin{align}
   \forall (x,y)\in\gX\times\gY:\quad \delta_{S}^{(t)}(x,y)&\coloneqq \phi_{\theta_{t+1}}(y|x)-\phi_{\theta_{t+1}^\star}(y|x),\label{eq:statistical_error}\\
   \delta_{P}^{(t)}(x,y)&\coloneqq \E_{y'\sim\pi_{\theta_t}(\cdot|x)}\widehat P_x(y,y')-\E_{y'\sim\pi_{\theta_t}(\cdot|x)} P_x(y,y').\label{eq:model_error_bound}
\end{align}
We write $\delta_{S,x}^{(t)}\in\R^{|\gX|},\delta_{P,x}^{(t)}\in\R^{|\gX|}$ as the shorthand of $\left(\delta_{S}^{(t)}(x,y)\right)_{y\in\gY},\left(\delta_{P}^{(t)}(x,y)\right)_{y\in\gY}$, respectively.

The above expression~\eqref{eq:statistical_error} combined with \eqref{eq:expressive} gives
\begin{align}\label{eq:update_mirror_parameter}
    \pi_{\theta_{t+1}}(y|x)&\propto(\pi_{\theta_{t}}(y|x))^{\frac{1}{1+\beta\eta}}(\piref(y|x))^{\frac{\beta\eta}{1+\beta\eta}}\exp\left(\frac{\eta}{1+\beta\eta}\left(\E_{y'\sim\pi_{\theta_t}(\cdot|x)}\widehat P_x(y,y')+\frac{1+\beta\eta}{\eta}\delta_S^{(t)}(x,y)\right)\right).
\end{align}
For notation simplicity, we let $\Pi\coloneqq\triangle^{\gX}_{\gY}$ denote the whole policy space. 
Note that the above relation is equivalent to 
\begin{equation}\label{eq:update_reform}
    \pi_{\theta_{t+1}}=\arg\min_{\pi\in\Pi}\EE_{x\sim\rho}\left[\innerprod{\widehat F^{(t)}_x, \pi_x}+\beta h(\pi_x)+\frac{1}{\eta} B_h(\pi_x,\pi_{\theta_{t},x})\right],
\end{equation}
where $\widehat F^{(t)}_{x}\in\R^{|\gY|}$ is defined as
\begin{equation}\label{eq:hat_F}
    \forall x\in\gX:\quad
    \widehat F^{(t)}_{x}\coloneqq \underbrace{- P_x\pi_{\theta_t,x}-\beta\log\pi_{\text{ref},x}}_{= F_{x}(\pi_{\theta_t,x}) \text{ by \eqref{eq:F_x}}}-\frac{1+\beta\eta}{\eta}\delta_{S,x}^{(t)}-\delta_{P,x}^{(t)},
\end{equation}
which could be seen as an approximation of $F_{x}(\pi_{\theta_t,x})$. We let $\delta^{(t)}\in\R^{|\gX||\gY|}$ denote 
\begin{equation}
    \forall x\in\gX:\quad \delta^{(t)}_x\coloneqq\delta^{(t)}(x,\cdot)\coloneqq\widehat{F}^{(t)}_x-F_x(\pi_{\theta_t,x})=-\frac{1+\beta\eta}{\eta}\delta_{S,x}^{(t)}-\delta_{P,x}^{(t)}.\label{eq:delta}
\end{equation}


The next step is to bound the distance between $\pi_{\theta_t}$ and $\pi^\star_\beta$ utilizing the reformulated update rule~\eqref{eq:update_reform}.
This part of our proof is inspired by~\citet[Theorem 3.4]{sokota2023unifiedapproachreinforcementlearning}.



\paragraph{Step 2: bound $\dist{\pi^\star_\beta}{\pi_{\theta_t}}$.} By the first-order optimality condition we know that \eqref{eq:update_reform} is equivalent to
\begin{equation}\label{eq:theta_t+1_grad}
    \innerprod{\widehat{F}^{(t)}_x+\beta\nabla h(\pi_{\theta_{t+1},x})+\frac{1}{\eta}(\nabla h(\pi_{\theta_{t+1},x})-\nabla h(\pi_{\theta_{t},x})),\pi_x-\pi_{\theta_{t+1},x}}\geq 0,\,\,\forall \pi\in\Pi,\,\,\forall x\in\gX,
\end{equation}

Reorganizing the terms in \eqref{eq:theta_t+1_grad}, we have
\begin{align}\label{eq:theta_t+1_grad_2}
    \innerprod{\widehat{F}^{(t)}_x+\beta\nabla h(\pi_{\theta_{t+1},x}),\pi_x-\pi_{\theta_{t+1},x}}&\geq \frac{1}{\eta}\innerprod{\nabla h(\pi_{\theta_{t},x})-\nabla h(\pi_{\theta_{t+1},x}),\pi_x-\pi_{\theta_{t+1},x}}\notag\\
&\overset{\eqref{eq:three_point_2}}{=}\frac{1}{\eta}\left(-B_h(\pi_x,\pi_{\theta_t,x})+B_h(\pi_x,\pi_{\theta_{t+1},x})+B_h(\pi_{\theta_{t+1},x},\pi_{\theta_t,x})\right).
\end{align}
Let $\pi=\pi^\star_\beta$ in \eqref{eq:theta_t+1_grad_2} and reorganize the terms, we have
\begin{align}\label{eq:p1}
    &\quad\,\, B_h(\pi^\star_{\beta,x},\pi_{\theta_{t+1},x})\notag\\
    &\leq B_h(\pi^\star_{\beta,x},\pi_{\theta_{t},x})-B_h(\pi_{\theta_{t+1},x},\pi_{\theta_{t},x})+
    \eta\innerprod{\widehat{F}^{(t)}_x+\beta\nabla h(\pi_{\theta_{t+1},x}),\pi^\star_{\beta,x}-\pi_{\theta_{t+1},x}}\notag\\
    &=B_h(\pi^\star_{\beta,x},\pi_{\theta_{t},x})-B_h(\pi_{\theta_{t+1},x},\pi_{\theta_{t},x})+
    \eta\innerprod{F_x(\pi_{\theta_{t},x})+\beta\nabla h(\pi_{\theta_{t+1},x}),\pi^\star_{\beta,x}-\pi_{\theta_{t+1},x}} \notag\\
    &\quad+ \eta\innerprod{\delta^{(t)}_x,\pi^\star_{\beta,x}-\pi_{\theta_{t+1},x}}\notag\\
    &= B_h(\pi^\star_{\beta,x},\pi_{\theta_{t},x})-B_h(\pi_{\theta_{t+1},x},\pi_{\theta_{t},x})+
    \eta\innerprod{F_x(\pi_{\theta_{t},x})-F_x(\pi_{\theta_{t+1},x}),\pi^\star_{\beta,x}-\pi_{\theta_{t+1},x}} \notag\\
    &\quad + \eta\innerprod{F_x(\pi_{\theta_{t+1},x})+\beta\nabla h(\pi_{\theta_{t+1},x}),\pi^\star_{\beta,x}-\pi_{\theta_{t+1},x}}+ \eta\innerprod{\delta^{(t)}_x,\pi^\star_{\beta,x}-\pi_{\theta_{t+1},x}},\,\,\forall x\in\gX,\,\,\forall \pi\in\Pi.
\end{align}

Note that for any $\pi\in\Pi$ and $x\in\gX$, we have
\begin{align}\label{eq:p2}
    \innerprod{F_x(\pi_x)+\beta\nabla h(\pi_x),\pi^\star_{\beta,x}-\pi_x}
    &=\underbrace{\innerprod{F_x(\pi_x)-F_x(\pi^\star_{\beta,x}),\pi^\star_{\beta,x}-\pi_x}}_{=0 \text{ by~\eqref{eq:monotonicity}}}+\beta\innerprod{\nabla h(\pi_x)-\nabla h(\pi^\star_{\beta,x}),\pi^\star_{\beta,x}-\pi_x}\notag\\
    &\quad+\underbrace{\innerprod{F_x(\pi^\star_{\beta,x})+\beta \nabla h(\pi^\star_{\beta,x}), \pi^\star_{\beta,x}-\pi_x}}_{\leq 0 \text{ by~\eqref{eq:MVI}}}\notag\\
    &\leq \beta\innerprod{\nabla h(\pi_x)-\nabla h(\pi^\star_{\beta,x}),\pi^\star_{\beta,x}-\pi_x}\notag\\
    &\overset{\eqref{eq:three_point_1}}{=}-\beta \left(B_h(\pi_x,\pi^\star_{\beta,x})+ B_h(\pi^\star_{\beta,x},\pi_x)\right).
\end{align}

Combining the above two expressions \eqref{eq:p1} and \eqref{eq:p2}, we have
\begin{align}\label{eq:p3}
    B_h(\pi^\star_{\beta,x},\pi_{\theta_{t+1},x})&\leq  B_h(\pi^\star_{\beta,x},\pi_{\theta_{t},x})-B_h(\pi_{\theta_{t+1},x},\pi_{\theta_{t},x})+
    \eta\innerprod{F_x(\pi_{\theta_{t},x})-F_x(\pi_{\theta_{t+1},x}),\pi^\star_{\beta,x}-\pi_{\theta_{t+1},x}} \notag\\
    &\quad -\beta\eta \left(B_h(\pi_{\theta_{t+1},x},\pi^\star_{\beta,x})+ B_h(\pi^\star_{\beta,x},\pi_{\theta_{t+1},x})\right)+ \eta\innerprod{\delta^{(t)}_x,\pi^\star_{\beta,x}-\pi_{\theta_{t+1},x}}\notag\\
    &\overset{\eqref{eq:Lipschitz}}{\leq}  B_h(\pi^\star_{\beta,x},\pi_{\theta_{t},x})-B_h(\pi_{\theta_{t+1},x},\pi_{\theta_{t},x})+
    \eta \norm{\pi_{\theta_{t},x}-\pi_{\theta_{t+1},x}}_1 \norm{\pi^\star_{\beta,x}-\pi_{\theta_{t+1},x}}_1 \notag\\
    &\quad -\beta\eta \left(B_h(\pi_{\theta_{t+1},x},\pi^\star_{\beta,x})+ B_h(\pi^\star_{\beta,x},\pi_{\theta_{t+1},x})\right)+ \eta\innerprod{\delta^{(t)}_x,\pi^\star_{\beta,x}-\pi_{\theta_{t+1},x}}\notag\\\
    &\leq B_h(\pi^\star_{\beta,x},\pi_{\theta_{t},x})\underbrace{-B_h(\pi_{\theta_{t+1},x},\pi_{\theta_{t},x})+
    \frac{1}{2}\norm{\pi_{\theta_{t},x}-\pi_{\theta_{t+1},x}}_1^2}_{\leq 0} + \underbrace{\frac{\eta^2}{2}\norm{\pi^\star_{\beta,x}-\pi_{\theta_{t+1},x}}_1^2}_{\leq \eta^2 B_h(\pi_{\theta_{t+1},x},\pi^\star_{\beta,x})} \notag\\
    &\quad -\beta\eta \left(B_h(\pi_{\theta_{t+1},x},\pi^\star_{\beta,x})+ B_h(\pi^\star_{\beta,x},\pi_{\theta_{t+1},x})\right)+ \eta\innerprod{\delta^{(t)}_x,\pi^\star_{\beta,x}-\pi_{\theta_{t+1},x}}\notag\\
    &\leq B_h(\pi^\star_{\beta,x},\pi_{\theta_{t},x})+\eta\underbrace{(\eta-\beta)}_{\leq 0}B_h(\pi_{\theta_{t+1},x},\pi^\star_{\beta,x})-\beta\eta B_h(\pi^\star_{\beta,x},\pi_{\theta_{t+1},x})+ \eta\innerprod{\delta^{(t)}_x,\pi^\star_{\beta,x}-\pi_{\theta_{t+1},x}} \notag\\
    &\leq B_h(\pi^\star_{\beta,x},\pi_{\theta_{t},x})-\beta\eta B_h(\pi^\star_{\beta,x},\pi_{\theta_{t+1},x})+\eta\innerprod{\delta^{(t)}_x,\pi^\star_{\beta,x}-\pi_{\theta_{t+1},x}},
\end{align}
where, in the third relation we use the 1-strong convexity of $h$ w.r.t. the $l_1$-norm (see the proof of Theorem~\ref{thm:rate}) to obtain that
\begin{align*}
    &\quad-B_h(\pi_{\theta_{t+1},x},\pi_{\theta_{t},x})+
    \frac{1}{2}\norm{\pi_{\theta_{t},x}-\pi_{\theta_{t+1},x}}_1^2 \\
    &= -\left(h(\pi_{\theta_{t+1},x})-h(\pi_{\theta_{t},x})-\innerprod{\nabla h(\pi_{\theta_{t},x}),\pi_{\theta_{t+1},x}-\pi_{\theta_{t},x}}-\frac{1}{2}\norm{\pi_{\theta_{t},x}-\pi_{\theta_{t+1},x}}_1^2\right)\leq 0.
\end{align*}

Note that 
\begin{align}\label{eq:bound_error_pre}
    \innerprod{\delta^{(t)}_x,\pi^\star_{\beta,x}-\pi_{\theta_{t+1},x}}&\overset{\eqref{eq:delta}}{=} \innerprod{-\frac{1+\beta\eta}{\eta}\delta_{S,x}^{(t)}-\delta_{P,x}^{(t)},\pi^\star_{\beta,x}-\pi_{\theta_{t+1},x}}\notag\\
    & \leq \frac{1+\beta\eta}{\eta} \underbrace{\left|\innerprod{\delta^{(t)}_{S,x},\pi^\star_{\beta,x}}\right|}_{(i)} + \frac{1+\beta\eta}{\eta}\underbrace{ \left|\innerprod{\delta^{(t)}_{S,x},\pi_{\theta_{t+1},x}}\right|}_{(ii)} + \underbrace{\norm{\delta^{(t)}_{P,x}}_\infty\norm{\pi^\star_{\beta,x}-\pi_{\theta_{t+1},x}}_1}_{(iii)}.
\end{align}
To bound the error term, below we separately bound (i)-(iii).

    

To bound (i), we first unroll \eqref{eq:update_mirror_parameter} similar as in~\eqref{eq:unroll} and obtain
\begin{align}
    \log \pi_{\theta_{t+1}}(y|x)&=\frac{1}{1+\beta\eta}\log \pi_{\theta_{t}}(y|x) + \frac{\beta\eta}{1+\beta\eta}\piref(y|x)\notag\\
    &\quad+\frac{\eta}{1+\beta\eta}\left(\E_{y'\sim\pi_{\theta_t}(\cdot|x)}P_x(y,y')+\frac{1+\beta\eta}{\eta}\delta_S^{(t)}(x,y)+\delta_{P}^{(t)}(x,y)\right)\notag\\
    &=\log\piref(y|x) +\frac{\eta}{1+\eta\beta}\sum_{i=0}^t \left(\frac{1}{1+\beta\eta}\right)^i \bigg(\E_{y'\sim\pi_{\theta_{t-i}}(\cdot|x)}P_x(y,y')\notag\\
    &\qquad\qquad\qquad\qquad+\frac{1+\beta\eta}{\eta}\delta_S^{(t-i)}(x,y)+\delta_{P}^{(t-i)}(x,y)\bigg)+z_x
\end{align}
for some $z_x$ related to $x$. The above expression gives
\begin{align*}
    \log\left(\frac{\pi_{\theta_{t+1}}(y'|x)}{\pi_{\theta_{t+1}}(y|x)}\right)&=\log\left(\frac{\pi_{\textnormal{ref}}(y'|x)}{\pi_{\textnormal{ref}}(y|x)}\right)+\frac{\eta}{1+\eta\beta}\sum_{i=0}^t \left(\frac{1}{1+\beta\eta}\right)^i\bigg(\E_{y''\sim\pi_{\theta_{t-i}}(\cdot|x)}(P_x(y',y'')-P_x(y,y''))\notag\\
    &\quad+\frac{1+\beta\eta}{\eta}\delta_S^{(t-i)}(x,y')+\delta_{P}^{(t-i)}(x,y')-\frac{1+\beta\eta}{\eta}\delta_S^{(t-i)}(x,y)-\delta_{P}^{(t-i)}(x,y)\bigg),
\end{align*}
for any $y,y'\in\gY$.  This relation yields
\begin{align*}
    \log\left(\frac{\pi_{\theta_{t+1}}(y'|x)}{\pi_{\theta_{t+1}}(y|x)}\right)\leq \log\left(\frac{\pi_{\textnormal{ref}}(y'|x)}{\pi_{\textnormal{ref}}(y|x)}\right)+\frac{\eta}{1+\eta\beta}\sum_{i=0}^t \left(\frac{1}{1+\beta\eta}\right)^i \cdot 2\bigg(\delta_{P}+\frac{1+\beta\eta}{\eta}L_0 +1\bigg),
\end{align*}
where we use \eqref{eq:Lipschitz} and \eqref{eq:statistical_error}.
The above expression indicates
\begin{align*}
    \frac{\pi_{\theta_{t+1}}(y'|x)}{\pi_{\theta_{t+1}}(y|x)}\leq \frac{\pi_{\textnormal{ref}}(y'|x)}{\pi_{\textnormal{ref}}(y|x)}\underbrace{\exp\left( \frac{2}{\beta}\left(\delta_{P}+\frac{1+\beta\eta}{\eta}L_0+1\right)\right)}_{C_2},
\end{align*}

Summing over $y'\in\gY$ on both sides, we get
\begin{align}\label{eq:asmp_ratio}
    \forall y\in\gY:\quad \frac{1}{\pi_{\theta_{t+1}}(y|x)}\leq \frac{1}{\pi_{\theta_{\textnormal{ref}}}(y|x)}C_2.
\end{align}


Therefore, we have
\begin{align}\label{eq:bd_i}
    \left|\innerprod{\delta^{(t)}_{S,x},\pi^\star_{\beta,x}}\right| &=\sum_{y\in\gY} \frac{\pi^\star_\beta(y|x)}{\sqrt{\pi_{\theta_t}(y|x)}}\sqrt{\pi_{\theta_t}(y|x)\left(\delta^{(t)}_S(x,y)\right)^2}\notag\\
    &\leq \sqrt{\left(\sum_{y\in\gY} \frac{\left(\pi^\star_\beta(y|x)\right)^2}{\pi_{\theta_t}(y|x)}\right)\left(\sum_{y\in\gY}\pi_{\theta_t}(y|x)\left(\delta^{(t)}_S(x,y)\right)^2\right)}\notag\\
    &= \sqrt{\E_{y\sim \pi^\star_\beta(\cdot|x)}\left[\frac{\pi^\star_\beta(y|x)}{\pi_{\theta_t}(y|x)}\right]\E_{y\sim \pi_{\theta_t}(\cdot|x)}\left[\left(\delta^{(t)}_S(x,y)\right)^2\right]}\notag\\
    &\leq \sqrt{C_2\E_{y\sim \pi^\star_\beta(\cdot|x)}\left[\frac{\pi^\star_\beta(y|x)}{\pi_{\textnormal{ref}}(y|x)}\right]\E_{y\sim \pi_{\theta_t}(\cdot|x)}\left[\left(\delta^{(t)}_S(x,y)\right)^2\right]}\notag\\
    &\leq \sqrt{C_2\E_{y\sim \piref(\cdot|x)}\left[\frac{\pi^\star_\beta(y|x)}{\pi_{\textnormal{ref}}(y|x)}\right]^2\E_{y\sim \pi_{\theta_t}(\cdot|x)}\left[\left(\delta^{(t)}_S(x,y)\right)^2\right]}\notag\\
    &\leq \sqrt{C_1\E_{y\sim \pi_{\theta_t}(\cdot|x)}\left[\left(\delta^{(t)}_S(x,y)\right)^2\right]},
\end{align}
where the second line follows from the Cauchy-Schwartz inequality, and the last line uses Assumption~\ref{asmp:concentr}. 

By the same argument, we could also bound (ii):
\begin{equation}\label{eq:bd_ii}
    \left|\innerprod{\delta^{(t)}_{S,x},\pi_{\theta_{t+1},x}}\right|\leq \sqrt{C_1\E_{y\sim \pi_{\theta_t}(\cdot|x)}\left[\left(\delta^{(t)}_S(x,y)\right)^2\right]}.
\end{equation}

For term (iii), note that
$\norm{\delta^{(t)}_{P,x}}_\infty\leq \delta_P,$
where $\delta_P$ is defined in~\eqref{eq:model_error_bound}, we have
\begin{equation}\label{eq:bd_iii}
    \norm{\delta^{(t)}_{P,x}}_\infty\norm{\pi^\star_{\beta,x}-\pi_{\theta_{t+1},x}}_1\leq 2 \delta_P.
\end{equation}

Thus combining \eqref{eq:bd_i},\eqref{eq:bd_ii},\eqref{eq:bd_iii} with \eqref{eq:bound_error_pre}, we have
\begin{equation}\label{eq:bound_error}
    \innerprod{\delta^{(t)}_x,\pi^\star_{\beta,x}-\pi_{\theta_{t+1},x}}\leq 2\cdot \frac{1+\beta\eta}{\eta}\sqrt{C_1\E_{y\sim \pi_{\theta_t}(\cdot|x)}\left[\left(\delta^{(t)}_S(x,y)\right)^2\right]} + 2 \delta_P,\,\,\forall x\in\gX.
\end{equation}

Taking expectation w.r.t. $x$ on both sides of \eqref{eq:p3} and making use of \eqref{eq:bound_error}, we have
\begin{align}\label{eq:descent+error}
    &\quad\dist{\pi^\star_\beta}{\pi_{\theta_{t+1}}}\notag\\
    &=\EE_{x\sim\rho}\left[B_h(\pi^\star_{\beta,x},\pi_{\theta_{t+1},x})\right]\notag\\
    &\leq \frac{1}{1+\beta\eta}\EE_{x\sim\rho} \left[B_h(\pi^\star_{\beta,x},\pi_{\theta_{t},x})\right]+2\left( \EE_{x\sim\rho} \sqrt{C_1\E_{y\sim \pi_{\theta_t}(\cdot|x)}\left[\left(\delta^{(t)}_S(x,y)\right)^2\right]}+\frac{\eta}{1+\beta\eta}\delta_P\right)\notag\\
    &\leq \frac{1}{1+\beta\eta}\dist{\pi^\star_\beta}{\pi_{\theta_{t}}} + 2\left( \sqrt{C_1\E_{x\sim\rho,y\sim \pi_{\theta_t}(\cdot|x)}\left[\left(\delta^{(t)}_S(x,y)\right)^2\right]}+\frac{\eta}{1+\beta\eta}\delta_P\right)\notag\\
    &= \frac{1}{1+\beta\eta}\dist{\pi^\star_\beta}{\pi_{\theta_{t}}} + 2\left( \sqrt{C_1\E_{x\sim\rho,y\sim \pi_{\theta_t}(\cdot|x)}\left[\left(\delta^{(t)}_S(x,y)\right)^2\right]}+\frac{\eta}{1+\beta\eta}\delta_P\right)
\end{align}
where the second inequality follows from Jensen's inequality 
and $\delta^{(t)}_S(x,y)$.

Note that
\begin{align}\label{eq:=excess_risk}
    &\quad\,\,\E_{x\sim\rho,y\sim \pi_{\theta_t}(\cdot|x)}\left[\left(\delta^{(t)}_S(x,y)\right)^2\right] \notag\\
    &\overset{\eqref{eq:statistical_error}}{=} \E_{x\sim\rho,y\sim \pi_{\theta_t}(\cdot|x)}\left[\left(\phi_{\theta_{t+1}}(y|x)-\phi_{\theta_{t+1}^\star}(y|x)\right)^2\right]\notag\\
    &\,\,= \E_{x\sim\rho,y\sim \pi_{\theta_t}(\cdot|x)}\left[\left(\phi_{\theta_{t+1}}(y|x)+\phi_{\theta_{t+1}^\star}(y|x)-2\E_{y'\sim \pi_{\theta_t}(\cdot|x)}[ \varphi_t(x,y,y')|x,y]\right)\left(\phi_{\theta_{t+1}}(y|x)-\phi_{\theta_{t+1}^\star}(y|x)\right)\right]\notag\\
    &\,\,=\E_{x\sim\rho,y\sim \pi_{\theta_t}(\cdot|x),\atop y'\sim \pi_{\theta_t}(\cdot|x)}\left[\left(\phi_{\theta_{t+1}}(y|x)-\varphi_t(x,y,y')\right)^2-\left(\phi_{\theta_{t+1}^\star}(y|x)-\varphi_t(x,y,y')\right)^2\right]\notag\\
    &\,\,=R_t(\theta_{t+1})-R_t(\theta_{t+1}^\star)\notag\\
    &\,\,=R_t(\theta_{t+1})-R_t^\star,
\end{align}
where $R_t$ is defined in \eqref{eq:theta_t+1*}, $R_t^\star\coloneqq \min_{\theta\in\Theta}R_t(\theta)=R_t(\theta_{t+1}^\star)$, and the third line uses Assumption~\ref{asmp:expressive}. 

Combining the above expression~\eqref{eq:=excess_risk} with \eqref{eq:descent+error}, we obtain
\begin{align}\label{eq:recursion}
  \dist{\pi^\star_\beta}{\pi_{\theta_{t+1}}}\leq \frac{1}{1+\beta\eta}\dist{\pi^\star_\beta}{\pi_{\theta_t}}
+2\bigg(\underbrace{\sqrt{C_1(R_t(\theta_{t+1})-R_t^\star)}+\frac{\eta}{1+\beta\eta}\delta_P}_{\coloneqq\xi_t}\bigg).
\end{align}

The above expression implies we need to bound $\xi_t$. If for all $t$, $\xi_t$ could be bounded by some finite $\xi$, then by \eqref{eq:recursion} we have
\begin{align}\label{eq:final_bound_pre}
    \dist{\pi^\star_\beta}{\pi_{\theta_t}}&\leq \left(\frac{1}{1+\beta\eta}\right)^t \dist{\pi^\star_\beta}{\pi_{\theta_0}}+2\sum_{s=0}^{t-1}\left(\frac{1}{1+\beta\eta}\right)^s \xi\notag\\
    &\leq \left(\frac{1}{1+\beta\eta}\right)^t\dist{\pi^\star_\beta}{\pi_{\theta_0}}
    +\frac{2(1+\beta\eta)}{\beta\eta}\xi.
\end{align}

In the following, we bound $\xi_t$ by bounding the \textit{excess risk} $R_t(\theta_{t+1})-R_t^\star$.

\paragraph{Step 3: bound the excess risk.}

To bound the excess risk, we first introduce the concept of \textit{uniform stability}~\citep{bousquet2002stability}. Suppose we have a training dataset $\gD=\{z_1,\cdots,z_M\}$ where each $z_i$ is sampled i.i.d. from some unknown distribution $P$ defined on some abstract set $\gZ$. Given $\gD$, a \textit{learning algorithm} produces the decision rule $w_M=w_M(\gD)=w_M(z_1,\cdots,z_M)\in\gW$, where $\gW$ is the set of all decision rules and is assumed to be a closed subset of a separable Hilbert space. We use $w_M$ to refer to both the algorithm and the decision rule. For the loss function $\ell:\gZ\times \gW\rightarrow [0,\infty)$, we define the risk and the empirical risk of $w\in\gW$ respectively as
\begin{equation}\label{eq:risk}
    R(w)=\E_{z\sim P} \ell(z,w)\quad \text{and} \quad R_M(w)=\frac{1}{M}\sum_{i=1}^M \ell(z_i,w).
\end{equation}

\begin{definition}\label{def:uniform_stability}
    An algorithm $w_M$ is uniformly $\gamma$-stable, if for any $z,z',z_1,\cdots,z_M\in\gZ$ and any $i\in[M]$, it holds that
    \begin{equation}\label{eq:stability}
        |\ell(z,w_M(z_1,\cdots,z_M))-\ell(z,w_M(z_1,\cdots, z_{i-1}, z', z_{i+1},\cdots, x_M))|\leq \gamma.
    \end{equation}
\end{definition}

We will also use the \textit{generalized Bernstein condition} defined as follows:
\begin{definition}[Assumption~1.1 in \citet{klochkov2021stability}]\label{def:bernstein_condition}
    Define $\gW^\star\coloneqq \argmin_{w\in\gW} R(w)$ where $\gW$ is a closed set. We say that $(\gW, P, \ell)$ satisfies the generalized Bernstein condition if there exists some constant $B>0$ such that for any $w\in\gW$, there exists $w^\star\in\gW^\star$ that satisfies
\begin{equation}\label{generalized_Bernstein}
    \E_{z\in P}\left[(\ell(w,z)-\ell(w^\star,z))^2\right]\leq B(R(w)-R(w^\star)).
\end{equation}
\end{definition}
 
With the above two lemmas, we now introduce the following important lemma that bounds the generalization error for uniformly stable algorithms:
\begin{lm}[Theorem~1.1 in  \citet{klochkov2021stability}]\label{lm:generalization_bd}
    Assume loss $\ell$ is bounded by $C$ on $\gZ\times \gW$, and $(\gW, P, \ell)$ satisfies the generalized Bernstein condition with the parameter $B$~(c.f.~Definition~\ref{def:bernstein_condition}). Let $w$ be a $\gamma$-stable algorithm~(c.f.~Definition~\ref{def:uniform_stability}) that returns $w_M\in\argmin_{w\in\gW}R_M(w)$ given the training dataset $\gD$. Then with probability at least $1-\delta$, it holds that
    \begin{equation}\label{eq:generalization_error}
        R(w_M)-\inf_{w\in\gW} R(w)\leq C_r\left(\gamma\log M +\frac{C+B}{M}\right)\log\left(\frac{1}{\delta}\right),
    \end{equation}
where $C_r>0$ is an absolute constant.
\end{lm}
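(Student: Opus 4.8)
The plan is to follow the route of \citet{klochkov2021stability}: reduce the excess risk to a single centered empirical process evaluated at the data-dependent ERM output, bound that process through a stability-based moment inequality, and then invoke the generalized Bernstein condition to upgrade the resulting slow $M^{-1/2}$ rate to the fast $M^{-1}$ rate via a self-bounding argument. Throughout, fix $w^\star\in\gW^\star=\argmin_{w\in\gW}R(w)$ and define the excess-loss map $f_w(z)\coloneqq \ell(w,z)-\ell(w^\star,z)$, so that the target quantity is the excess risk $r\coloneqq R(w_M)-R(w^\star)=\E_{z\sim P}[f_{w_M}(z)]$.

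First I would set up the ERM reduction. Writing $Pf\coloneqq \E_{z\sim P}[f(z)]$ and $P_M f\coloneqq \frac1M\sum_{i=1}^M f(z_i)$ for the population and empirical averages, the exact empirical optimality of $w_M$ gives $P_M f_{w_M}=R_M(w_M)-R_M(w^\star)\le 0$, hence
\begin{equation*}
  r \;=\; P f_{w_M} \;\le\; P f_{w_M}-P_M f_{w_M}.
\end{equation*}
This turns the problem into controlling the generalization gap of the excess loss at the algorithm's own output. The core step is then a moment inequality for $P f_{w_M}-P_M f_{w_M}$. Since $w_M$ depends on the entire sample, chaining or a union bound are unavailable; the right tool is the stability moment method of Feldman and Vondrak as sharpened in \citet{klochkov2021stability}. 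I would decouple the dependence by comparing $f_{w_M}(z_i)$ with $f_{w_M^{(i)}}(z_i)$, where $w_M^{(i)}$ is the output on the sample with $z_i$ replaced by an independent copy: uniform $\gamma$-stability (Definition~\ref{def:uniform_stability}) controls each such perturbation by $\gamma$, while the range bound $C$ controls the extreme fluctuations. Iterating this through a recursion on the $L^q$ norms yields an estimate of the shape
\begin{equation*}
  \norm{P f_{w_M}-P_M f_{w_M}}_q \;\le\; c\left( \sqrt{\tfrac{q}{M}}\,\sigma + q\,\gamma\log M + q\,\tfrac{C}{M} \right),
\end{equation*}
where $\sigma^2$ is a variance proxy for $\Var_{z\sim P}(f_{w_M}(z))$ and $c$ is an absolute constant.

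Finally I would close the loop with the Bernstein condition. By Definition~\ref{def:bernstein_condition}, $\E_{z\sim P}[f_{w_M}(z)^2]\le B\,(R(w_M)-R(w^\star))=B r$, so $\sigma^2\le B r$. Converting the $q$-th moment bound to a high-probability statement by Markov's inequality with $q\asymp\log(1/\delta)$ and substituting $\sigma^2\le Br$ gives, with probability at least $1-\delta$,
\begin{equation*}
  r \;\le\; c\left( \sqrt{\tfrac{B r \log(1/\delta)}{M}} + \gamma\log M\,\log(1/\delta) + \tfrac{C}{M}\log(1/\delta) \right).
\end{equation*}
Applying AM--GM to the first term, $\sqrt{Br\log(1/\delta)/M}\le \tfrac12 r + \tfrac12 B\log(1/\delta)/M$, lets me absorb $\tfrac12 r$ into the left-hand side and rearrange to the claimed bound, with $C+B$ appearing in the numerator of the $1/M$ term and $C_r$ collecting the absolute constants.

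The hard part is the second step: establishing the stability moment inequality for the data-dependent process $P f_{w_M}-P_M f_{w_M}$. Naive bounded-difference concentration (McDiarmid) is variance-blind and only delivers the $M^{-1/2}$ slow rate, which cannot produce the $(C+B)/M$ term; the entire improvement hinges on carrying the variance proxy $\sigma^2$ explicitly through the moment recursion so that the Bernstein condition can later trade it against $r$. Keeping the stability contribution $\gamma\log M$ and the range contribution $C/M$ correctly separated from the variance contribution throughout the recursion is the delicate bookkeeping that makes the fast rate emerge.
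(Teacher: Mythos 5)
The paper contains no proof of this lemma to compare against: it is imported verbatim as Theorem~1.1 of \citet{klochkov2021stability}, and the appendix simply applies it. What your sketch does is reconstruct the proof from that source, and it does so faithfully: the reduction $R(w_M)-R(w^\star)\le (P-P_M)f_{w_M}$ via exact empirical optimality of $w_M$ (valid since $w^\star\in\gW$), the stability-based moment inequality with an explicit variance proxy carried alongside the $\gamma\log M$ and $C/M$ terms, the moments-to-tails conversion with $q\asymp\log(1/\delta)$, and the self-bounding step $\sqrt{Br\log(1/\delta)/M}\le \frac{r}{2}+\frac{B\log(1/\delta)}{2M}$ that produces the $(C+B)/M$ term are exactly the steps of Klochkov--Zhivotovskiy. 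Your ordering differs only cosmetically from theirs: you convert to a tail bound first and absorb $\frac{r}{2}$ pointwise on the good event, whereas they solve the quadratic inequality at the level of moment norms (using $\norm{\sqrt{r}}_{2q}^2=\norm{r}_q$) and convert to a tail bound last; both are valid and yield the stated bound.

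Two caveats keep this an annotated roadmap rather than a self-contained derivation. First, as you yourself flag, all the technical content lives in the moment inequality $\norm{Pf_{w_M}-P_Mf_{w_M}}_q \lesssim \sqrt{q/M}\,\sigma + q\gamma\log M + qC/M$; stating its shape is not proving it. The $\log M$ factor arises from the hierarchical decoupling recursion of Feldman--Vondrak as sharpened via Marcinkiewicz--Zygmund/Bernstein-type moment bounds, and your sketch treats this machinery as a black box. Second, Definition~\ref{def:bernstein_condition} only guarantees, for each $w$, \emph{some} minimizer $w^\star=w^\star(w)$ satisfying the variance bound, whereas you fix a single $w^\star$ at the outset. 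The ERM reduction survives this (the inequality $P_Mf_{w_M}\le 0$ holds for any comparator in $\gW$, and all elements of $\gW^\star$ share the same risk), but in the moment recursion the decoupled quantities $f_{w_M^{(i)}}$ then involve data-dependent comparators, a selection/bookkeeping point that \citet{klochkov2021stability} handle explicitly and your sketch elides. Neither caveat is a wrong step; the outline is correct and matches the cited proof, but the crux is asserted rather than established.
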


To proceed, we analyze the generalization error at the $t$-th iterate of Algorithm~\ref{alg:practice} for a fixed arbitrary $t\in\NN$. We'll let $\widehat\theta$ denote $\theta_{t+1}$ and drop superscript/subscript $t$ when this causes no confusion. For example, we'll simply write the update rule~\eqref{eq:regression} as
$$
\widehat\theta\leftarrow \argmin_{\theta\in\Theta} \frac{1}{M}\sum_{i=1}^M\left(\varphi(x_i,y_i,{y'}_{i})-\phi_{\theta}(y_i|x_i)\right)^2.
$$
For notation simplicity, we also let $u_i=(x_i,y_i)$, $v_i=\varphi(x_i,y_i,{y'}_{i})$, $z_i=(u_i,v_i)\in\gZ\coloneqq \gX\times\gY\times \R$, and let $\phi_\theta(u_i)$ denote $\phi_\theta(y_i|x_i)$ for all $i\in [M]$. Let $\gZ\coloneqq\gX$.
Then in our case, the loss function $\ell: \gZ\times \Theta\rightarrow \R_+$ has the following form:
\begin{equation}
    \ell (z,\theta)\coloneqq \left(v-\phi_\theta(u)\right)^2,
\end{equation}
where $z=(u,v)\in\gZ$, and similar as~\eqref{eq:risk}, our risk and empirical risk at the $t$-th itrerate satisfy:
\begin{equation}\label{eq:risk_t}
    \forall \theta\in\Theta:\quad R(\theta)=\E_{z\sim P} \ell(z,\theta)\quad \text{and} \quad R_M(\theta)=\frac{1}{M}\sum_{i=1}^M \ell(z_i,\theta),
\end{equation}
where we let $P$ denote the distribution of $z_i$ ($i\in[M]$), and we have 
\begin{equation}\label{eq:sol_risk}
    \theta^\star=\argmin_{\theta\in\Theta}R(\theta) \quad \text{and} \quad\widehat\theta=\argmin_{\theta\in\Theta}R_M(\theta).
\end{equation}

Mote that \eqref{eq:L} implies that $L(z,\theta)$ is $L$-Lipschitz over $\theta$ for any $z\in\gZ$. Then by Assumption~\ref{asmp:PL} and Remark 3 in 
\citet{kang2022sharper}, we have that $(\Theta,P,\ell)$ satisfies the generalized Bernstein condition with
\begin{equation}\label{eq:B}
    B=\frac{2L^2}{\mu}.
\end{equation}

Furthermore, Corollary 4 in \citet{charles2018stability} gives that, when Assumption~\ref{asmp:Lipschitz},\ref{asmp:PL} hold, the empirical risk $R_M$ is $\gamma$-uniform stability (c.f.~Definition~\ref{def:uniform_stability}) with 
\begin{equation}\label{eq:gamma}
    \gamma=\frac{2L^2}{\mu(M-1)}.
\end{equation}

Substituting \eqref{eq:B} and \eqref{eq:gamma} into \eqref{eq:generalization_error}, we obtain that for any fixed $t$, with probability at least $1-\delta$, we have 
\begin{equation}\label{eq:generalization_bd_ours}
    R_t(\widehat\theta)-R_t^\star\leq C_r\left(\frac{2L^2 \log M}{\mu(M-1)} +\frac{C+2L^2/\mu}{M}\right)\log\left(\frac{1}{\delta}\right).
\end{equation}

By the independence of the samples in different rounds we know that with probability at least $1-\delta$, we have
\begin{align}\label{eq:generalization_bd_ours_uniform}
    \forall t\leq T-1:\quad R_t(\widehat\theta)-R_t^\star&\leq C_r\left(\frac{2L^2 \log M}{\mu(M-1)} +\frac{C+2L^2/\mu}{M}\right)\log\left(\frac{1}{1-(1-\delta)^{1/T}}\right)\notag\\
    &\leq C_r\left(\frac{2L^2 \log M}{\mu(M-1)} +\frac{C+2L^2/\mu}{M}\right)\log\left(\frac{T}{\delta}\right).
\end{align}

\paragraph{Step 4: put everything together.} Let 
\begin{equation}\label{eq:Delta}
    \xi \coloneqq \sqrt{C_1 C_r\left(\frac{2L^2 \log M}{\mu(M-1)} +\frac{C+2L^2/\mu}{M}\right)\log\left(\frac{T}{\delta}\right)}+\frac{\eta}{1+\beta\eta}\delta_P.
\end{equation}
Then \eqref{eq:Delta}, \eqref{eq:generalization_bd_ours_uniform} and \eqref{eq:final_bound_pre} together give the desired result.

\end{document}